\documentclass[preprint,12pt,sort&compress]{elsarticle}

\usepackage[utf8]{inputenc}
\usepackage{bbold}
\usepackage{latexsym,amsmath,amscd,amssymb}
\usepackage{enumerate}
\usepackage{wrapfig}
\usepackage{graphicx}
\usepackage{framed}
\usepackage{comment}
\usepackage{booktabs}
\usepackage{url}
\usepackage{xcolor}
\usepackage[all]{xy}
\usepackage{makecell}




\newtheorem{theorem}{Theorem}

\newtheorem{definition}[theorem]{Definition}

\newtheorem{corollary}[theorem]{Corollary}

\newtheorem{lemma}[theorem]{Lemma}

\newtheorem{proposition}[theorem]{Proposition}

\newtheorem{remark}[theorem]{Remark}

\numberwithin{theorem}{section}

\newenvironment{proof}[1][Proof]{\textbf{#1.} }{\ \rule{0.5em}{0.5em}}

\newcommand{\pp}[2]{\frac{\partial #1}{\partial #2}}

\journal{Neural Networks}

\begin{document}
	\begin{frontmatter}
		
		\title{Latent Lie-Poisson Neural Networks (LLPNNs): Discovering the motion of Lie-Poisson systems through observable data and latent dynamics}
		
		\author[ua]{Vakhtang Putkaradze\corref{cor1}}
		\ead{vputkaradze@ua.edu}
		\cortext[cor1]{Corresponding author}
		
		\affiliation[ua]{organization={Department of Mathematics, The University of Alabama},
			addressline={Gordon Palmer Hall},
			city={Tuscaloosa},
			state={AL},
			postcode={35401},
			country={USA}}
		
		\begin{abstract}
		Structure-preserving neural networks are essential for the long-term prediction of Hamiltonian systems from data. Many important Hamiltonian systems in mechanics and control admit symmetry reduction to Lie--Poisson systems, including rigid bodies, underwater vehicles, fluids, plasmas, and optimal control problems. A fundamental challenge in learning such systems is that their dynamics evolve in momentum variables that are typically unobservable, while available data consist only of observable quantities such as configurations and velocities. In optimal control applications, the situation is further complicated because the latent variables contain unobservable co-states (Lagrange multipliers) and the reduced Hamiltonian may be degenerate, preventing the existence of a corresponding Lagrangian and rendering encoder-decoder approaches inapplicable.
		
		We introduce \emph{Latent Lie--Poisson Neural Networks} (LLPNNs), a structure-preserving framework for learning Lie--Poisson dynamics directly from observable data. The proposed approach exploits three geometric ingredients: (i) learning either a Hamiltonian decoder or a pseudo-Lagrangian encoder on the active variables, (ii) constructing latent trajectories through a universal Noether invariant arising from Lie--Poisson symmetry reduction, and (iii) reconstructing observable and latent dynamics through Lie--Poisson flows combined with Magnus-based Lie-group updates. The resulting method preserves the underlying geometric structure and remains applicable to both regular and degenerate Hamiltonian systems.
		
		We demonstrate the method on three representative examples: a generalized rigid body on $\mathrm{SO}(3)$, Kirchhoff's underwater vehicle on $\mathrm{SE}(3)$, and an optimal-control problem for interacting vehicles on $\mathrm{SE}(2)^N$. Numerical experiments show excellent long-term predictive accuracy, strong robustness to noise, and competitive performance using only modest datasets and lightweight neural-network architectures.
		\end{abstract}
		
\begin{keyword}
	Structure-preserving learning \sep Symmetry reduction \sep Lie--Poisson dynamics \sep Geometric deep learning \sep Latent dynamics \sep Degenerate Hamiltonians
\end{keyword}
		
	\end{frontmatter}
	\section{Introduction and Problem Statement}

	\subsection{Physics-Informed Learning of Hamiltonian Systems}

	\paragraph{Previous work on learning general dynamical systems}
	
	The goal of data-driven methods of modeling is to 
	approximate the governing dynamics directly from observations. 
	Physics-Informed Neural Networks (PINNs) incorporate differential equations,
	initial conditions, and boundary conditions into the training loss and use
	automatic differentiation to evaluate the corresponding residuals
	\citep{raissi2019physics,karniadakis2021physics,cuomo2022scientific}.
	This approach has been successfully applied to a broad range of scientific
	computing problems.
	
	Neural Ordinary Differential Equations (Neural ODEs)
	\citep{chen2018neural} represent the vector field
	$\dot{\mathbf u}=\mathbf f(\mathbf u,t)$ by a neural network and
	recover trajectories through numerical integration. By training the vector
	field directly, Neural ODEs provide a flexible framework for learning
	continuous-time latent dynamics and have inspired numerous extensions in
	scientific machine learning.
	
	A complementary paradigm is operator learning. Deep Operator Networks
	(DeepONets) \citep{lu2021learning} approximate nonlinear operators mapping
	input functions to solution fields and can generalize across large families
	of initial conditions, forcing terms, or model parameters. Their ability to
	perform rapid inference without retraining has made them attractive for
	many computational physics applications.
	
	Although these methods often achieve excellent predictive performance, they
	do not generally preserve the geometric structures underlying Hamiltonian and
	Poisson systems. Consequently, long-term predictions may violate conserved
	quantities and drift away from physically admissible trajectories. This
	observation has motivated the development of geometric machine-learning
	architectures that explicitly incorporate Hamiltonian, symplectic, or
	Poisson structure. 
	
	A more fundamental limitation arises in many control and symmetry-reduced
	mechanical systems. In these problems the governing equations are written
	for latent momentum variables, whereas the available data typically consist
	only of observable configuration and velocity variables. Consequently, the
	observable variables need not form a closed dynamical system. In particular,
	for degenerate Hamiltonians arising in optimal control, there may not exist
	an autonomous evolution equation for the observable variables alone, as we demonstrate on explicit examples in this paper. This
	hidden-state dynamics that is governing the evolution of the observable data motivates the framework developed in this paper.
	
	\paragraph{Previous work in Structure-Preserving Data-Driven Modeling: Continuous Approach} 
	One dominant strategy in this field aims to directly infer either the Hamiltonian function or the underlying Poisson bracket from experimental data. For canonical systems, this paradigm was established by \emph{Hamiltonian Neural Networks (HNNs)} \citep{greydanus2019hamiltonian}, which approximate $H(\mathbf{q},\mathbf{p})$ by training a network to satisfy the relations in \eqref{canonical_system_gen}. By embedding this geometric structure directly into the loss function, HNNs achieve significantly greater robustness and trajectory accuracy than standard black-box neural networks. This formulation has since been expanded to accommodate chaotic regimes and adaptive parameters \citep{han2021adaptable}, with formal proofs of the existence of such learned Hamiltonians detailed in \citep{david2021symplectic}. 
	
	An alternative formulation is offered by \emph{Lagrangian Neural Networks (LNNs)} \citep{cranmer2020lagrangian}, which model the Euler-Lagrange equations directly in coordinate-velocity space $(\mathbf{q}, \dot{\mathbf{q}})$, thereby eliminating the need for a Legendre transform to momentum space. For non-canonical structures, direct learning of the vector fields was investigated in \citep{vsipka2023direct}, though rigorously enforcing the Jacobi identity remains a major open challenge. More recently, kernel ridge regression was proposed to learn Hamiltonians of Poisson systems from noisy observations \citep{hu2025global}. However, a key limitation of these continuous methods is that the true Hamiltonian is only identifiable up to the addition of an arbitrary function of the Casimirs, which complicates its unique isolation using standard machine learning optimization.
	
	Furthermore, continuous-time methods assume that once the vector field is learned, trajectories can be obtained via standard numerical integration. However, simulating Hamiltonian systems over extended time horizons with generic numerical solvers typically violates fundamental conservation laws. Preserving these geometric properties requires the use of variational integrators \citep{marsden2001discrete,leok2012general,hall2015spectral}. While these specialized integrators preserve momentum-like maps to machine precision, they are often computationally expensive compared to standard explicit solvers.
	
	\paragraph{Learning Hamiltonian systems through structure-preserving discrete maps}
	
	A major research direction in geometric machine learning focuses on designing data-driven architectures that respect the underlying mathematical structures of physical systems. The majority of existing works address \emph{canonical} Hamiltonian systems, where the state vector is defined on a $2n$-dimensional phase space as $\mathbf{u} = (\mathbf{q}, \mathbf{p})$, with $\mathbf{q}$ and $\mathbf{p}$ representing the generalized coordinates and momenta, respectively. Under this formulation, the state's time evolution is governed by a Hamiltonian function $H(\mathbf{q},\mathbf{p})$ through the vector field $\dot{\mathbf{u}} = \mathbf{f}(\mathbf{u})$, where the operator $\mathbf{f}$ is expressed as:
	\begin{equation}
		\mathbf{f} = \mathbb{J} \nabla_{\mathbf{u}} H \, , \quad  
		\mathbb{J} = 
		\left( 
		\begin{array}{cc}
			0 & \mathbb{I}_n  
			\\
			- \mathbb{I}_n & 0
		\end{array}
		\right) \,, \quad \Leftrightarrow \quad \dot{\mathbf{q}} = \frac{\partial H}{\partial \mathbf{p}} \, , \quad 
		\dot{\mathbf{p}} = - \frac{\partial H}{\partial \mathbf{q}} \,,
		\label{canonical_system_gen}
	\end{equation}
	and $\mathbb{I}_n$ denotes the $n \times n$ identity matrix. For any arbitrary smooth function $F(\mathbf{q},\mathbf{p})$, its evolution along a trajectory of \eqref{canonical_system_gen} is dictated by the canonical Poisson bracket:
	\begin{equation}
		\frac{dF}{dt} = \{ F, H \} = 
		\frac{\partial F}{\partial \mathbf{q}} \cdot \frac{\partial H}{\partial \mathbf{p}}
		- 
		\frac{\partial H}{\partial \mathbf{q}} \cdot \frac{\partial F}{\partial \mathbf{p}}.
		\label{canonical_bracket}
	\end{equation}
	In addition to the canonical brackets \eqref{canonical_bracket}, there are also non-canonical Poisson brackets, see  Section~\ref{subsec:Lie-Poisson_Systems} for precise definition. The dynamical systems governed by these non-canonical brackets are known as Poisson systems. A key feature of non-canonical Poisson systems is the presence of a non-trivial null space, which yields conserved quantities known as \emph{Casimir invariants} (or \emph{Casimirs}). These invariants are intrinsic to the structure of the Poisson bracket itself and remain conserved regardless of the specific Hamiltonian governing the system. This work focuses on developing data-driven computational methods for a highly important class of these non-canonical Poisson systems.
	
	To overcome the challenges of continuous integration, we adopt a different perspective: learning discrete, structure-preserving phase space transformations directly. For canonical dynamics, Poincar\'e's theorem states that the flow $\boldsymbol{\phi}_t(\mathbf{u})$ mapping initial states to their configurations at time $t$ is a symplectic transformation \citep{arnol2013mathematical,MaRa2013,putkaradze2025concise}. Mathematically, for $\mathbf{u} = (\mathbf{q}, \mathbf{p})$, the flow mapping $\boldsymbol{\phi}_t(\mathbf{u})$ must satisfy:
	\begin{equation}
		\left( \frac{\partial \boldsymbol{\phi}}{\partial \mathbf{u}} \right)^T 
		\mathbb{J}
		\left( \frac{\partial \boldsymbol{\phi}}{\partial \mathbf{u}} \right) = 
		\mathbb{J} \, . 
		\label{symplectic_map_def}
	\end{equation}
	Instead of learning the vector fields and subsequently integrating them, several works propose learning these symplectic maps directly from data.
	
	Early frameworks in this domain include \emph{Symplectic Recurrent Neural Networks (SRNNs)} \citep{chen2020symplectic}, which enforce symplecticity by embedding discretized symplectic numerical schemes into the network architecture. This approach showed that structure preservation significantly mitigates sensitivity to observational noise. Similarly, \emph{Non-Separable Symplectic Neural Networks (NSSNNs)} \citep{xiong2020nonseparable} were developed to handle non-separable Hamiltonians by utilizing the explicit integration algorithms from \citep{tao2016explicit}. HNNs based on symplectic integrators were recently developed to compute the effective Hamiltonians from noisy data \cite{choudhary2026symplectic}. From a Lagrangian perspective, discrete mapping frameworks in the velocity-phase space $(\mathbf{q}, \dot{\mathbf{q}})$ were explored in \citep{sharma2024lagrangian} and recently extended to include non-conservative dissipative and thermodynamic systems \citep{eldred2025variational}.
	
	Further progress in learning symplectic maps has been driven by three main architectures \citep{jin2020sympnets,chen2021data}. First, \emph{SympNets} \citep{jin2020sympnets} construct symplectic maps by composing specialized, structure-preserving neural network layers. Second, \emph{Generating Function Neural Networks (GFNNs)} \citep{chen2021data} parameterize the discrete flow using generating functions, which provides rigorous bounds on long-term tracking errors. Third, \emph{H\'enonNets} \citep{burby2020fast} build symplectic maps by composing H\'enon-like transformations to learn Poincar\'e maps. These architectures have demonstrated exceptional performance in simulating both integrable systems (such as pendulums and orbital mechanics) and chaotic dynamics (such as the three-body problem).

	\paragraph{Previous work on data-based learning Poisson systems} 
	If the symmetry-reduced momenta $\mu$ are known (which is not the case for the problem we consider here), several approaches have been suggested to construct a structure-preserving learning procedure.  
	
	Recent work \cite{eldred2024lie,eldred2025clpnets} exploits the property that the phase flow of any Hamiltonian function constitutes a Poisson map, meaning that it preserves the Poisson bracket \cite{MaRa2013}. These architectures consider flows generated by ``test'' Hamiltonians of special forms, for which the equations of motion simplify and can be solved analytically, providing the mathematical foundation for Lie--Poisson neural network layers.
	
	By composing multiple Poisson maps of this type, the neural network in \cite{eldred2024lie} optimizes the parameters of these test Hamiltonians to match observed phase-space trajectories. This sequence enables the network to reconstruct the flow across the entire phase space with high accuracy, using small datasets and lightweight parameterizations. Crucially, this construction ensures that Casimir invariants are conserved to machine precision, independent of the training data or optimization quality. This work was recently extended to optimal-control problems for interacting autonomous vehicles \cite{huraka2026structure}, where Lie--Poisson transformations were generalized to incorporate individual neural networks depending on the symmetry-reduced momenta. This structure enabled a completeness result for networks built from Lie--Poisson transformations.
	
These previous works highlight the value of exploiting the exact Lie--Poisson structure rather than treating the dynamics as an unconstrained vector-field learning problem. In the present work, we adopt the symmetry group and its Lie--Poisson bracket as known geometric data, while learning the physical or control Hamiltonian from observations. This separation is natural in many physical and control systems, where the symmetry structure is fixed by the configuration space, whereas the Hamiltonian may be unknown or only approximately modeled. For a related discussion in the context of physical systems, see \cite{Tonti2013}.
	
	For non-canonical Poisson systems, the SympNet framework was extended under the name \emph{Poisson Neural Networks (PNNs)} \citep{jin2022learning}. By leveraging the Lie-Darboux theorem, PNNs locally map non-canonical coordinates to canonical ones under the assumption of a constant-rank Poisson structure. However, this strategy faces two major limitations: the Lie-Darboux transformation is typically local rather than global, and PNNs do not guarantee exact preservation of Casimir invariants. Because preserving Casimirs is vital for the statistical validity of long-term ensemble simulations \citep{Dubinkina2007}, there is a strong demand for methods that enforce Casimir conservation to machine precision.
	
An alternative formulation presented in \citep{vaquero2024designing,vaquero2024symmetry} constructs maps that exactly conserve momentum maps and Casimirs by solving approximate Hamilton--Jacobi equations for symmetry-reduced systems. However, solving Hamilton--Jacobi equations for high-dimensional, complex systems is often computationally and analytically intractable. More recently, \emph{LocSympNets} and \emph{SymLocSympNets} \citep{bajars2023locally} were introduced, modeling flows by composing volume-preserving mappings. While these models have proven highly effective for simulating rigid-body dynamics, linear advection, and charged particles in magnetic fields, their theoretical completeness remains an open question. 
	
	In a related study, \cite{mason2023learning} considered the use of purely visual streams to model 3D rotational dynamics by projecting images onto an $SO(3)$ orientation manifold while simultaneously learning the moment-of-inertia tensor. While their technical framework is different from ours, their work highlights the feasibility of learning underlying latent dynamics purely from observable pixel data without relying on full state measurements including the latent variables. 
	
	Another recent work \cite{hu2025global} addressed learning of a general Poisson system using a structure-preserving kernel ridge regression. The method allowed to find Hamilto-
	nian functions on general Poisson manifolds out of datasets made of noisy observations of
	Hamiltonian vector fields. 
	
	It is important to note that all these works assumed that the information about the flow on the momentum (latent) manifold is known. In contrast, we do not assume that knowledge as in practical realizations, especially in control systems, the latent variables are hidden, and only the information about the observable variables is available. This observation motivates our development of the \emph{Latent Lie--Poisson Neural Network
		(LLPNN)}, a framework that learns structure-preserving Lie--Poisson
	dynamics directly from observable variables while reconstructing the
	hidden momentum dynamics.

	\paragraph{Optimal Control via Neural Network Architectures and Symmetry Reduction to Lie-Poisson Systems}
Let us also briefly describe the efforts of using neural networks for optimal control problems. In addition to the paper \cite{huraka2026structure} we have mentioned earlier, several works considered the optimal control problem using neural networks. These models are mainly focused on finding the solutions to the optimal control problems, using neural networks as an alternative way to finding solutions to the equations of optimal control. 
	Neural networks have also been applied extensively to optimal control, although the focus of these works (planning) is different from the goal of this paper (learning from observations).  
	In particular, \cite{effati2013optimal} incorporated the state and co-state equations from Pontryagin's Minimum (or Maximum) Principle (PMP) \cite{gamkrelidze2013principles} as soft constraints within a loss function, utilizing a formulation that conceptually anticipated Physics-Informed Neural Networks (PINNs). To ensure control tractability, \cite{chen2018optimal} deployed input-convex neural networks. Neural network controllers have also been applied to practical engineering challenges, including robotic manipulation \cite{kim2000intelligent} and autonomous aeronautical landing maneuvers \cite{sanchez2018real}. For a comprehensive overview of machine learning paradigms in this domain, we refer the reader to the review by \cite{wang2026optimal}. Despite these notable advancements, existing methodologies generally overlook structure-preserving network designs, thereby failing to exploit the intrinsic geometric structure of Pontryagin's equations. Enforcing this underlying mathematical structure within the network architecture is essential for achieving accurate, long-term dynamical predictions \citep{jin2020sympnets,eldred2024lie}, which motivates the framework introduced in this work. However, the most important feature of the optimal control is that the observable variables are different from the dynamics latent variables carrying the dynamics, which makes direct machine learning challenging. Finding the latent dynamics from observable data is the main point of this paper.

	\subsection{Novelty of this paper} 
	Our paper is different from previous work on the subject as it presents the following novel results: 
	\begin{enumerate}
		\item We design a structure-preserving method (LLPNN) for learning the system based on the observable variables only, 
		\item Our method guarantees conservation of Casimirs with machine precision in the latent space, even though it only uses the data from the observable space. 
		\item LLPNNs are applicable to both regular and degenerate
		Lie--Poisson Hamiltonian systems, including optimal-control
		problems for which a corresponding Lagrangian formulation in terms of observables 
		may not exist. 
	\end{enumerate}
	
	\section{Mathematical Foundations of the Theory} 
	\label{sec:math_foundation} 

	\subsection{A brief introduction to Lie--Poisson systems}
	\label{subsec:Lie-Poisson_Systems}
	
	To develop our framework, we draw upon general properties of Lie--Poisson systems. Here, we present a brief summary; for a complete description, see~\ref{app:math_Lie_Poisson}, as well as \cite{Ho2011_pII,MaRa2013} for detailed expositions.
	
	Lie--Poisson systems are a special class of general Poisson systems. They arise naturally through symmetry reduction with respect to the action of a Lie group, where the resulting Poisson bracket inherits its structure from the underlying Lie algebra.
	
	A general Poisson manifold consists of a smooth manifold $\mathcal{M}$ equipped with a bracket $\{ \cdot , \cdot \}$ that maps a pair of smooth functions $F, G \in C^\infty(\mathcal{M})$ to another smooth function $\{F, G\} \in C^\infty(\mathcal{M})$. To constitute a valid Poisson bracket, it must satisfy the following properties for all $F, G, H \in C^\infty(\mathcal{M})$:
	\begin{enumerate}
		\item \emph{Antisymmetry:} $\{ F, G\} = -\{ G, F\}$, 
		\item \emph{Bilinearity:} $\{ a F + b G, H \} = a\{ F, H\} + b \{G, H\}$ for all real constants $a, b \in \mathbb{R}$, 
		\item \emph{Leibniz rule}: $\{ FG, H\} = G \{ F,H\} + F \{ G,H \}$,
		\item \emph{Jacobi identity}: $\{ \{ F, G\}, H\} + \{ \{ H, F\}, G\} + \{ \{ G, H\}, F\} = 0$. 
	\end{enumerate}
	
	For finite-dimensional systems, Poisson brackets can be expressed in terms of a Poisson tensor matrix $\mathbb{B}(\mu)$ \cite{MaRa2013}:
	\begin{equation}
		\{ F,G \} = \left( \pp{F}{\mu} \right)^T \mathbb{B}(\mu) \pp{G}{\mu} \, , 
	\end{equation}
	where $\mathbb{B}(\mu)$ is an antisymmetric matrix satisfying a system of non-linear first-order partial differential equations required by the Jacobi identity. Because these PDE constraints are generally non-trivial to solve analytically in closed form, one typically verifies whether a given matrix defines a valid Poisson structure rather than deriving a general parameterization.
	
	Lie--Poisson systems emerge when a dynamical system exhibits symmetry under a Lie group action. Suppose the configuration manifold of the system is a Lie group $G$, and the  Lagrangian or Hamiltonian is invariant with respect to the action of $G$. For instance, rigid body motion is modeled on the special orthogonal group $G=\mathrm{SO}(3)$, where rotational invariance holds with respect to the same group. Just as unreduced canonical systems admit both Lagrangian and Hamiltonian formulations, symmetry-reduced systems preserve this duality. The standard (unreduced) Lagrangian description depends on configuration coordinates and velocities $(g, \dot{g}) \in TG$, whereas the canonical Hamiltonian description uses coordinates and momenta $(g, p) \in T^*G$.
	
	The symmetry-reduced Lagrangian formulation is expressed in terms of Lie algebra-valued velocities $\xi \in \mathfrak{g}$, obtained by trivializing the tangent bundle. 
	For matrix Lie groups considered here, these velocities are defined through one of the two standard trivializations,
	\begin{equation}
		\dot{g}=g\xi
		\qquad\text{or}\qquad
		\dot{g}=\xi g .
		\label{eq_g_xi}
	\end{equation}
	The first convention corresponds to the body-frame velocity (left trivialization), $\xi=g^{-1}\dot g$, while the second corresponds to the spatial velocity (right trivialization), $\xi=\dot g g^{-1}$. The sign in the reduced equations depends on whether the Lagrangian is left- or right-invariant.
For a reduced Lagrangian $\ell(\xi)$, the symmetry-reduced equations of motion are given by the \emph{Euler--Poincar\'e equations} \cite{Ho2011_pII,MaRa2013}:
\begin{equation}
	\frac{d}{dt} \pp{\ell}{\xi} \mp \operatorname{ad}^*_\xi \pp{\ell}{\xi} = 0 ,
	\label{EP_eqs}
\end{equation}
where $\operatorname{ad}^*$ denotes the coadjoint operator for the Lie algebra $\mathfrak{g}$, and the minus (\emph{resp.}, plus) sign corresponds to left (\emph{resp.}, right) invariance.

	In Lie--Poisson dynamics, we define the symmetry-reduced momentum $\mu = g^{-1} p$ (for left-invariant systems) or $\mu = p g^{-1}$ (for right-invariant systems), where $\mu \in \mathfrak{g}^*$ belongs to the dual Lie algebra. For example, satellite attitude dynamics are formulated on $\mathrm{SO}(3)$, where $\mu \in \mathfrak{so}(3)^*$ is represented as a three-dimensional vector. The intrinsic equations of motion and Poisson bracket for $\mu$ are given by:
	\begin{equation}
		\frac{d}{dt} \mu \mp \operatorname{ad}^*_{\pp{h}{\mu}} \mu = 0 \, , 
		\label{LP_eqs} 
	\end{equation}
	where we focus primarily on left-invariant systems (right-invariant systems are treated analogously). Equation \eqref{LP_eqs} can be expressed using the Lie--Poisson bracket:
	\begin{equation} 
		\dot{\mu} = \{ \mu, h \} \quad \Leftrightarrow \quad \dot{\mu} = \Lambda(\mu) \nabla h(\mu) \, , \quad 
		\{ F, G \} = \mp \left\langle \mu, \left[ \frac{\partial F}{\partial \mu}, \frac{\partial G}{\partial \mu} \right] \right\rangle ,
		\label{LP_bracket_explicit} 
	\end{equation} 
where $[\cdot,\cdot]$ denotes the Lie algebra commutator. Again, the $-$ sign in the Poisson bracket \eqref{LP_bracket_explicit} corresponds to left-invariant systems, while the $+$ sign corresponds to right-invariant systems. 
	Here, the Poisson tensor $\Lambda(\mu)$ depends linearly on $\mu$, with its structural form determined entirely by the structure constants of the Lie algebra. These brackets often admit Casimir invariants, \emph{i.e.}, functions $C(\mu)$ satisfying $\{ C, h \} = 0$ for any Hamiltonian $h$, which play a central role in our structure-preserving approach.
	
	Optimal control problems with symmetry naturally inherit a Lie--Poisson structure \cite{krishnaprasad1993optimal,Bl2003}, including multi-agent formation control systems \cite{justh2010extremal,justh2015optimality}. Notably, symmetry reduction of controlled Hamiltonian systems yields a complete Lie--Poisson reduction onto the direct product of individual Lie algebras serving as reduced phase spaces. While symmetry reduction in multi-body systems is frequently incomplete, requiring relative Lie group configurations alongside algebra-level dynamics \cite{ellis2010symmetry}, this complexity is avoided in Lie--Poisson optimal control due to the specific algebraic structure of control Hamiltonians \cite{justh2010extremal,justh2015optimality}.
	
	For classical mechanical systems, Lagrangian and Hamiltonian formulations are \emph{mathematically} equivalent via the Legendre transform. However, they differ fundamentally regarding data accessibility, as illustrated in Figure~\ref{fig:observables}. Positions, orientations, and group configurations $g(t)$ are directly observable from measurements. If $g(t)$ is observed, its rate of change $\dot{g}(t)$ and the reduced velocity $\xi(t)$ defined by \eqref{eq_g_xi} are likewise observable.
	
	In contrast, the latent momentum $\mu$ cannot be measured directly without prior knowledge of the system's underlying functional form $\ell(\xi)$ or $h(\mu)$. Thus, empirical simulations of observable states naturally operate in velocity space $\xi$, whereas structure-preserving dynamics are intrinsically formulated in Hamiltonian space $\mu$, where the Poisson/symplectic geometric structure resides \cite{Ho2011_pII,MaRa2013,arnol2013mathematical}.
	
	\begin{figure}[htbp]
		\centering 
		\includegraphics[width=1\textwidth]{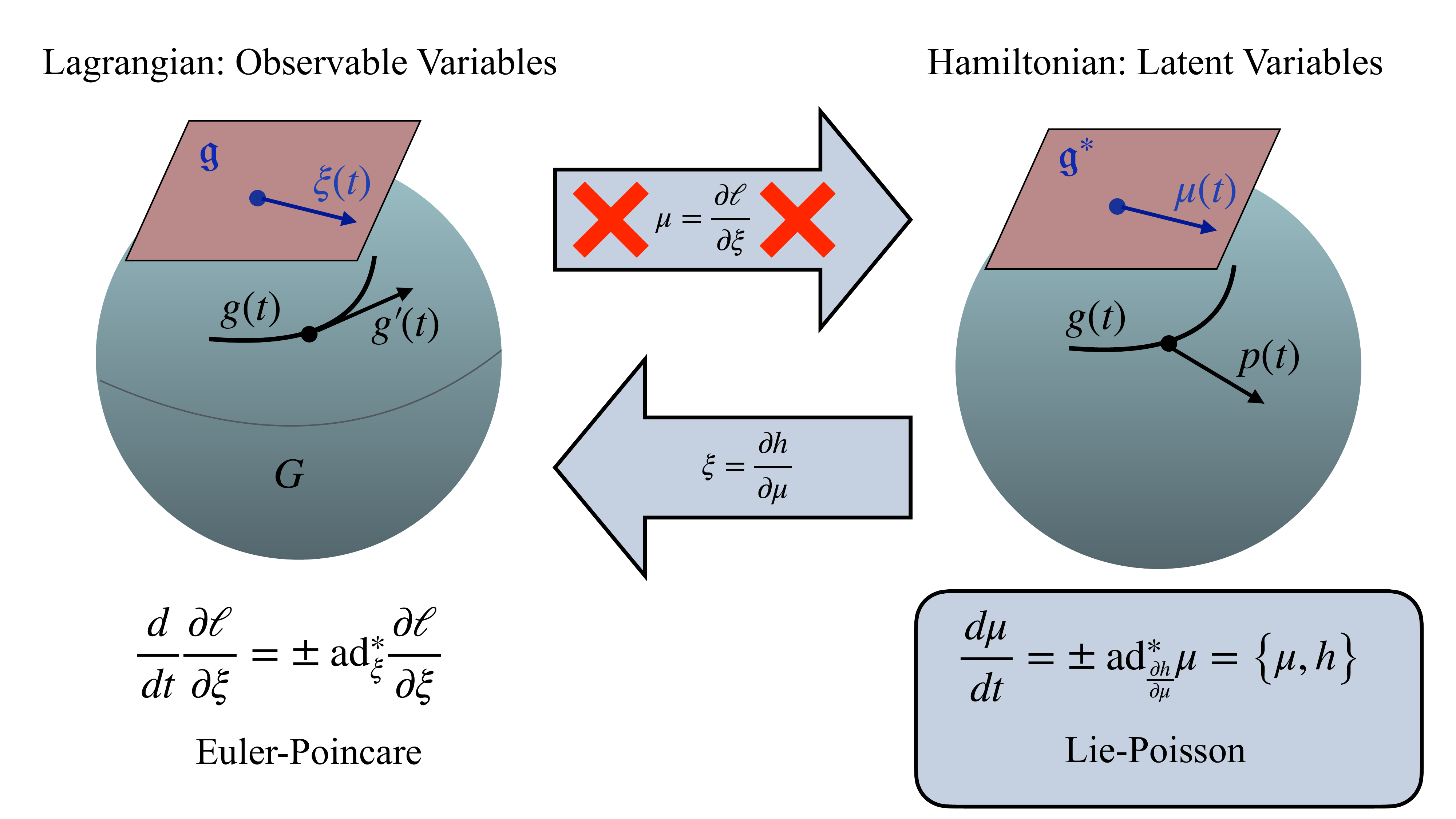}
		\caption{Illustration of the structural relationship between observable configuration/velocity variables and unobservable latent momentum variables.}
		\label{fig:observables} 
	\end{figure}
	
	In principle, when the system is non-degenerate, one can use the reduced Lagrangian and Hamiltonian to define encoder and decoder mappings between observable and latent spaces:
	\begin{equation}
		\mu = \mu(\xi) = \pp{\ell}{\xi} \quad \text{(encoder)}, \qquad 
		\xi = \xi(\mu) = \pp{h}{\mu} \quad \text{(decoder)} \, .  
		\label{encoder_decoder_structure} 
	\end{equation}
	This allows standard latent space architectures to learn bi-directional mappings between observables and latent variables before applying structure-preserving integrators. While this encoder-decoder paradigm \eqref{encoder_decoder_structure} works well for regular mechanical systems, it frequently fails for optimal control systems.
	
	Although symmetry-reduced optimal control problems reduce to Lie--Poisson equations of the form \eqref{LP_bracket_explicit}, they differ in a crucial aspect: optimal control problems frequently yield reduced Hamiltonians $h(\mu)$ that are \emph{degenerate}—specifically, linear in components of $\mu$ associated with drift controls. For example, in the ground-vehicle model studied in Section~\ref{subsec:SE2_drones}, vehicles drive forward at constant speed, cannot slide laterally, and control their turning rate. Consequently, $h(\mu)$ is linear in the forward-drive momentum, independent of the lateral momentum, and non-linear only in the angular momentum component. For the corresponding velocity components, $\xi_i=\mathrm{const}$. The Hessian $\nabla^2 h(\mu)$ is therefore singular, rendering the Legendre transform non-invertible and preventing the existence of a corresponding reduced Lagrangian $\ell(\xi)$. Under such degeneracy, only the decoder $\xi = \xi(\mu)$ exists; the encoder $\mu = \mu(\xi)$ does not. That is, no function $\ell(\xi)$ exists such that the Euler--Poincar\'e equations \eqref{EP_eqs} close autonomously in $\xi$-space. In contrast, the Lie--Poisson equations \eqref{LP_eqs} remain well-defined regardless of Hamiltonian degeneracy.
	
	One might consider regularizing a degenerate Hamiltonian by adding arbitrary functions of Casimir invariants, which do not alter momentum-space dynamics. For instance, in rigid body mechanics where $\boldsymbol{\mu} \in \mathbb{R}^3$, Casimirs are functions of $\|\boldsymbol{\mu}\|$. Adding a quadratic Casimir term $\kappa \|\boldsymbol{\mu}\|^2$ ($\kappa > 0$) can render the Hessian non-singular, locally restoring an invertible Lagrangian $\ell(\xi)$. However, this regularization alters the reconstructed velocity field $\xi$ for a given momentum $\mu$. Because data-driven models must fit observed velocity trajectories, modifying the Hamiltonian with Casimir terms to force invertibility is non-viable.
	
	This lack of an invertible mapping between observables and latent states represents a key challenge for structure-preserving machine learning. Moreover, Section~\ref{subsec:difference_Ham_Lagr} demonstrates that for some degenerate Hamiltonians that are of practical importance for control theory, the evolution in observable space cannot even be expressed as an autonomous differential equation.
	
	To overcome this, our framework demonstrates that latent Lie--Poisson dynamics can be reconstructed and predicted solely from observable configuration and velocity trajectories. By integrating geometric mechanics with structure-preserving deep learning, we propose \emph{Latent Lie-Poisson Neural Networks (LLPNNs)}.
	
	A foundational property of Lie--Poisson systems leveraged in our architecture is the existence of a conserved spatial momentum. By Noether's theorem, left-invariance guarantees that the spatial momentum $p_0 \in \mathfrak{g}^*$ is conserved for all time. This yields the coadjoint reconstruction relationship:
	\begin{equation}
		\frac{d}{dt} \left( \operatorname{Ad}^*_{g(t)^{-1}} \mu(t) \right) = 0 \implies \mu(t) = \operatorname{Ad}^*_{g(t)} p_0 \,,
		\label{eq:coadjoint_reconstruction}
	\end{equation}
	where $\operatorname{Ad}^*_g$ denotes the coadjoint action of $G$ on $\mathfrak{g}^*$ (see Definition~\eqref{Ad_star_def}). Equation \eqref{eq:coadjoint_reconstruction} provides an exact, structure-preserving bridge: it maps the unobservable, time-varying body momentum $\mu(t)$ to a constant spatial momentum vector $p_0$ using only the observed configuration trajectory $g(t)$.

	\begin{remark}[Momentum Gauge Ambiguity and Scaling]
		{\rm 
			When relying solely on observable data, the latent momentum $\mu$ is identifiable only up to an arbitrary multiplicative scaling constant $c > 0$. Rescaling the Hamiltonian as $h_{\text{scaled}}(\mu) = c h(\mu / c)$ preserves the exact velocity dynamics:
			\begin{equation}
				\xi = \frac{\partial}{\partial \mu} \left( c h\left(\frac{\mu}{c}\right) \right) = \nabla h\left(\frac{\mu}{c}\right) .
			\end{equation}
			Because $c h(\mu/c)$ generates identical velocity fields $\xi$, the absolute scale of $\mu$ cannot be uniquely determined from velocity observations alone. To evaluate learned latent states against physical ground-truth trajectories, predicted momenta are rescaled post-hoc using the Hessian trace ratio at the origin or a scale factor estimated from an initial test window.
		}
	\end{remark}

	\subsection{The fundamental difference between evolution in latent and observable variables} 
	\label{subsec:difference_Ham_Lagr} 
	For classical, physically based mechanical systems, one assumes that the Hessian of the Hamiltonian function $h(\mu)$ is non-degenerate. Under this condition, the (inverse) Legendre transform 
	\begin{equation} 
		\ell(\xi) = \langle \xi, \mu \rangle - h(\mu), \quad 
		\xi = \pp{h}{\mu} \leftrightarrow \mu = \mu(\xi)
		\label{inverse_Legendre_transform} 
	\end{equation} 
	maps the Hamiltonian representation $h(\mu)$ to the Lagrangian representation $\ell(\xi)$. The Euler--Poincar\'e equations \eqref{EP_eqs} then constitute a well-defined dynamical system in the observable space. 
	
	\paragraph{Degenerate Hamiltonians and the absence of closed observable dynamics}
	The situation is substantially more difficult when $h(\mu)$ is degenerate. The failure of a Lagrangian formulation for degenerate Hamiltonians is only part of the challenge; in some cases, the observable variables do not even admit a closed autonomous dynamical system.
	
	As an example, consider the Lie--Poisson equations on the dual Lie algebra $\mathfrak{se}(2)^*$ with Hamiltonian \cite{justh2010extremal,justh2015optimality}
	\begin{equation}
		h(\mu) = \mu_1 + \frac12 \mu_3^2 .
		\label{single_particle_SE2_Ham} 
	\end{equation}
	This control Hamiltonian describes the motion of an autonomous vehicle on a plane: motion occurs at constant speed along the main body axis (corresponding to $\mu_1$), no motion is allowed along the transverse axis (corresponding to the absence of $\mu_2$), and control is exerted via the rotation angle (expressed by $\mu_3$). We will study this multi-vehicle system in substantially more detail in Section~\ref{subsec:SE2_drones}. 
	
	The Lie--Poisson equations \eqref{LP_bracket_explicit} on $\mathfrak{se}(2)^*$ for the Hamiltonian \eqref{single_particle_SE2_Ham} take the form
	\begin{equation}
		\dot{\mu}_1 = -\mu_2 \mu_3,
		\qquad
		\dot{\mu}_2 = \mu_1 \mu_3,
		\qquad
		\dot{\mu}_3 = \mu_2 .
	\end{equation}
	These equations are well defined; in fact, \eqref{LP_bracket_explicit} defines $\dot\mu = \mathbf{f}(\mu)$ for any time-independent Hamiltonian $h(\mu)$. Moreover, the solutions for $\mu$ are readily mapped into the observable domain via
	\begin{equation}
		\xi = \pp{h}{\mu}
		=
		\begin{pmatrix}
			1 \\
			0 \\
			\mu_3
		\end{pmatrix},
	\end{equation}
	reflecting constant velocity along the forward axis, zero motion along the transverse axis, and control over the rotation angle. This mapping exists for every Hamiltonian, whether degenerate or not. However, we can readily prove that for the degenerate Hamiltonian \eqref{single_particle_SE2_Ham}, no autonomous differential equation exists for $\xi$, as illustrated by the following lemma. 
	
\begin{lemma}
	\label{lem:non_existence_xi}
	For the Hamiltonian \eqref{single_particle_SE2_Ham}, there is no well-defined autonomous first-order system
	\[
	\dot{\xi}=\mathbf{g}(\xi)
	\]
	describing the evolution in the purely observable velocity space.
\end{lemma}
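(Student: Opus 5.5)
The plan is to show that the observable state $\xi$ does not determine its own time derivative: two trajectories of the Lie--Poisson system pass through the same value of $\xi$ at some instant but have different $\dot\xi$ there. If an autonomous system $\dot\xi=\mathbf{g}(\xi)$ existed, $\dot\xi$ would be a single-valued function of $\xi$, so this gives a contradiction.

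First, I will compute $\dot\xi$ along solutions. Since $\xi=\partial h/\partial\mu=(1,0,\mu_3)^T$, the first two components are constant, $\dot\xi_1=\dot\xi_2=0$. For the third component, the Lie--Poisson equations give
\[
\dot\xi_3=\dot\mu_3=\mu_2 .
\]
The observable state is therefore $\xi=(1,0,\xi_3)$ with $\xi_3=\mu_3$, while its rate of change is $\mu_2$. The map $\mu\mapsto\xi$ forgets $\mu_1$ and $\mu_2$.

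Second, I will pick two initial momenta $\mu^{(a)}(0)=(\mu_1^a,\mu_2^a,\mu_3^0)$ and $\mu^{(b)}(0)=(\mu_1^b,\mu_2^b,\mu_3^0)$ with equal third component and $\mu_2^a\neq\mu_2^b$, for instance $(0,0,\mu_3^0)$ and $(0,1,\mu_3^0)$. The Lie--Poisson system is a polynomial ODE, so both solutions exist locally and are unique. At $t=0$ they share $\xi(0)=(1,0,\mu_3^0)$. However, $\dot\xi_3(0)$ equals $0$ for the first and $1$ for the second. Any $\mathbf{g}$ would have to satisfy $g_3(1,0,\mu_3^0)=0$ and $g_3(1,0,\mu_3^0)=1$ simultaneously, which is impossible. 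The argument works for every value $\mu_3^0$, so no such $\mathbf{g}$ exists even locally on any open subset of the admissible observable set $\{(1,0,\xi_3)\}$.

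The main obstacle is making precise what "well-defined autonomous system" excludes, in particular higher-order closures such as $\ddot\xi=F(\xi,\dot\xi)$. The lemma as stated concerns first-order systems, so the two-trajectory argument above suffices. As a remark, I would add that the conserved Casimir $C=\mu_1^2+\mu_2^2$ couples $\mu_2$ to the hidden $\mu_1$. The spatial momentum $p_0$ from \eqref{eq:coadjoint_reconstruction} also enters through $g(t)$. This explains why the missing information sits precisely in the latent variables, which motivates the Noether-invariant reconstruction used by LLPNNs.
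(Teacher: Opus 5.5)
Your proposal is correct and follows essentially the same route as the paper. Both arguments note that $\dot\xi_3=\dot\mu_3=\mu_2$ is not visible in $\xi=(1,0,\mu_3)$. Both then exhibit two momenta with equal $\mu_3$ but different $\mu_2$ (the paper uses $(1,0,2)$ and $(1,1,2)$), which give the same observable point but contradictory values of $g_3$ there. Your added observations, that local existence and uniqueness hold and that the conflict occurs at every $\mu_3^0$, are harmless refinements that the paper leaves implicit.
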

	\noindent
	\textbf{Proof.} 
	Since $\xi_3 = \mu_3$, we obtain
	\begin{equation}
		\dot{\xi}_3 = \dot{\mu}_3 = \mu_2 .
	\end{equation}
	However, the value of $\mu_2$ cannot be recovered from the observable variables $\xi = (1,0,\mu_3)$. Consequently, two different momentum states may produce identical observable velocities while possessing different time derivatives. For example,
	\begin{equation}
		(\mu_1,\mu_2,\mu_3) = (1,0,2) \quad \text{and} \quad (\mu_1,\mu_2,\mu_3) = (1,1,2)
	\end{equation}
	both yield
	\begin{equation}
		\xi = (1,0,2),
	\end{equation}
	but lead to
	\begin{equation}
		\dot{\xi}_3 = 0 \quad \text{and} \quad \dot{\xi}_3 = 1,
	\end{equation}
	respectively. Therefore, there does not exist a first-order autonomous system $\dot{\xi} = \mathbf{g}(\xi)$ describing the evolution of the observable variables alone. $\blacksquare$
	
	\paragraph{Consequences of non-existence of a unique vector field in observable space} 
	While the result of Lemma~\ref{lem:non_existence_xi} may seem surprising, it stems from the fundamental fact that observable velocity is not a state variable, but merely an output of the latent momentum dynamics. This distinction is critical for control problems involving degenerate Hamiltonians. Standard data-driven methods, such as Neural ODEs or DeepONets, attempt to learn the vector field directly in terms of observable variables. However, to close the dynamics in $\xi$ and resolve state ambiguity, one must incorporate additional structural knowledge—such as restricting motion to a specific symplectic leaf defined by a fixed Casimir value. Consequently, while Neural ODEs and DeepONets perform reasonably well when the Hamiltonian is non-degenerate, as illustrated by our benchmark examples of generalized rigid body dynamics in Section~\ref{subsec:rigidbody_so3} and the generalized Kirchhoff underwater vehicle problem in Section~\ref{subsec:kirchhoff_se3}, they fail rapidly when the Hamiltonian is degenerate, as demonstrated for the autonomous vehicles studied in Section~\ref{subsec:SE2_drones}. In contrast, the LLPNN framework developed in this work reconstructs the latent Lie--Poisson state, where dynamics obeys a well-defined differential equation regardless of Hamiltonian degeneracy, relying solely on observable data and intrinsic geometric properties of Lie--Poisson systems.
	
		\subsection{Observable Data, System Input and Problem Formulation} 
	To design a neural network for learning latent dynamics, we must account for the fact that, in general, no invertible mapping exists between observable states (positions and velocities) and latent momenta. Consequently, we must infer the functional form of the Hamiltonian $h(\mu)$ directly from observable measurements of group elements and velocities. More precisely, we assume access to a dataset comprising $M$ short trajectories. For each trajectory $j=1,\dots,M$, we observe configuration group elements $(g_1^j,\dots,g_N^j)$, alongside their corresponding Lie algebra-valued velocities $(\xi_1^j,\dots,\xi_N^j)$ across $N$ discrete time instances. 
	
	For each trajectory, we treat this dataset as two separate collections: one containing configuration states $g_i \in G$, and another containing Lie algebra-valued velocities $\xi_i \in \mathfrak{g}$. In practice, velocities can often be reconstructed directly from group elements—for instance, the Cayley transform provides a map from the special orthogonal group $\mathrm{SO}(3)$ to its Lie algebra $\mathfrak{so}(3)$ \cite{iserles2001cayley}, while matrix logarithms can be employed for general Lie groups. In such cases, velocity errors stem from both deterministic discretization errors and stochastic measurement noise in the configuration states. Alternatively, velocity data $\xi$ may be acquired independently of orientation data, such as via onboard telemetry from gyroscopes. Thus, in what follows, we assume that both group element measurements $g^j(t_i) \simeq g_i^j$ and velocity measurements $\xi^j(t_i) \simeq \xi_i^j$ are available for $i = 1, \dots, N$ and $j=1, \dots, M$. When analyzing noise effects, we assume the error distributions in these two datasets to be independent.
	
	Given these available measurements, the core problem addressed in this work is formulated as follows. 
	
	\paragraph{Problem Formulation}
	Consider a dataset $(g_1^j,\dots,g_N^j)$ of configuration group elements $g_i^j \in G$ and corresponding symmetry-reduced velocities $\xi_1^j, \dots, \xi_N^j \in \mathfrak{g}$ sampled across $N$ discrete time points for $M$ trajectories. Assume that the dynamics in the latent space $\mu \in \mathfrak{g}^*$ is governed by the Lie--Poisson equations \eqref{LP_bracket_explicit} with a known Lie--Poisson matrix $\Lambda(\mu)$ but an unknown Hamiltonian function $H(\mu)$. Our goal is to learn this underlying dynamics and reconstruct both the observable trajectories $(\xi, g)$ and the latent momentum trajectories $\mu$.

	\section{Latent Lie-Poisson Neural Networks: two approaches} 
	\subsection{LLPNNs: The decoder (Hamiltonian) approach} 
	\label{subsec:decoder} 
	
	\subsubsection{Mathematical background} 
	In this first approach, we formulate the learning method for the Hamiltonian function $h(\mu)$ based on Noether's theorem. The primary advantage of this procedure is that it operates directly in the dynamical space of the momenta $\mu$. Because we only learn the decoder $h(\mu)$, we never need to deduce a corresponding encoder $\ell(\xi)$, thus bypassing the need to consider whether a Lagrangian formulation exists.
	
	We begin by identifying degenerate directions, which yield constant velocities. These degenerate directions correspond to components of $\xi = \pp{h}{\mu}$ that are either identically zero or constant. Identifying these directions directly from data is a natural task in systems with time-independent Hamiltonians. We formalize this distinction as follows:
	
	\begin{definition}
		Suppose coordinate component $k$ exhibits a constant velocity, i.e., $\xi_k = \pp{h}{\mu_k} = C_k$ (which may be zero) for all $\mu$. We define such coordinates as \emph{passive}, while all remaining coordinates are defined as \emph{active}. 
	\end{definition} 
	
\begin{remark}
	{\rm
		Throughout this paper, we assume that the reduced Hamiltonian $h(\mu)$ is time-independent. One could also consider time-dependent Hamiltonians for which passive velocity components $\xi_k^{\mathrm{pass}}=\xi_k^{\mathrm{pass}}(t)$ are prescribed functions of time. Identifying such components from data would introduce an additional model-selection problem, and we do not consider this extension here.
	}
\end{remark}
	A standard method for discovering constant linear combinations of functions from data is Singular Value Decomposition (SVD) \cite{brunton2022data}. If a linear combination of coordinates remains constant during evolution, we can perform a change of variables to align with these new coordinates, placing the passive coordinates at the end of the coordinate vector. In practice, degenerate coordinates can often be identified directly; for instance, in the ground vehicle model discussed in Section~\ref{subsec:SE2_drones}, they are readily observed by computing the body-frame angular velocity from orientation measurements.
	
	Because the Hamiltonian depends linearly on passive variables, the neural network parameterizing the Hamiltonian needs to depend only on the active variables. Suppose the passive components are defined by $\xi_k^{\text{pass}} = a_k \in \mathbb{R}$, which are directly observed from data. We posit the following structure for the Hamiltonian: 
	\begin{equation}
		h_{NN}(\mu) = \sum_k a_k \mu_k^{\text{pass}} + F_{NN}(\mu^{\text{act}}) \, . 
		\label{h_act_pass} 
	\end{equation}
	Learning the Hamiltonian in \eqref{h_act_pass} proceeds in three steps, while dynamic trajectory reconstruction requires an additional step utilizing a Magnus expansion combined with Noether's theorem.
	
	\paragraph{Step 1.} 
	Given observable configuration data $(g_1^j, \ldots, g_N^j)$ for trajectory $j \in \{ 1, \ldots, M\}$, conservation law \eqref{eq:coadjoint_reconstruction} guarantees the existence of a trajectory-specific constant vector $p_0^j$ such that the latent momentum trajectory satisfies: 
	\begin{equation} 
		\mu_i^j = \operatorname{Ad}^*_{g_i^j} p_0^j \, . 
		\label{eq:mu_integral_j} 
	\end{equation}
	We therefore assign a constant vector $p_0^j$ to each trajectory $j = 1, \ldots, M$. These vectors, optimized during network training, allow exact reconstruction of latent trajectories. Note that in \eqref{eq:mu_integral_j}, $\mu$ includes all components, both active and passive. 
	
	\paragraph{Step 2} 
	Given observable velocity data $(\xi_1^j, \ldots, \xi_N^j)$ for the same trajectories, we parameterize the Hamiltonian using the neural network $h = h_{NN}(\mu)$ and compute predicted velocities using a projection operator $\mathbb{P}^{\text{act}}$ onto the active components: 
	\begin{equation}
		\mathbb{P}^{\text{act}} \widetilde{\xi}_i^j = \mathbb{P}^{\text{act}} \pp{h_{NN}}{\mu} \left( \operatorname{Ad}^*_{g_i^j} p_0^j \right) \, . 
		\label{eq:xi_i_approx} 
	\end{equation}
	Projecting onto active components via $\mathbb{P}^{\text{act}}$ is sufficient, as passive velocity components are satisfied automatically by the design of \eqref{h_act_pass}. 
	
	\paragraph{Step 3} 
	We define the loss function with respect to the network weights $\mathbf{W}$ and the set of trajectory constants $\bar{p}_0 = (p_0^1, \ldots, p_0^M)$: 
	\begin{equation}
		L(\mathbf{W}, \bar{p}_0) = \sum_{j=1}^M \sum_{i=1}^N \left\| \mathbb{P}^{\text{act}} \left( \xi_i^j - \widetilde{\xi}_i^j \right) \right\|^2 \, , \quad \text{with } \widetilde{\xi}_i^j \text{ defined by \eqref{eq:xi_i_approx}} . 
		\label{eq:latent_loss} 
	\end{equation}
	Optimizing loss function \eqref{eq:latent_loss} yields an approximation for the Hamiltonian $h(\mu)$. While optimization also determines the constant vectors $p_0^j$ for training trajectories, these values are dataset-specific and not used directly for predicting new trajectories. 
	
	\subsubsection{Reconstruction Algorithm: Hamiltonian Approach} 
	
	\paragraph{Step 1: Finding Noether's constant} 
	Once $h_{NN}$ has been learned, we use it to reconstruct system trajectories. In momentum space, trajectory reconstruction is straightforward, requiring forward integration of \eqref{LP_eqs} using structure-preserving integrators. However, our primary goal is to reconstruct dynamics in observable space. When $h(\mu)$ is regular with no passive directions, one can use the Legendre transform to map Hamiltonian dynamics into Lagrangian dynamics. For degenerate Hamiltonians, however, constructing a Lagrangian that directly governs velocity evolution is generally impossible.
	
	Rather than attempting to find a Lagrangian that may not exist, we operate entirely within the Hamiltonian framework. Given initial observations $\xi_0$ and $g_0$ for a trajectory to be reconstructed, we determine the conserved spatial momentum $p_0$ satisfying: 
	\begin{equation} 
		\mathbb{P}^{\text{act}} \xi_0 = \mathbb{P}^{\text{act}} \pp{h_{NN}}{\mu} \left( \operatorname{Ad}^*_{g_0} p_0 \right) .
		\label{init_cond_reconstruction} 
	\end{equation}
	Although solving \eqref{init_cond_reconstruction} from a single snapshot is theoretically possible for regular Hamiltonians, fitting $p_0$ over a short initial window of $N_{\text{fit}}$ states significantly improves robustness against measurement noise. Because accurate estimation of $p_0$ is crucial for long-term trajectory prediction, relying on a single data point often leads to numerical degradation. We thus determine $p_0$ from a short history window rather than a single point to improve the stability and accuracy of the method. 
	
	\paragraph{Step 2: Finding the update for the group element} 
	Once $p_0$ is determined from Step 1, let $\Delta_k = g_k^{-1} g_{k+1}$ denote the relative group evolution element, yielding the configuration update $g_{k+1} = g_k \Delta_k$. Using Noether's conservation law and the coadjoint anti-homomorphism property ($\operatorname{Ad}^*_{gh} = \operatorname{Ad}^*_{h} \operatorname{Ad}^*_{g}$), we obtain: 
	\begin{equation} 
		\widetilde{\mu}_{k+1} (\Delta_k) = \operatorname{Ad}^*_{g_{k+1}} p_0 = \operatorname{Ad}^*_{g_k \Delta_k} p_0 = \operatorname{Ad}^*_{\Delta_k} \operatorname{Ad}^*_{g_k} p_0 = \operatorname{Ad}^*_{\Delta_k} \mu_k \, . 
		\label{eq:mu_update_k} 
	\end{equation}
	Simultaneously, the Lie algebra velocity at step $k+1$ must satisfy $\widetilde{\xi}_{k+1} = \pp{h_{NN}}{\mu} (\widetilde{\mu}_{k+1})$. For the time step $\Delta t$, the relative step $\Delta_k$ is computed via a second-order Magnus expansion:
	\begin{equation}
		\Theta^{(2)}(\Delta t, \Delta_k) = \frac{\Delta t}{2}\left(\xi_k + \widetilde{\xi}_{k+1}(\Delta_k)\right) + \frac{(\Delta t)^2}{12}\left[ \xi_k, \widetilde{\xi}_{k+1}(\Delta_k) \right] ,
		\label{eq:theta2}
	\end{equation}
	which yields an implicit algebraic relation for $\Delta_k$. Note that the Lie algebra commutator in \eqref{eq:theta2} involves all components of $\xi_k$ and $\widetilde{\xi}_{k+1}$, not just active ones. Setting the residual to zero creates the equation: 
	\begin{equation}
		\text{res}(\Delta_k) = \Delta_k - \exp\left( \Theta^{(2)}(h, \Delta_k) \right) = 0 .
		\label{magnus_eq_Delta} 
	\end{equation}
	Because $\Delta_k$ is close to the identity, we parameterize $\Delta_k$ via its Lie algebra counterpart $U_k$ as $\Delta_k = \exp(h U_k)$. Equation \eqref{magnus_eq_Delta} can be solved using standard root-finding algorithms or non-linear least squares; we employ the Newton--Raphson method for computational efficiency. Solving for $U_k$ at each time step yields the updated relative group element $\Delta_k$. 
	
	\paragraph{Step 3: Solving for observable and latent variables} 
	Obtaining $\Delta_k$ allows us to update the dynamic state variables at step $k+1$:
	\begin{equation}
		g_{k+1} = g_k \Delta_k, \quad \mu_{k+1} = \operatorname{Ad}^*_{\Delta_k} \mu_k, \quad \xi_{k+1} = \pp{h_{NN}}{\mu}(\mu_{k+1}) \, . 
		\label{reconstruction_eq_final} 
	\end{equation}
	
	\paragraph{Step 4: Full reconstruction} 
	With $p_0$ fixed from Step 1 for the entire trajectory, repeat Steps 2 and 3 sequentially across all time steps to complete trajectory reconstruction.
	
	\subsection{LLPNNs: The encoder (Lagrangian) approach} 
	\label{subsec:encoder}
	
	\subsubsection{Mathematical background} 
	In this second approach, we formulate the learning procedure using a
	Lagrangian-like function $\ell(\xi)$. Care must be taken here: unlike in
	the Hamiltonian formulation, we cannot assume that a physically meaningful
	Lagrangian always exists. We can, however, assume that, after excluding the
	passive coordinates, the mapping between active momentum and velocity
	variables is locally one-to-one. Specifically, the key assumption of the
	Lagrangian LLPNN approach is that the restriction of the map
	$\Phi(\mu)=\xi(\mu)=\partial h/\partial\mu$ to the active coordinates has a
	non-singular Hessian:
	\begin{equation}
		\det
		\frac{\partial^2 h}
		{\partial\mu^{\rm act}\partial\mu^{\rm act}}
		\neq 0
		\quad\Longrightarrow\quad
		\Psi(\xi^{\rm act})
		=
		\frac{\partial \ell}{\partial \xi^{\rm act}}
		=
		\mu^{\rm act},
		\qquad
		\Psi=\Phi^{-1}.
		\label{inverse_Psi}
	\end{equation}
	Indeed, by the inverse function theorem, the non-singularity condition in
	\eqref{inverse_Psi} guarantees the existence of a local inverse
	$\Psi=\Phi^{-1}$. Furthermore, the Jacobian of $\Psi$ is the inverse Hessian
	of $h$ in active components, which is symmetric. Consequently, $\Psi$ is locally a gradient
	mapping and therefore admits a scalar potential $\ell(\xi)$ satisfying
	$\mu^{\rm act}=\partial\ell/\partial\xi^{\rm act}$.
	
	We emphasize that $\ell(\xi)$ is generally not a true physical Lagrangian
	unless all coordinates are active and the Hamiltonian is non-degenerate.
	Condition \eqref{inverse_Psi} merely defines an invertible change of
	coordinates between active momentum and velocity variables and does not imply
	the existence of an Euler--Poincar\'e formulation. To reflect its purely
	formal role as a partial encoder, we refer to $\ell(\xi)$ as a
	\emph{pseudo-Lagrangian}.
	
	The main advantage of this formulation is that it operates directly on observable velocities $\xi$. Here, we only learn the encoder in active variables $\ell(\xi^{\text{act}})$ and do not require a corresponding decoder $h(\mu)$. Learning the phase space dynamics proceeds in two steps.
	
	\paragraph{Step 1} 
	Using conservation law \eqref{eq:coadjoint_reconstruction} for observable configurations $(g_1^j, \ldots, g_N^j)$ along trajectory $j \in \{ 1, \ldots, M\}$, there exists a constant spatial momentum $p_0^j$ such that the active momentum components satisfy: 
	\begin{equation} 
		\mathbb{P}^{\text{act}} \mu_i^j (\xi_i^j) =  \pp{\ell}{\xi^{\text{act}}} \left( \xi_i^{j,\text{act}} \right) = \mathbb{P}^{\text{act}} \operatorname{Ad}^*_{g_i^j} p_0^j \, . 
		\label{eq:mu_integral_j_Lagr} 
	\end{equation}
	Note that \eqref{eq:mu_integral_j_Lagr} is formulated in terms of velocity coordinates $\xi$, unlike the momentum formulation in \eqref{eq:mu_integral_j}. 
	
	\paragraph{Step 2} 
	We seek to approximate the pseudo-Lagrangian $\ell(\xi) \approx \ell_{NN}(\xi)$ using a neural network while simultaneously determining the constant vectors $p_0^j$ for each trajectory. A key technical difference arises when determining $p_0^j$ in the Lagrangian setting versus the Hamiltonian setting. Unlike in \eqref{eq:mu_integral_j}, the constants $p_0^j$ appear \emph{outside} the neural network evaluation in \eqref{eq:mu_integral_j_Lagr}. Consequently, for a given network $\ell_{NN}(\xi)$, \eqref{eq:mu_integral_j_Lagr} forms a \emph{linear} system for $p_0^j$, which can be solved efficiently via linear least squares. Thus, for network weights $\mathbf{W}$, the optimal constants $p_0^j(\mathbf{W})$ are determined directly from data. The predicted active momentum components are then given by:
	\begin{equation}
		\widetilde{\mu}_i^{j, \text{act}} = \mathbb{P}^{\text{act}} \pp{\ell_{NN}}{\xi} \left( \xi_i^j \right) \, . 
		\label{eq:mu_i_approx_Lagr}
	\end{equation}
	
	\paragraph{Step 3} 
	We construct a loss function depending solely on the network weights $\mathbf{W}$:
	\begin{equation}
		L(\mathbf{W}) = \sum_{j=1}^M \sum_{i=1}^N \left\| \mathbb{P}^{\text{act}} \operatorname{Ad}^*_{g_i^j} p_0^j (\mathbf{W}) - \widetilde{\mu}_i^{j,\text{act}} \right\|^2 , \quad \text{with } \widetilde{\mu}_i^{j,\text{act}} \text{ defined by \eqref{eq:mu_i_approx_Lagr}} . 
		\label{eq:latent_loss_Lagr} 
	\end{equation}
	Optimizing loss function \eqref{eq:latent_loss_Lagr} yields the learned pseudo-Lagrangian $\ell_{NN}(\xi^{\text{act}})$, providing the forward encoder mapping to active momentum components. This optimization also determines the Noether constants $p_0^j(\mathbf{W})$ for each training trajectory. 
	
	\begin{remark}[On non-constant time steps in the learning procedure]
		{\rm 
		Note that equations \eqref{eq:mu_integral_j} and \eqref{eq:xi_i_approx} in the Hamiltonian formulation, along with their Lagrangian counterparts \eqref{eq:mu_integral_j_Lagr} and \eqref{eq:mu_i_approx_Lagr}, depend only on configuration states and velocities at discrete points. Consequently, loss functions \eqref{eq:latent_loss} and \eqref{eq:latent_loss_Lagr} remain valid regardless of whether time steps between data points are uniform or non-uniform. While we focus on constant time step data in our numerical experiments, the framework extends naturally to irregularly sampled trajectories.
	}
	\end{remark} 
	
	\subsubsection{Reconstruction Algorithm: Pseudo-Lagrangian Approach} 
	
	\paragraph{Step 1: Finding Noether's constant}
	
	Similar to the Hamiltonian approach, suppose we observe a short window of
	initial conditions $(g_i,\xi_i)$ for $0 \le i \le N_{\mathrm{fit}}$ along a
	trajectory to be reconstructed. We seek the conserved spatial momentum
	$p_0$ satisfying
	\begin{equation}
		\frac{\partial \ell_{NN}}{\partial \xi^{\text{act}}}\left(\xi_i^{\text{act}}\right)
		=
		\mathbb{P}^{\mathrm{act}}
		\operatorname{Ad}^{*}_{g_i} p_0,
		\qquad
		0 \le i \le N_{\mathrm{fit}} .
		\label{init_cond_reconstruction_Lagr}
	\end{equation}
	
	Since $\ell_{NN}$ has already been learned, equation
	\eqref{init_cond_reconstruction_Lagr} is linear in the unknown vector
	$p_0$. This should be contrasted with its Hamiltonian counterpart
	\eqref{init_cond_reconstruction}, where $p_0$ appears nonlinearly inside
	the learned Hamiltonian $h_{NN}$.
	
	Because $\ell_{NN}$ acts only on the active components, whereas $p_0$
	contains all momentum components in the dual Lie algebra, a single
	observation is generally insufficient to uniquely determine $p_0$.
	Instead, a short fitting window of $N_{\mathrm{fit}}$ observations
	produces an overdetermined linear system that can be solved efficiently
	in the least-squares sense.
	
	\paragraph{Step 2: Determining velocity at the next step using Magnus residual} 
	Once $p_0$ is determined, we also know full initial momentum $\mu_0 = Ad^*_{g_0} p_0$ since we know the initial Lie group element $g_0$. Suppose we know $\mu_k$, $g_k$ and $\xi_k$ on the step $k$ and would like to find out these values at the step $k+1$. 
	
	Let $\Delta_k = g_k^{-1} g_{k+1}$ denote the relative configuration update $g_{k+1} = g_k \Delta_k$. Given current velocity $\xi_k$ and a candidate velocity $\widetilde{\xi}_{k+1} = \left(\xi^{\text{pass}},\widetilde{\xi}_{k+1}^{\text{act}} \right)$ at step $k+1$, we obtain: 
	\begin{equation} 
		\widetilde{\mu}_{k+1}^{\text{act}}  (\Delta_k) = \pp{\ell_{NN}}{\xi^{\text{act}}} (\widetilde{\xi}_{k+1}^{\text{act}} ) =\mathbb{P}^{\text{act}}  \operatorname{Ad}^*_{\Delta_k} \mu_k \, . 
		\label{eq:mu_update_k_Lagr} 
	\end{equation}
	As before, we relate $\Delta_k$ to velocities $\xi_k$ and $\widetilde{\xi}_{k+1}$ using the second-order Magnus expansion \eqref{eq:theta2}. Equating the two representations of the momentum at time step $k+1$
	defines the residual equation:
	\begin{equation}
		\text{res}(\widetilde{\xi}_{k+1}) = \pp{\ell_{NN}}{\xi^{\text{act}}} (\widetilde{\xi}_{k+1}^{\text{act}}) - \mathbb{P}^{\text{act}}  \operatorname{Ad}^*_{\Delta_k(\xi_k, \widetilde{\xi}_{k+1})} \mu_k = 0 . 
		\label{magnus_eq_Delta_Lagr} 
	\end{equation}
	Since $\Delta_k$ lies near the identity, we parameterize it through its Lie-algebra element. Equation \eqref{magnus_eq_Delta_Lagr} is then solved for
	\[
	\xi_{k+1}^{\mathrm{act}}=\widetilde{\xi}_{k+1}^{\mathrm{act}}
	\]
	using a root-finding or nonlinear least-squares procedure. The full velocity $\xi_{k+1}$ is then obtained by combining the solved active components with the fixed passive components.
	
	\paragraph{Step 3: Prediction of observable and latent variables at the next time step} 
	Once the full velocity $\xi_{k+1}$ is determined from \eqref{magnus_eq_Delta_Lagr}, we compute the relative configuration step $\Delta_k$ and update the latent momentum at step $k+1$ as
	\begin{equation}
		\mu_{k+1} = \operatorname{Ad}^*_{\Delta_k} \mu_k .
		\label{reconstruction_mu_k_eq_final} 
	\end{equation}
	Note that \eqref{reconstruction_mu_k_eq_final}, like its Hamiltonian counterpart \eqref{reconstruction_eq_final}, iterates the full latent momentum forward using coadjoint group actions, thereby preserving Casimir invariants to machine precision.	
	\paragraph{Step 4: Full reconstruction} 
	With $p_0$ determined from Step 1 for the entire trajectory, repeat Steps 2 and 3 sequentially across all time steps to complete trajectory reconstruction. 
	
	A schematic overview comparing the Lagrangian and Hamiltonian approaches is shown in Figure~\ref{fig:schematic}. 
	
	\begin{figure}[htbp]
		\centering
		\includegraphics[width=1\textwidth]{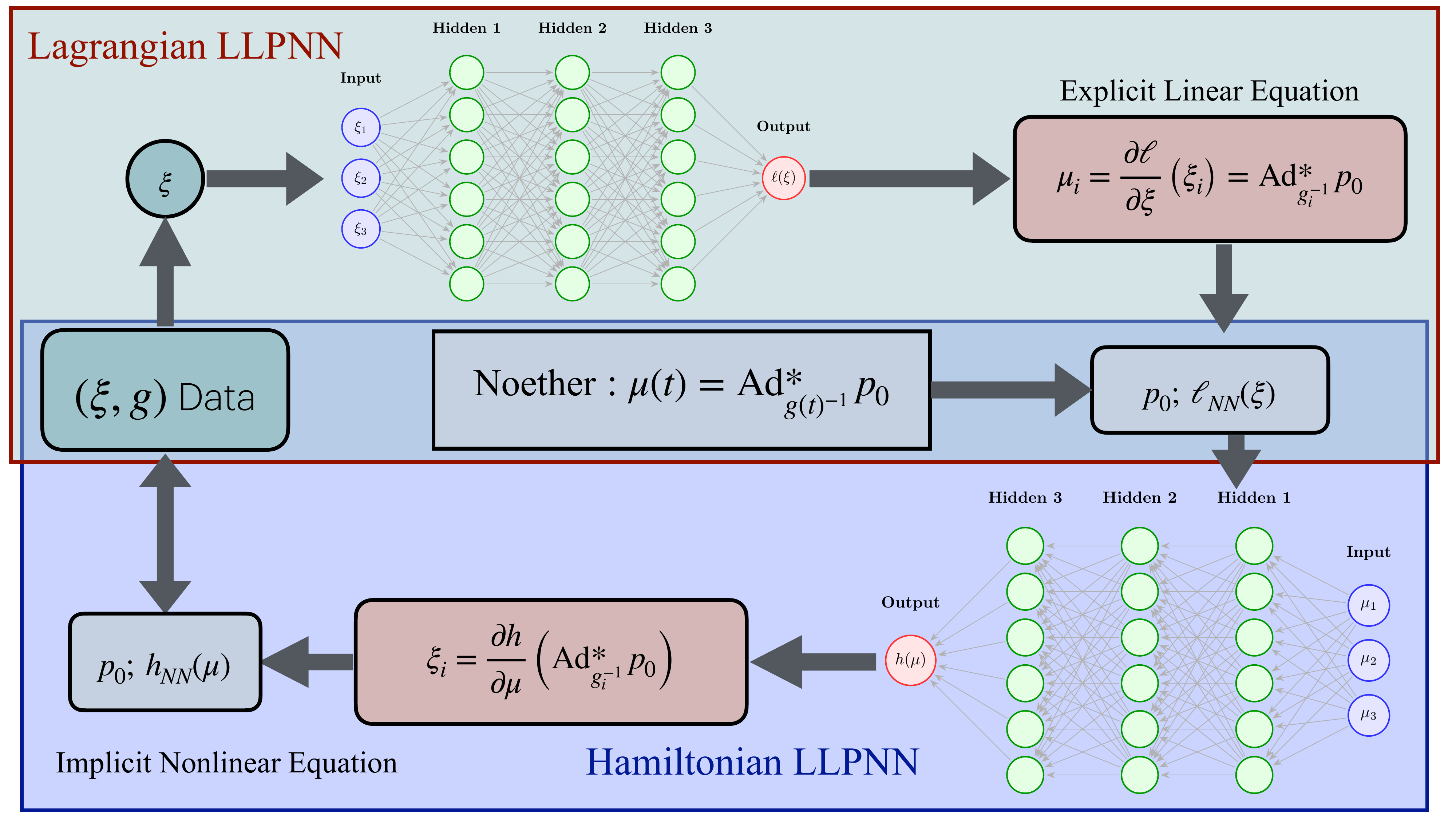}
		\caption{Schematic overview of the proposed LLPNN framework in both Lagrangian and Hamiltonian settings.}
		\label{fig:schematic}
	\end{figure}
	\subsection{Warm Start for the Learning Process} 
	\label{subsec:warm_start}
	To improve robustness and convergence during optimization, we initialize our algorithm with a physically and computationally viable warm start. This is particularly crucial for the Hamiltonian approach, where we must simultaneously train the Hamiltonian neural network and estimate the Noether constants $p_0$ from data. When Noether constants enter non-linearly into network evaluations, optimization initialized from random weights can encounter complex loss landscapes and converge to spurious local minima. An inaccurate initial state for the residual network can similarly cause convergence to false minima. Although random initialization often yields correct results, implementing a warm-start procedure significantly improves optimization reliability for practical applications. We design this procedure to provide accurate initial approximations for common control Hamiltonians, specifically those exhibiting quadratic behavior near the origin in active variables.
	
	To construct our warm-start procedure, we assume that the Hamiltonian and pseudo-Lagrangian in active coordinates are dominated by quadratic terms, with neural networks modeling non-quadratic residuals:
	\begin{align}
		\ell\left( \xi^{\text{act}} \right) &= \frac{1}{2} \xi^{\text{act}} \cdot \mathbb{B} \xi^{\text{act}} + \ell^{\text{res}}_{NN}\left(\xi^{\text{act}}\right) \, , 
		\label{warm_start_Lagr} 
		\\ 
		h(\mu) &= a \cdot \mu^{\text{passive}} + \frac{1}{2} \mu^{\text{act}} \cdot \mathbb{K} \mu^{\text{act}} + h^{\text{res}}_{NN}\left(\mu^{\text{act}}\right) \, , 
		\label{warm_start_Ham} 
	\end{align}
	where $\mathbb{B}$ and $\mathbb{K}$ are trainable symmetric matrices of size $K \times K$, with $K$ denoting the number of active variables. During the warm-start phase, we freeze the residual neural networks in \eqref{warm_start_Lagr} and \eqref{warm_start_Ham} to zero and solve for purely quadratic representations. Under this assumption, parameter initialization for the Hamiltonian, pseudo-Lagrangian, and Noether constants simplifies dramatically.
	
	In the Hamiltonian setting, \eqref{eq:xi_i_approx} simplifies to a biquadratic equation in $p_0$ and matrix $\mathbb{K}$, which can be solved efficiently using Newton iterations. In the Lagrangian setting, \eqref{eq:mu_i_approx_Lagr} reduces to a homogeneous linear system in $\mathbb{B}$ and $p_0$, solvable as an overdetermined least-squares problem. A unique solution is obtained by requiring, for example, $\operatorname{tr} \mathbb{B} = 1$, which yields a Karush--Kuhn--Tucker (KKT) linear system \cite{nocedal2006numerical}. This formulation establishes a unique scale for matrix $\mathbb{B}$ in \eqref{warm_start_Lagr} and all Noether constants $p_0$.
	
	Once initial parameter estimates are established, we unfreeze the neural network weights and proceed with full training. For the Hamiltonian formulation, this step also refines the estimated Noether constants $p_0$. This warm-start strategy provides a fast, robust initialization for the subsequent optimization phase.
	
	\begin{remark}[Approximation capacity of LLPNNs]
		{\rm 
		Expressions \eqref{warm_start_Ham} and \eqref{warm_start_Lagr} parameterize the reduced Hamiltonian $h(\mu)$ and, when it exists, the reduced pseudo-Lagrangian $\ell(\xi^{\mathrm{act}})$. Under the modeling assumption that the observed system is generated by Lie--Poisson Hamiltonian dynamics, the reduced Hamiltonian $h(\mu)$ is well defined. The pseudo-Lagrangian formulation $\ell(\xi^{\text{act}})$ exists locally on the active variables whenever the corresponding active velocity--momentum map is locally invertible. Since the networks use smooth activation functions, standard universal approximation results imply that these smooth functions, and their derivatives on compact subsets, can be approximated to arbitrary prescribed accuracy by sufficiently expressive networks \cite{hornik1990universal}.
	}
	\end{remark}
		
	\subsection{Energy Definition and Regularity}
	\label{subsec:energy_regularity}
	For autonomous Hamiltonian systems, the Hamiltonian function evaluated on momenta is a conserved quantity. Consequently, when using the Hamiltonian LLPNN formulation, energy conservation along reconstructed trajectories can be verified directly, as both the Hamiltonian function and state trajectories in observable and latent spaces are explicitly reconstructed.
	
	The situation differs in the Lagrangian setting. For a regular Lagrangian system, the conserved energy-like function $E$ is obtained through the Legendre transform:
	\begin{equation}
		E(\xi)
		=
		\left\langle \xi, \frac{\partial \ell}{\partial \xi} \right\rangle
		-
		\ell(\xi),
		\label{energy_like_xi}
	\end{equation}
	where $\langle \cdot,\cdot\rangle$ denotes the natural pairing between the Lie algebra $\mathfrak{g}$ and its dual $\mathfrak{g}^*$.
	Note that energy function \eqref{energy_like_xi} is formulated in terms of velocity $\xi$ rather than momentum $\mu$; it represents the quantity that becomes Hamiltonian $h(\mu)$ upon substituting $\xi = \xi(\mu)$.
	
	If both $\ell(\xi)$ and $h(\mu)$ are regular (i.e., the Hessians $\nabla^2 \ell$ and $\nabla^2 h$
	are non-singular in the region of interest), the Legendre transformation is locally invertible. Consequently, the energy functional $E(\xi)$ in \eqref{energy_like_xi} exists at least locally, a property that fails for degenerate or constrained systems. If the Hamiltonian is singular, the observable dynamics need
	not admit an Euler--Poincaré representation \eqref{EP_eqs}, as a valid Lagrangian satisfying the observed velocity data may not exist, and there may not even exist a closed autonomous dynamical
	system in the observable variables $\xi$, as we discussed above in Section~\ref{subsec:difference_Ham_Lagr}. 
	
	Accordingly, we compute energy in the Lagrangian formulation only when the underlying Hamiltonian is regular. In this work, regularity holds for the Kirchhoff underwater vehicle problem on $SE(3)$ detailed in Section~\ref{subsec:kirchhoff_se3}. In contrast, planar drone control on $SE(2)^N$
	(Section~\ref{subsec:SE2_drones}) involves a control
	Hamiltonian with two singular directions and one regular
	direction per agent. Consequently, the energy coming from the Legendre-transform
	\eqref{energy_like_xi} is not defined and no
	corresponding energy-conservation test is available in the
	pseudo-Lagrangian formulation.

	\subsection{Overall Comparison Between Hamiltonian and Lagrangian LLPNNs}
	\label{subsec:comparison}
	
	Both Lagrangian and Hamiltonian formulations offer distinct trade-offs, making each suitable for specific problem classes. A comparative summary is provided in Table~\ref{tab:llpnn_comparison}.
	
	\begin{table}[htbp]
		\centering
		\renewcommand{\arraystretch}{1.5}
		\begin{tabular}{|c|c|c|}
			\toprule
			Quantity & Hamiltonian LLPNN & Lagrangian LLPNN \\
			\midrule
			Phase space & Latent & Observable \\
			\hline 
			Applicability & Any Hamiltonian & \makecell[l]{Legendre invertible \\ in active variables }\\
			\hline 
			\makecell[c]{Noether Integral \\ $p_0$ computation} 
			& \makecell[l]{Implicit root-finding \\ inside NN evaluation} & \makecell[c]{Overdetermined \\ linear system} \\
			\hline 
			\makecell[c]{Gauge-invariant \\ latent space} & Yes & Yes \\
			\hline
			Energy conservation &   \makecell[l]{Directly available \\ for any Hamiltonian}  & Regular Hamiltonians \\
			\bottomrule
		\end{tabular}
		\caption{Summary of differences between Hamiltonian and Lagrangian LLPNN approaches.}
		\label{tab:llpnn_comparison}
	\end{table}
	
	The Hamiltonian formulation operates directly in the latent state space
	where the dynamics is closed, while the pseudo-Lagrangian formulation
	operates in the observable variables through an invertible active-sector
	encoding. The Hamiltonian LLPNN therefore applies to arbitrary Lie--Poisson
	systems, whereas the Lagrangian LLPNN benefits from substantially simpler
	reconstruction whenever the active velocity--momentum map is invertible.
	Consequently, both formulations are relevant in practical applications:
	the Hamiltonian approach offers the greatest generality and remains
	applicable to a wide class of degenerate systems, while the Lagrangian approach provides
	a computationally simpler learning and reconstruction procedure whenever a suitable
	active-variable inverse mapping exists.
	
	Having established the theoretical framework, we now turn to practical applications: rigid-body motion, Kirchhoff's underwater vehicle problem, and interacting planar drone dynamics. An interesting feature of the proposed warm-start procedure is that, when the underlying Hamiltonian and Lagrangian are quadratic, the initialization already recovers the exact solution, and further neural-network training provides little or no improvement. Consequently, purely quadratic systems, whether degenerate or non-degenerate, are of limited interest as neural-network benchmarks. While classical formulations of these problems typically employ quadratic Hamiltonians, we consider generalized non-quadratic models in order to demonstrate the full expressive power of the LLPNN framework.
	
	\subsection{Comparison with other methods}
	To the best of our knowledge, no existing structure-preserving framework is capable of learning arbitrary latent Hamiltonian dynamics directly from observable variables $(\xi, g)$ like our approach. We therefore evaluate alternative baseline methods from the literature that can be adapted to provide a meaningful benchmark comparison.
	
	If no additional information about the system is provided, one could use
	non-structure-preserving approaches such as Neural ODEs
	\cite{chen2018neural} and DeepONets \cite{lu2021learning}. However, these
	methods operate purely on the observable variables and therefore implicitly
	assume that the observables constitute a closed dynamical system. As
	discussed in Section~\ref{subsec:difference_Ham_Lagr}, this assumption may
	fail for degenerate Lie--Poisson Hamiltonians. In such cases, the observable
	variables do not uniquely determine the latent momentum state and may not
	satisfy any autonomous evolution equation. Consequently, Neural ODEs and
	DeepONets are forced to approximate an ill-defined dynamics without the knowledge of the latent states,
	whereas LLPNNs reconstruct the latent state space in which the dynamics is
	actually closed.
	
	Since we work with velocity data, an alternative would be to consider purely Lagrangian methods \cite{cranmer2020lagrangian} adapted to the symmetry-reduced Euler-Poincar\'e equations \eqref{EP_eqs}. However, as previously discussed, one cannot in general expect the motion of a degenerate symmetry-reduced Hamiltonian to reduce to a corresponding Euler-Poincar\'e system for a Lagrangian, as a Lagrangian representation may not even exist. 
	
	Instead, we adopt a compromise: rather than searching for a structure-preserving algorithm that predicts latent-variable Hamiltonian dynamics using only observable variables, we generate latent-space data sequences to train an alternative structure-preserving baseline. Specifically, we choose an algorithm based on Lie-Poisson Neural Networks (LPNets) \cite{eldred2024lie}, extended to interacting objects \cite{eldred2025clpnets,huraka2026structure}. This comparison should be interpreted with care, since LPNet training
	requires latent momentum trajectories that are not directly available from
	the observations. Thus, the LPNet baseline relies heavily on our method's learned Hamiltonian, which performs the preliminary work required to generate those latent trajectories.

	In summary, we compare LLPNNs against three representative alternatives.
	Neural ODEs and DeepONets require only observable data but do not exploit
	the latent Lie--Poisson structure. LPNets preserve the Lie--Poisson geometry
	exactly, but require access to latent momentum trajectories that are not
	available in practical applications. Together, these baselines provide a
	useful comparison between observable-space learning, latent-space
	structure-preserving learning, and the proposed latent reconstruction
	framework.
	
	The details of the learning procedure, which is common for all cases, is summarized on Table~\ref{tab:hyperparams}.
	\begin{table}[htbp]
		\centering
		
		\begin{tabular}{|c|c|c|c|c|c|}
			\toprule
			\textbf{Phase} & \textbf{Network} & \textbf{Epochs} & \textbf{Initial LR} & \textbf{Decay Rate} & \textbf{Decay Steps} \\
			\midrule
			L-Net      & $L_{\text{net}}$ & $20,000$        & $0.005$             & $0.90$              & $500$        \\
			H-Net   & $H_{\text{net}}, p_0$ & $20,000$   & $0.010$             & $0.90$              & $500$             \\
			LPNets   & $\text{LPNet}$   & $10,000$        & $0.005$             & Fixed LR        & N/A                 \\
			\bottomrule
		\end{tabular}
		\caption{Training Schedules and Optimization Hyperparameters 	\label{tab:hyperparams}}
		
	\end{table}

	\section{Experimental results} 
	
	\subsection{Ground Truth Data Generation}
	\label{subsec:ground_truth_integrator}
	
To generate physically faithful long-horizon reference trajectories without introducing numerical dissipation or artificial manifold drift, we employ a structure-preserving geometric integrator formulated for a general Lie group $G$ with dual Lie algebra $\mathfrak{g}^*$ (e.g., $SE(3)$, $SO(3)$, or product groups such as $SE(2)^N$). The ground-truth dynamics are integrated in two coupled stages: a symplectic implicit midpoint update for the body momentum $\mu \in \mathfrak{g}^*$, followed by a second-order Magnus expansion to reconstruct group configurations $g \in G$.
	
	\paragraph{1. Body Momentum Step (Implicit Midpoint)}
	The reduced Lie--Poisson dynamics on $\mathfrak{g}^*$ obey $\dot{\mu} = \Lambda(\mu) \nabla h(\mu)$, where $\Lambda_{ij}(\mu) = -\sum_k \mu_k \gamma^k_{ij}$ represents the Poisson structure tensor induced by the Lie algebra structure constants $\gamma^k_{ij}$. To advance the body momentum from $\mu_k$ to $\mu_{k+1}$ across an internal step size $\Delta t_{\text{int}}$, we solve the implicit midpoint scheme:
	\begin{equation}
		\mu_{k+1} - \mu_k - \Delta t_{\text{int}} \, \Lambda\left(\frac{\mu_k + \mu_{k+1}}{2}\right) \nabla h\left(\frac{\mu_k + \mu_{k+1}}{2}\right) = \mathbf{0}\,.
		\label{eq:implicit_midpoint}
	\end{equation}
	Equation \eqref{eq:implicit_midpoint} is solved numerically at each step via a high-precision multi-dimensional root finder with a residual tolerance of $10^{-12}$. The body velocity vector $\xi = \nabla h(\mu) \in \mathfrak{g}$ is obtained via exact gradient evaluations of the Hamiltonian $h(\mu)$, which may be non-quadratic. This method preserves all quadratic Casimirs (which comprise all Casimirs in the systems considered here) to machine precision.
	
	\paragraph{2. Configuration Reconstruction (Magnus Expansion)}
	Given momentum states $\mu_k$ and $\mu_{k+1}$, their associated body velocities $\xi_k = \nabla h(\mu_k)$ and $\xi_{k+1} = \nabla h(\mu_{k+1}) \in \mathfrak{g}$ are used to advance the group configuration $g_k \in G$. The algebra update element $\Omega_k \in \mathfrak{g}$ is constructed via a second-order Magnus expansion:
	\begin{equation}
		\Omega_k = \frac{\Delta t_{\text{int}}}{2} (\xi_k + \xi_{k+1}) + \frac{(\Delta t_{\text{int}})^2}{12} [\xi_k, \xi_{k+1}]_{\mathfrak{g}}\,,
		\label{eq:magnus_expansion}
	\end{equation}
	where $[\cdot, \cdot]_{\mathfrak{g}}$ denotes the Lie bracket on $\mathfrak{g}$. The group state is then updated via the Lie exponential map:
	\begin{equation}
		g_{k+1} = g_k \exp\left(\widehat{\Omega}_k\right)\,,
		\label{eq:group_update}
	\end{equation}
	where $\widehat{\Omega}_k$ denotes the matrix Lie algebra representation of $\Omega_k$.
	
	\paragraph{3. Spatial Momentum Reconstruction}
	By Noether's theorem, spatial momentum $p \in \mathfrak{g}^*$ is conserved under $G$-invariance. At each recorded step, spatial momentum $p_k$ is computed from body momentum $\mu_k$ via the coadjoint action:
	\begin{equation}
		p_k = \operatorname{Ad}^*_{g_k^{-1}} \mu_k\,.
		\label{eq:spatial_momentum}
	\end{equation}
	The spatial momenta $p_k$, conserved by Noether's theorem, are saved throughout integration to monitor and verify numerical accuracy.
	
	In our data generation pipeline, integrations are executed using a fine sub-stepping time step $\Delta t_{\text{int}} = 0.01\,\text{s}$ and downsampled to an observational resolution of $\Delta t = 0.1\,\text{s}$. This dual-stage scheme guarantees exact geometric preservation of coadjoint orbits, high-accuracy conservation of spatial momentum $p$, preservation of system Casimirs, and long-term stability of the total energy across general Lie group phase spaces.
	
	The color scheme and line styles used across all experimental figures in subsequent sections adhere to the conventions summarized in Table~\ref{tab:method_styles}.
	
	\begin{table}[htbp]
		\centering
		\begin{tabular}{|c|c|c|}
			\toprule
			Method & Color & Line Style \\
			\midrule
			Ground truth & \textcolor{blue}{Blue} & Solid (\texttt{-}) \\
			Standard LPNet & \textcolor{red}{Red} & Dashed (\texttt{--}) \\
			Hamiltonian LLPNN & \textcolor{green}{Green} & Dash-Dotted (\texttt{-.}) \\
			Lagrangian LLPNN & \textcolor{magenta}{Magenta} & Dotted (\texttt{:}) \\
			DeepONet & \textcolor{orange}{Orange} & Dashed (\texttt{--}) \\
			Neural ODE & \textcolor{cyan}{Cyan} & Dash-Dotted (\texttt{-.}) \\
			\bottomrule
		\end{tabular}
		\caption{Summary of universal color and line style codes used in figure plots reporting experimental results.}
		\label{tab:method_styles}
	\end{table}
	
	\subsection{A Generalized Rigid Body: $SO(3)$ dynamics}
	\label{subsec:rigidbody_so3}
	
	\subsubsection{Problem formulation} 
	As a first numerical application, we consider rigid body dynamics modeled via Lie--Poisson equations on $SO(3)$. Although classical governing relations for rigid body motion simplify under standard symmetries, they continue to exhibit rich dynamical behavior. In this benchmark, we evaluate a non-quadratic modification of the Hamiltonian to demonstrate the learning capability and geometric fidelity of the LLPNN framework.
	
	The phase-space geometry of the rigid body is dictated by the Lie--Poisson bracket associated with the special orthogonal group $SO(3)$, encapsulating three-dimensional rotations. For $SO(3)$, Lie algebra velocities and dual Lie algebra momenta are identified with vectors in $\mathbb{R}^3$, denoted by bold symbols $\boldsymbol{\mu}$ and $\boldsymbol{\xi}$, respectively. For an inertia matrix $\mathbb{I} = \operatorname{diag}(1, 2, 3)$, the classical quadratic kinetic energy is:
	\begin{equation} 
		H_{\text{quad}}(\boldsymbol{\mu}) = \frac{1}{2} \boldsymbol{\mu}\cdot \mathbb{I}^{-1} \boldsymbol{\mu} = \frac{1}{2} \left( \mu_1^2 + \frac{1}{2} \mu_2^2 + \frac{1}{3} \mu_3^2 \right) \,.
		\label{Rigidbody_Hamiltonian_classical}
	\end{equation} 
	For smooth functions $F$ and $H$ on $\mathfrak{so}(3)^*$, the Lie--Poisson bracket is defined through a standard cross-product:
	\begin{equation}
		\{ F, H \} = - \boldsymbol{\mu} \cdot \left( \frac{\partial F}{\partial \boldsymbol{\mu}} \times \frac{\partial H}{\partial\boldsymbol{\mu}} \right) \,.
		\label{SO3_bracket}
	\end{equation} 
	
	Applying Lie--Poisson bracket \eqref{SO3_bracket} to state variables yields the equations of motion: 
	\begin{equation} 
		\dot{\boldsymbol{\mu}} = \boldsymbol{\mu} \times \frac{\partial H}{\partial\boldsymbol{\mu}} \,.
		\label{Rigidbody_eqs} 
	\end{equation} 
	Beyond total energy conservation, system \eqref{Rigidbody_eqs} inherently preserves a Casimir invariant linked to $\mathfrak{so}(3)^*$, given by the squared momentum magnitude:
	\begin{equation}
		C = \|\boldsymbol{\mu}\|^2 = \mu_1^2 + \mu_2^2 + \mu_3^2 \, .   
		\label{Casimir_SO3} 
	\end{equation}
	
	As noted previously, purely quadratic Hamiltonians can be learned with minimal neural network complexity. However, in general optimal control applications, quadratic Hamiltonians cannot be assumed. We therefore consider the non-quadratic Hamiltonian:
	\begin{equation}
		H(\boldsymbol{\mu}) = H_{\text{quad}}(\boldsymbol{\mu}) \left(1 + B (\mu_1^2 + \mu_2^2 + 2 \mu_3^2 ) \right) \, , 
		\label{Rigidbody_Hamiltonian_nonquad}
	\end{equation}
	where $B = 0.1$ in all simulations. The coefficient $2$ multiplying $\mu_3^2$ ensures that the non-quadratic scaling factor is not simply a function of the Casimir invariant \eqref{Casimir_SO3}.
	
	Even with non-quadratic Hamiltonian \eqref{Rigidbody_Hamiltonian_nonquad}, the two conserved quantities (the Casimir and the Hamiltonian) define one-dimensional invariant curves at their intersection surfaces in three-dimensional momentum space $\boldsymbol{\mu}$ (and correspondingly in velocity space $\boldsymbol{\xi}$). Consequently, this generalized rigid body system remains integrable for any smooth $H(\boldsymbol{\mu})$, although the solutions, in general, cannot be expressed in terms of classical elliptic functions as is the case with the standard rigid body  \cite{whittaker1964treatise,lawden2013elliptic}.

	\subsubsection{Experimental Results: $SO(3)$ Non-Quadratic Rigid Body}
	
\paragraph{Ground truth data generation for training and testing}
For training, we generate 200 trajectories, each containing 101 time points. The time step is fixed at $\Delta t = 0.1$ throughout. The initial conditions are sampled from the standard normal distribution in the three-dimensional momentum space, namely
\[
\mu^{\mathrm{train}}_0 \sim \mathcal{N}(0,I_3),
\]
where $I_3$ is the $3\times 3$ identity matrix. For the test trajectories, the initial momenta are sampled from a distribution with half the standard deviation,
\[
\mu^{\mathrm{test}}_0 \sim \mathcal{N}(0,0.25 I_3).
\]
The initial orientations $g_0 \in SO(3)$ are sampled independently from the uniform distribution on $SO(3)$. In total, we generate 50 test trajectories, each containing 2001 time points.

	\paragraph{Neural Network Architectures and Residual Formulations}
	Both Hamiltonian and Lagrangian neural networks are defined over active coordinates and initialized using quadratic warm-start baselines:
	\begin{enumerate}
		\item \textbf{Hamiltonian Model ($H_{\text{NN}}$):} 
		Residual network $H_{\text{net}}$ consists of $3$ dense hidden layers with $32$ units per layer, utilizing $\tanh$ activation functions. The output layer is a single linear unit without bias. All layers are zero-initialized. To eliminate gauge ambiguity at $\boldsymbol{\mu} = \mathbf{0}$, the network subtracts its value and gradient evaluated at the origin. 
		
		\item \textbf{Lagrangian Model ($L_{\text{NN}}$):} 
		Similarly, the direct Lagrangian model operates on active body velocities. Residual network $L_{\text{net}}$ mirrors $H_{\text{net}}$ with $3$ hidden layers of $32$ units, $\tanh$ activations, and identical zero-point ambiguity removal at $\boldsymbol{\xi} = \mathbf{0}$. 
	\end{enumerate}
	
Both Lagrangian and Hamiltonian networks are trained for $20{,}000$ epochs. During training, the loss functions \eqref{eq:latent_loss} and \eqref{eq:latent_loss_Lagr} decrease by approximately four orders of magnitude, dropping from about $10^{-3}$ to  $\sim10^{-6}-10^{-7}$, or lower depending on the run.
	
	Figure~\ref{fig:rigidbody_xi_components} compares trajectory tracking for observable velocity components $\boldsymbol{\xi}$ across evaluated methods for a representative trajectory. The Hamiltonian LLPNN achieves near-perfect tracking, remaining visually indistinguishable from ground truth across the full time horizon. The Lagrangian LLPNN and standard LPNet also maintain close agreement, whereas non-structure-preserving baselines (DeepONet and Neural ODE) diverge significantly over long integration times.
	
	\begin{figure}[htbp]
		\centering
		\includegraphics[width=1\textwidth]{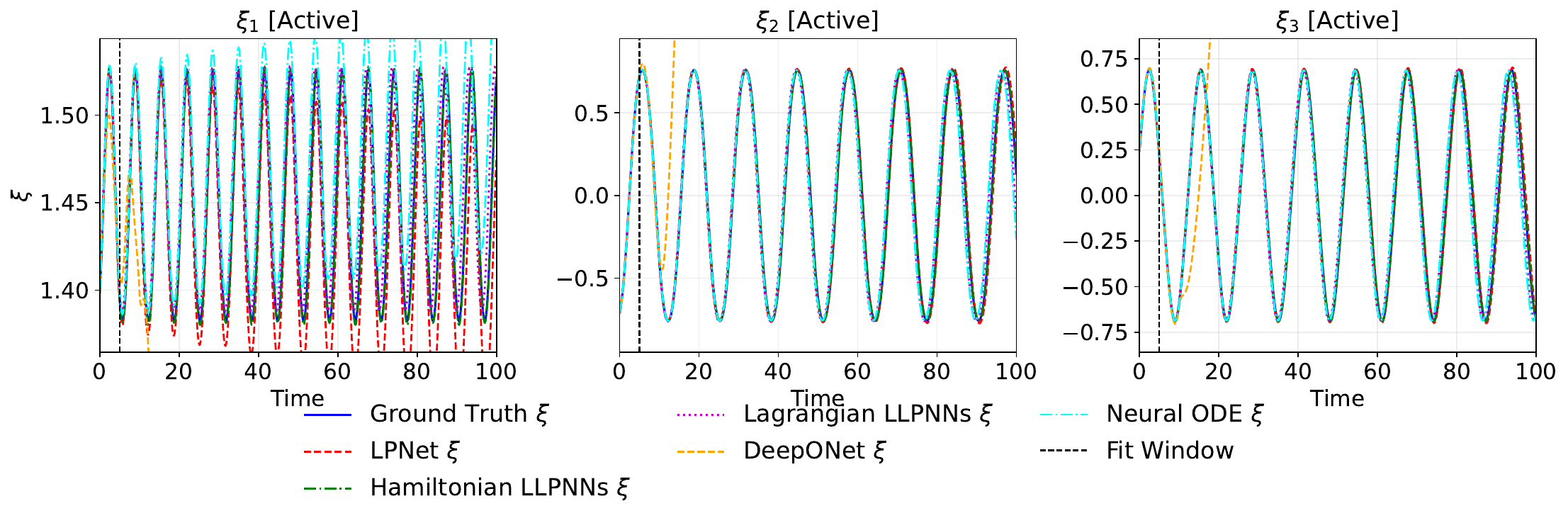}
		\caption{Predicted versus ground-truth velocity components $\boldsymbol{\xi}(t)$ for non-quadratic $SO(3)$ rigid body dynamics. Hamiltonian LLPNN (green dash-dotted line) remains visually indistinguishable from ground truth (blue solid line), whereas non-structure-preserving baselines diverge over time.}
		\label{fig:rigidbody_xi_components}
	\end{figure}
	
	Figure~\ref{fig:rigidbody_mu_components} displays corresponding latent momentum trajectories $\boldsymbol{\mu}$ reconstructed across momentum-capable formulations.
	
	\begin{figure}[htbp]
		\centering
		\includegraphics[width=1\textwidth]{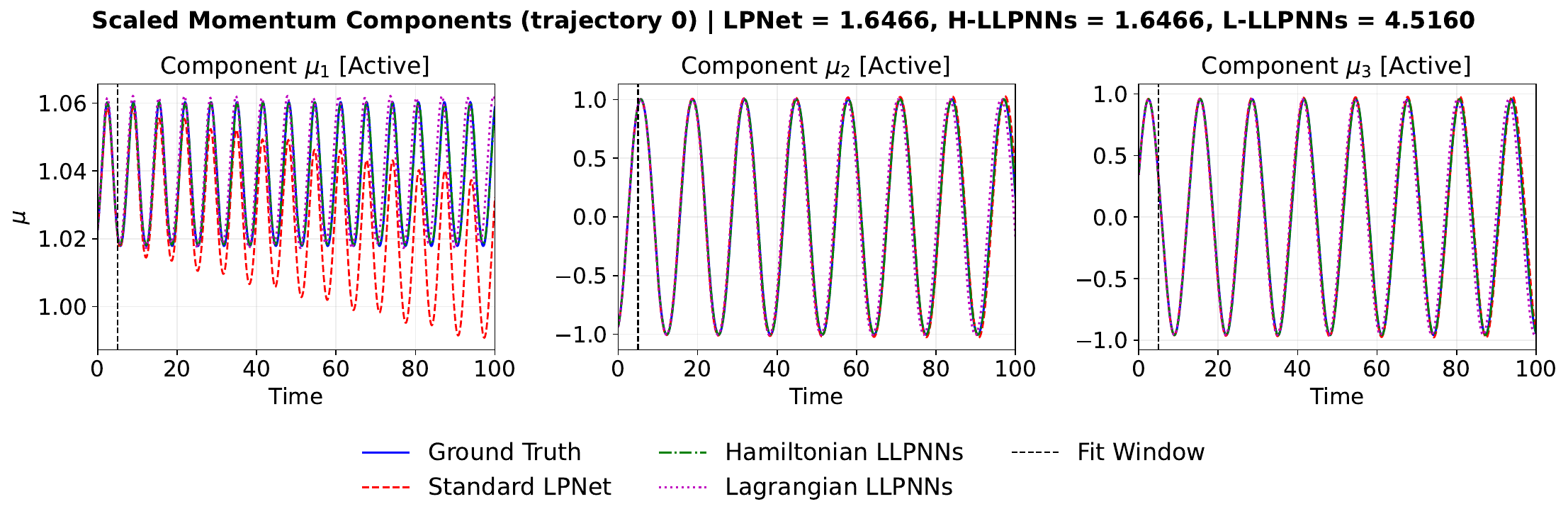}
		\caption{Reconstructed scaled latent momentum trajectories $\boldsymbol{\mu}(t) = (\mu_1,\mu_2,\mu_3)$ for the non-quadratic $SO(3)$ rigid body, shown for all models capable of predicting momentum evolution.}
		\label{fig:rigidbody_mu_components}
	\end{figure}
	
	To evaluate preservation of integrability, Figure~\ref{fig:rigidbody_PhaseSpace} displays 3D phase-space trajectories corresponding to Figures~\ref{fig:rigidbody_xi_components} and \ref{fig:rigidbody_mu_components}. Because the 3D phase space possesses two independent constants of motion (energy and Casimir), non-singular trajectories form closed orbits. The proposed Hamiltonian and Lagrangian LLPNNs reconstruct closed orbits matching ground truth data with high precision. Conversely, standard LPNets exhibit gradual orbit drift, while Neural ODEs and DeepONets rapidly depart from invariant manifolds in velocity space.
	
	\begin{figure}[htbp]
		\centering
		\includegraphics[width=1\textwidth]{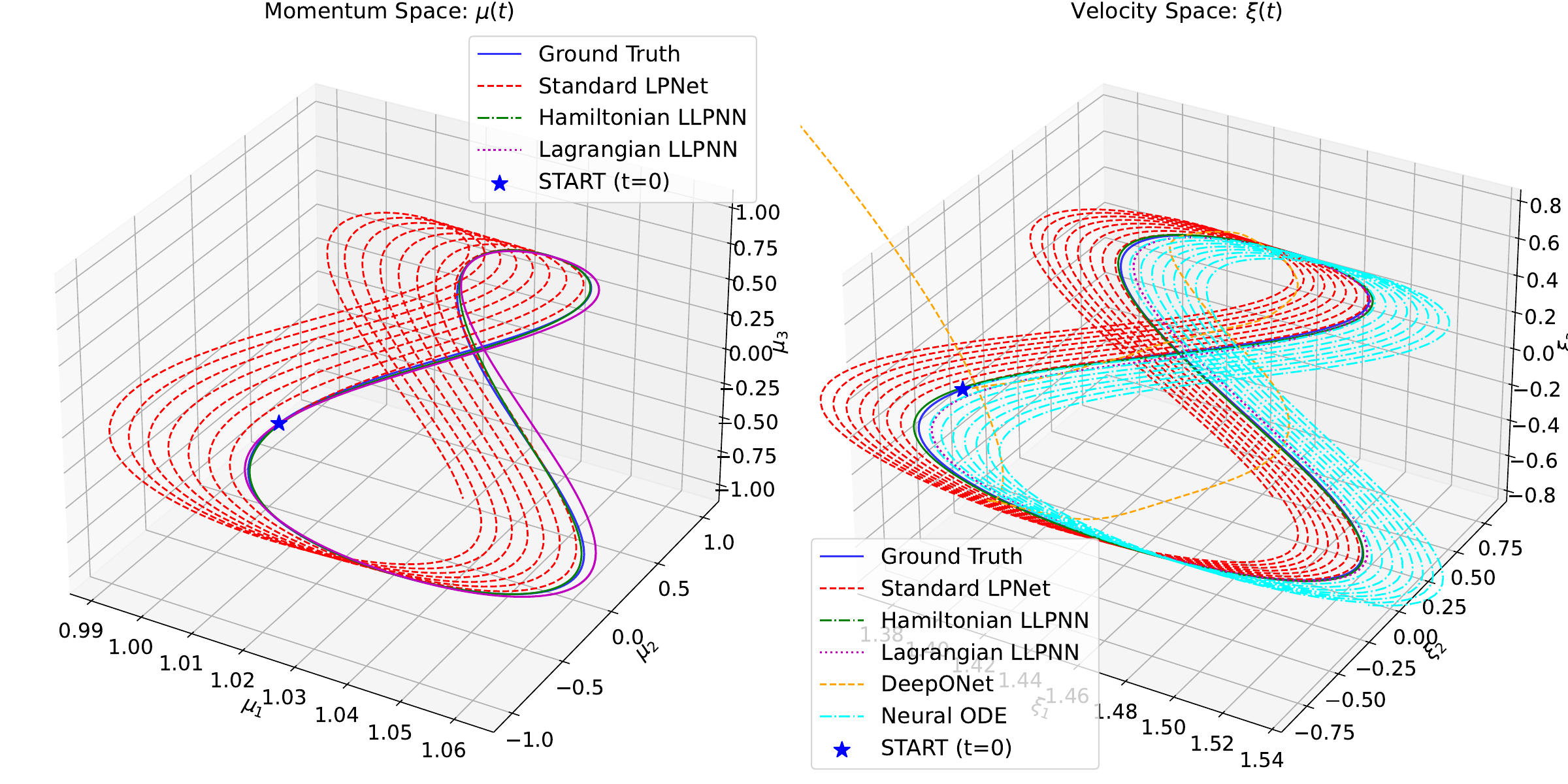}
		\caption{3D phase-space trajectories in latent momentum space $\boldsymbol{\mu}$ (left) and observable velocity space $\boldsymbol{\xi}$ (right) for non-quadratic $SO(3)$ rigid body dynamics. The proposed LLPNNs maintain closed invariant orbits matching the ground truth, while the standard LPNet (dashed red line) and Neural ODE (dashed blue line) solutions drift slowly away from the true trajectory; the DeepONet baseline (dashed orange line) diverges rapidly.}
		\label{fig:rigidbody_PhaseSpace}
	\end{figure}
	
	Figure~\ref{fig:rigidbody_mae_combined} generalizes single-trajectory observations by plotting Mean Absolute Error (MAE) evaluated across $50$ test trajectories with random initial conditions. The Hamiltonian LLPNN consistently achieves the lowest error in both latent ($\boldsymbol{\mu}$) and observable ($\boldsymbol{\xi}$) spaces, followed closely by the Lagrangian LLPNN, confirming superior accuracy in long-term reconstruction.
	
	\begin{figure}[htbp]
		\centering
		\includegraphics[width=1\textwidth]{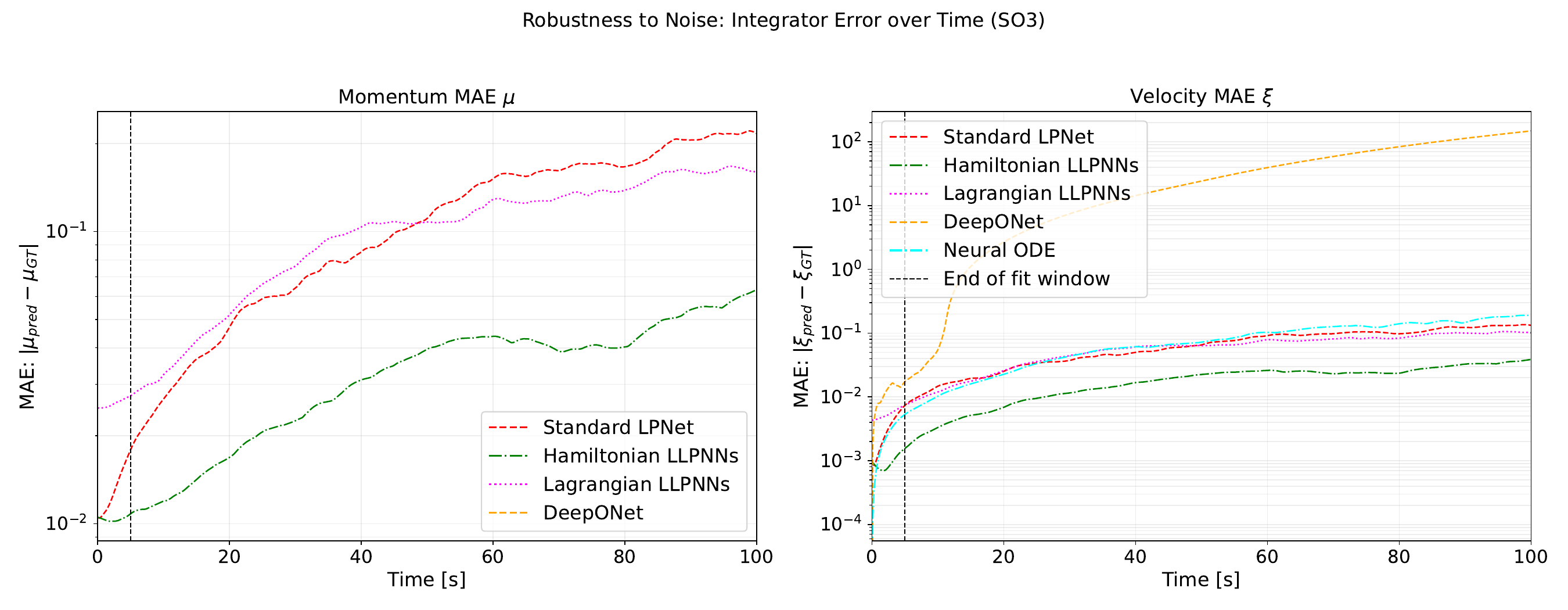}
		\caption{Time-evolution of Mean Absolute Error (MAE) for momentum $\boldsymbol{\mu}$ (left) and velocity $\boldsymbol{\xi}$ (right), averaged over 50 test trajectories with random initial conditions. Hamiltonian LLPNN achieves superior long-term accuracy.}
		\label{fig:rigidbody_mae_combined}
	\end{figure}
	
	Figure~\ref{fig:rigidbody_Casimir_energy} evaluates the preservation of conserved quantities across all evaluated methods. As shown in the left panel, all Lie--Poisson formulations (Hamiltonian LLPNN, Lagrangian LLPNN, and LPNet) preserve Casimir invariants to machine precision. In contrast, non-structure-preserving baselines that lack momentum representations (Neural ODE and DeepONet) cannot compute Casimir or energy values. The right panel demonstrates energy conservation error over time. Here, the Hamiltonian LLPNN achieves near-zero energy drift, whereas the Lagrangian LLPNN exhibits a small constant offset due to numerical differentiation in the inverse Legendre transformation, though it still maintains excellent long-term energy stability.
	
	\begin{figure}[htbp]
		\centering
		\includegraphics[width=0.48\textwidth]{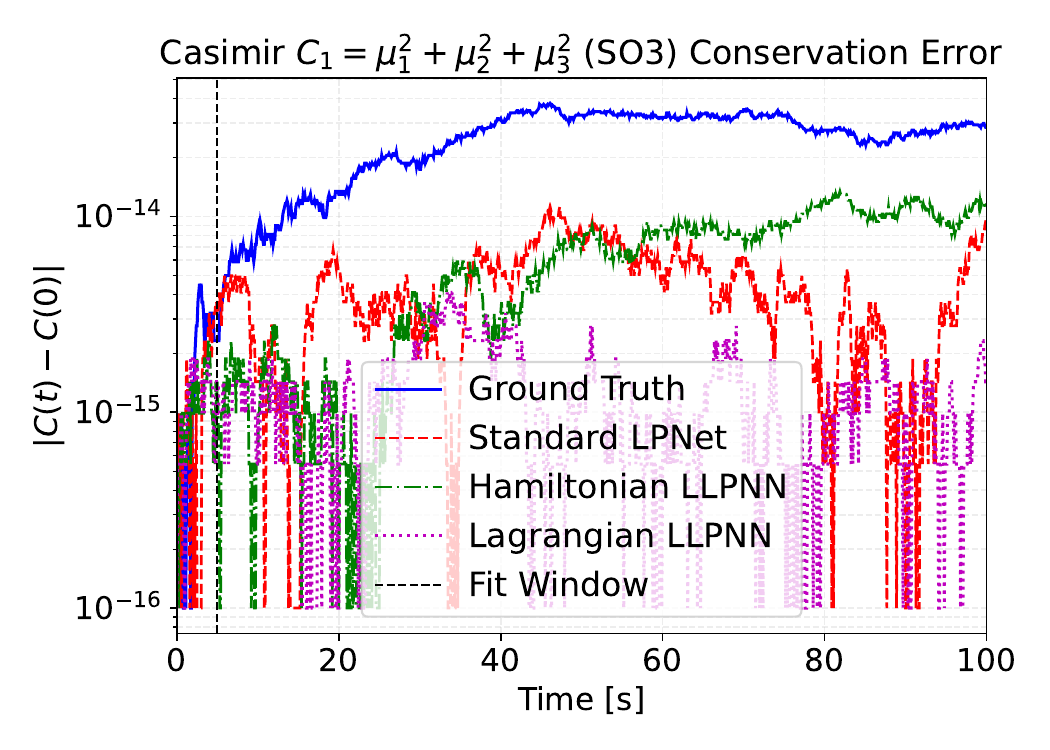}
		\hfill
		\includegraphics[width=0.48\textwidth]{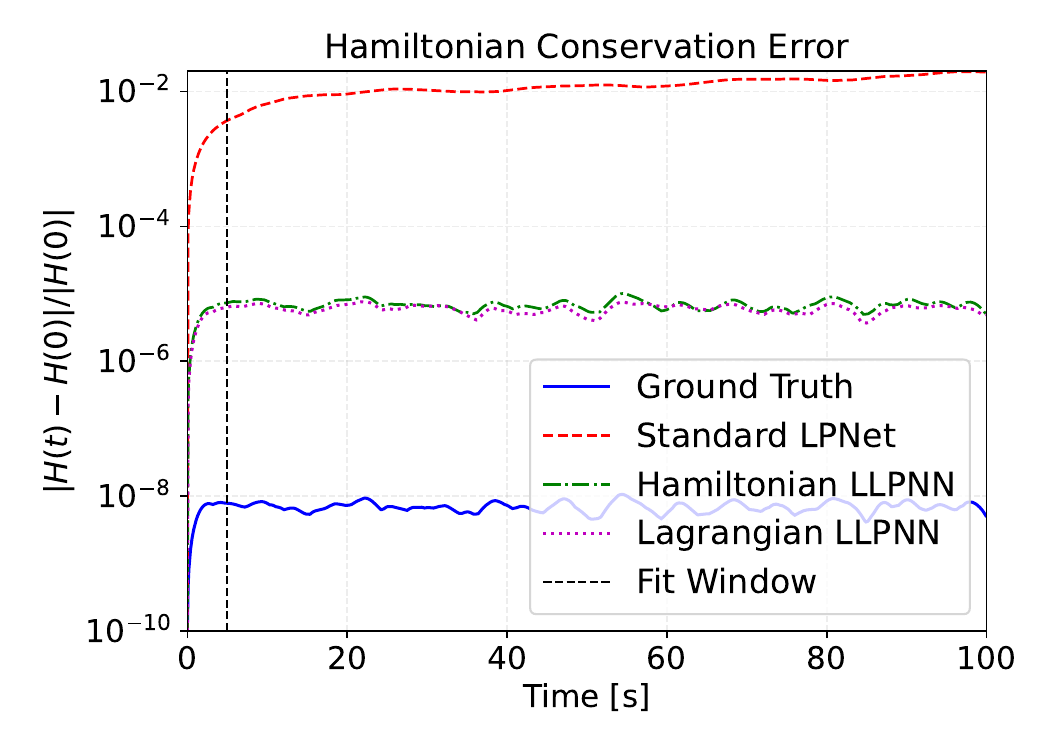}
		\caption{Conservation properties for non-quadratic $SO(3)$ rigid body dynamics averaged over 50 test trajectories. Left: Casimir conservation error over time, showing machine-precision preservation for all Lie--Poisson methods. Right: Relative energy drift over time, highlighting highly accurate energy conservation for both the Hamiltonian and Lagrangian LLPNNs.}
		\label{fig:rigidbody_Casimir_energy}
	\end{figure}
	
	\subsection{Kirchhoff's Underwater Vehicle: $SE(3)$ Dynamics}
	\label{subsec:kirchhoff_se3}
	
	\subsubsection{Problem formulation} 
	As a second application of our framework, we consider the motion of a submerged, neutrally buoyant rigid body governed by Kirchhoff's equations. A detailed exposition on their Lie--Poisson structure and geometric stability analysis can be found in \citep{leonard1997stability,LeMa1997stability}. When the vehicle's center of gravity coincides with its center of buoyancy, the governing relations simplify while exhibiting rich dynamical behavior, spanning both integrable regimes and chaotic dynamics \citep{holmes1998dynamics}. This system represents a significant step up in difficulty compared to $SO(3)$ rigid body dynamics, providing a demanding test of our learning methodology due to the complex coupling between rotational and translational modes.
	
	The motion of Kirchhoff's underwater vehicle is described by Lie--Poisson equations on $SE(3)$ with dual Lie algebra momentum state $\mu = (\mathbf{J}, \mathbf{P}) \in \mathfrak{se}(3)^*$, where $\mathbf{J}$ represents angular momentum and $\mathbf{P}$ denotes linear momentum. The classical kinetic energy Hamiltonian $H_0(\mathbf{J}, \mathbf{P})$ splits into rotational energy with inertia tensor $\mathbb{I}$ and translational energy with mass tensor $\mathbb{M}$:
	\begin{equation} 
		H_0(\mu) = H_0 (\mathbf{J},\mathbf{P})= \frac{1}{2} \mathbf{J} \cdot \mathbb{I}^{-1} \mathbf{J} + \frac{1}{2} \mathbf{P} \cdot \mathbb{M}^{-1} \mathbf{P}\,.
		\label{Underwater_body_Hamiltonian_classical}
	\end{equation} 
	In standard formulations, the inertia matrix $\mathbb{I}$ and generalized mass matrix $\mathbb{M}$ are taken to be diagonal. The standard physical Hamiltonian for this problem \eqref{Underwater_body_Hamiltonian_classical} is quadratic in momentum.
	
	The phase-space geometry of the vehicle is governed by the Lie--Poisson bracket associated with the special Euclidean group $SE(3)$, encapsulating three-dimensional rotations and translations. For arbitrary smooth functions $F$ and $H$ on $\mathfrak{se}(3)^*$, this bracket reflects the underlying semidirect product structure \citep{holm1998euler,leonard1997stability,holm2009geometric}:
	\begin{equation}
		\{ F, H \} = - \mathbf{J} \cdot \left( \frac{\partial F}{\partial \mathbf{J}} \times \frac{\partial H}{\partial \mathbf{J}} \right) - \mathbf{P} \cdot \left( \frac{\partial F}{\partial \mathbf{J}} \times \frac{\partial H}{\partial \mathbf{P}} - \frac{\partial H}{\partial \mathbf{J}} \times \frac{\partial F}{\partial \mathbf{P}} \right) \,.
		\label{SE3_bracket}
	\end{equation} 
	
	Applying Lie--Poisson bracket \eqref{SE3_bracket} to state variables yields Kirchhoff's equations of motion:
	\begin{equation} 
		\begin{aligned} 
			\dot{\mathbf{J}} & = - \frac{\partial H}{\partial \mathbf{J}} \times \mathbf{J} - \frac{\partial H}{\partial \mathbf{P}} \times \mathbf{P}\,, 
			\\
			\dot{\mathbf{P}} & = - \frac{\partial H}{\partial \mathbf{J}} \times \mathbf{P}\,.
		\end{aligned} 
		\label{Underwater_body_eqs} 
	\end{equation} 
	In addition to total energy conservation, system \eqref{Underwater_body_eqs} preserves two Casimir functions associated with the dual algebra $\mathfrak{se}(3)^*$: the squared linear momentum magnitude $C_1 = \|\mathbf{P}\|^2$ and the momentum pairing $C_2 = \mathbf{J} \cdot \mathbf{P}$.
	
	To test the learning capabilities of our framework on non-quadratic Hamiltonians, which naturally arise in controlled or fluid-coupled body dynamics, we modify the kinetic Hamiltonian to include non-quadratic state dependencies:
	\begin{equation}
		H(\mu) = H (\mathbf{J},\mathbf{P})= \left( \frac{1}{2} \mathbf{J} \cdot \mathbb{I}^{-1} \mathbf{J} + \frac{1}{2} \mathbf{P} \cdot \mathbb{M}^{-1} \mathbf{P} \right) \left(1 + B \left( \| \mathbf{J}\|^2 + \|\mathbf{P}\|^2\right) \right) \, , 
		\label{Underwater_body_Hamiltonian}
	\end{equation}
	where $B = 0.1$ in all numerical simulations. Unlike lower-dimensional examples, all six degrees of freedom in active coordinates are coupled non-linearly.

	\subsubsection{Experimental Results: $SE(3)$ Underwater Vehicle}
	
\paragraph{Ground truth data generation for training and testing}
For training, we generate 200 trajectories, each containing 101 time points. The initial momenta are sampled from the standard normal distribution in the six-dimensional momentum space, namely
\[
\mu^{\mathrm{train}}_0 \sim \mathcal{N}(0,I_6),
\]
where $I_6$ is the $6\times 6$ identity matrix. For testing, we generate 50 trajectories, each containing 2001 time points, from the same initial momentum distribution,
\[
\mu^{\mathrm{test}}_0 \sim \mathcal{N}(0,I_6).
\]
The time step is fixed at $\Delta t=0.1$ throughout. The initial orientations $R_0 \in SO(3)$ are sampled independently from the uniform distribution on $SO(3)$, while the initial positions are sampled from the standard normal distribution in $\mathbb{R}^3$.

	\paragraph{Neural Network Architectures and Residual Formulations}
	Both Hamiltonian and Lagrangian formulations operate on all six active coordinates ($K=6$) and are initialized using warm-start baselines:
	\begin{enumerate}
		\item \textbf{Hamiltonian Model ($h_{\text{NN}}$):}
		The Hamiltonian $h(\mu)$ is decomposed into a quadratic baseline core and a neural network residual:
		\begin{equation}
			h(\mu) = \frac{1}{2} \mu^\top \mathbb{K} \mu + h_{\text{net}}(\mu),
		\end{equation}
		where $\mathbb{K} = F F^\top \in \mathbb{R}^{6 \times 6}$ is a positive-definite matrix initialized via Cholesky factorization of the warm-start estimate for the inverse inertia matrix. The residual network $h_{\text{net}}: \mathbb{R}^6 \to \mathbb{R}$ comprises $3$ hidden layers with $32$ units per layer and $\tanh$ activation functions. The output layer is a single linear unit without bias, initialized with zero weights. To enforce strict consistency at the origin ($h_{\text{net}}(\mathbf{0}) = 0$ and $\nabla h_{\text{net}}(\mathbf{0}) = \mathbf{0}$), a linear correction is applied to raw network output $r_h$:
		\begin{equation}
			h_{\text{net}}(\mu) = r_h(\mu) - r_h(\mathbf{0}) - \nabla r_h(\mathbf{0})^\top \mu.
		\end{equation}
		The predicted velocity field is thus $\xi_{\text{pred}} = \nabla_\mu h(\mu) = A_0 \mu + \nabla h_{\text{net}}(\mu)$.
		
		\item \textbf{Lagrangian Model ($\ell_{\text{NN}}$):}
		Similarly, the direct Lagrangian model $\ell_{\text{NN}}: \mathbb{R}^6 \to \mathbb{R}$ acts on body velocities $\xi = (\boldsymbol{\Omega}, \mathbf{v}) \in \mathfrak{se}(3)$:
		\begin{equation}
			\ell_{\text{NN}}(\xi) = \frac{1}{2} \xi^\top \mathbb{B} \xi + 	\ell_{\text{net}}(\xi),
		\end{equation}
		where $\mathbb{B} = \frac{1}{2}(\mathbb{B}_{\text{core}} + \mathbb{B}_{\text{core}}^\top) \in \mathbb{R}^{6 \times 6}$ is a trainable symmetric mass matrix initialized to $M_{\text{warm}}$. Residual network $	\ell_{\text{net}}$ mirrors $h_{\text{net}}$ with identical hidden layer sizing, zero-initialized output weights, and origin constraints:
		\begin{equation}
			\ell_{\text{net}}(\xi) = r_	\ell(\xi) - r_	\ell(\mathbf{0}) - \nabla r_	\ell(\mathbf{0})^\top \xi.
		\end{equation}
	\end{enumerate}

	\paragraph{Optimization and Training Performance}
	All models are implemented in 64-bit precision (\texttt{tf.float64}) and trained for $20{,}000$ epochs using the Adam optimizer, with training parameters detailed in Table~\ref{tab:hyperparams}. Each trajectory reconstruction uses $N_{\text{fit}} = 50$ fit steps to determine the Noether constant $p_0$ for that trajectory. Throughout optimization, the loss functions \eqref{eq:latent_loss} and \eqref{eq:latent_loss_Lagr} typically drop by approximately four orders of magnitude, decaying from initial values around $10^{-3}$ to the range $10^{-6}$--$10^{-7}$, with some runs converging to even lower loss values.
	
	\paragraph{Energy and Legendre Regularity}
	For the learned Lagrangian model $\ell(\xi)$, the total energy $E(\xi)$ is evaluated via the Legendre transform:
	\begin{equation}
		E(\xi)
		=
		\left\langle \xi, \frac{\partial \ell}{\partial \xi} \right\rangle
		-
		\ell(\xi),
	\end{equation}
	where
	\[
	\left\langle
	(\boldsymbol{\Omega},\mathbf{v}),
	(\mathbf{J},\mathbf{P})
	\right\rangle
	=
	\boldsymbol{\Omega}\cdot\mathbf{J}
	+
	\mathbf{v}\cdot\mathbf{P}
	\]
	is the natural pairing between the Lie algebra $\mathfrak{se}(3)$ and its dual $\mathfrak{se}(3)^*$.
	In this formulation, both $	\ell(\xi)$ and $h(\mu)$ are regular (their Hessians $\nabla_\xi^2 	\ell$ and $\nabla_\mu^2 h$ are positive-definite). Consequently, the Legendre mapping $\xi \mapsto \mu = \frac{\partial 	\ell}{\partial \xi}$ is locally invertible, ensuring that energy $E(\xi)$ is well-defined across the domain.
	
	\paragraph{Trajectory and Conservation Analysis}
	Figure~\ref{fig:kirchhoff_xi_components} demonstrates the time-series for the six observable velocity components $\xi = (\boldsymbol{\Omega}, \mathbf{v}) = (\Omega_x, \Omega_y, \Omega_z, v_x, v_y, v_z)$ on a representative test trajectory. The Hamiltonian LLPNN captures both high-frequency rotational oscillations ($\boldsymbol{\Omega}$) and coupled translational velocity dynamics ($\mathbf{v}$), maintaining near-perfect alignment with ground-truth curves. Lagrangian LLPNN and standard LPNet models follow closely, whereas DeepONet and Neural ODE predictions degrade rapidly as integration time increases.
	
	\begin{figure}[htbp]
		\centering
		\includegraphics[width=1\textwidth]{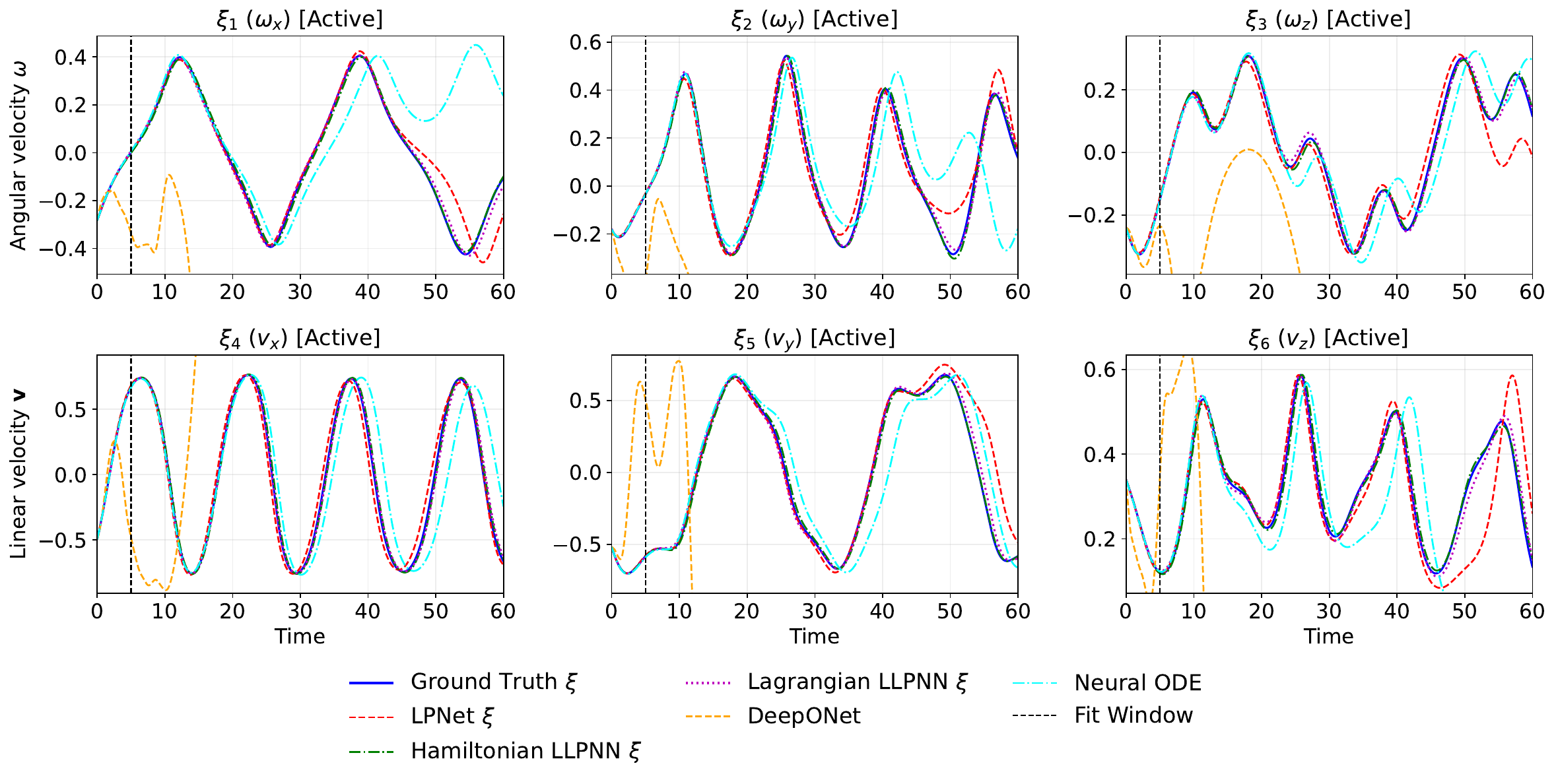}
		\caption{Predicted versus ground-truth body velocity components $\xi = (\boldsymbol{\Omega}, \mathbf{v}) = (\Omega_x, \Omega_y, \Omega_z, v_x, v_y, v_z)$ for non-quadratic $SE(3)$ Kirchhoff vehicle dynamics over time. Proposed Hamiltonian LLPNN (green dash-dotted line) stays closely aligned with ground truth (blue solid line) across all six components.}
		\label{fig:kirchhoff_xi_components}
	\end{figure}
	
	Figure~\ref{fig:kirchhoff_mu_components} presents the reconstructed latent body momentum states $\mu = (\mathbf{J}, \mathbf{P}) = (J_x, J_y, J_z, P_x, P_y, P_z)$ corresponding to the velocity trajectories shown in Figure~\ref{fig:kirchhoff_xi_components}. The Hamiltonian and Lagrangian LLPNN models maintain phase accuracy across all momentum coordinates.
	
	\begin{figure}[htbp]
		\centering
		\includegraphics[width=1\textwidth]{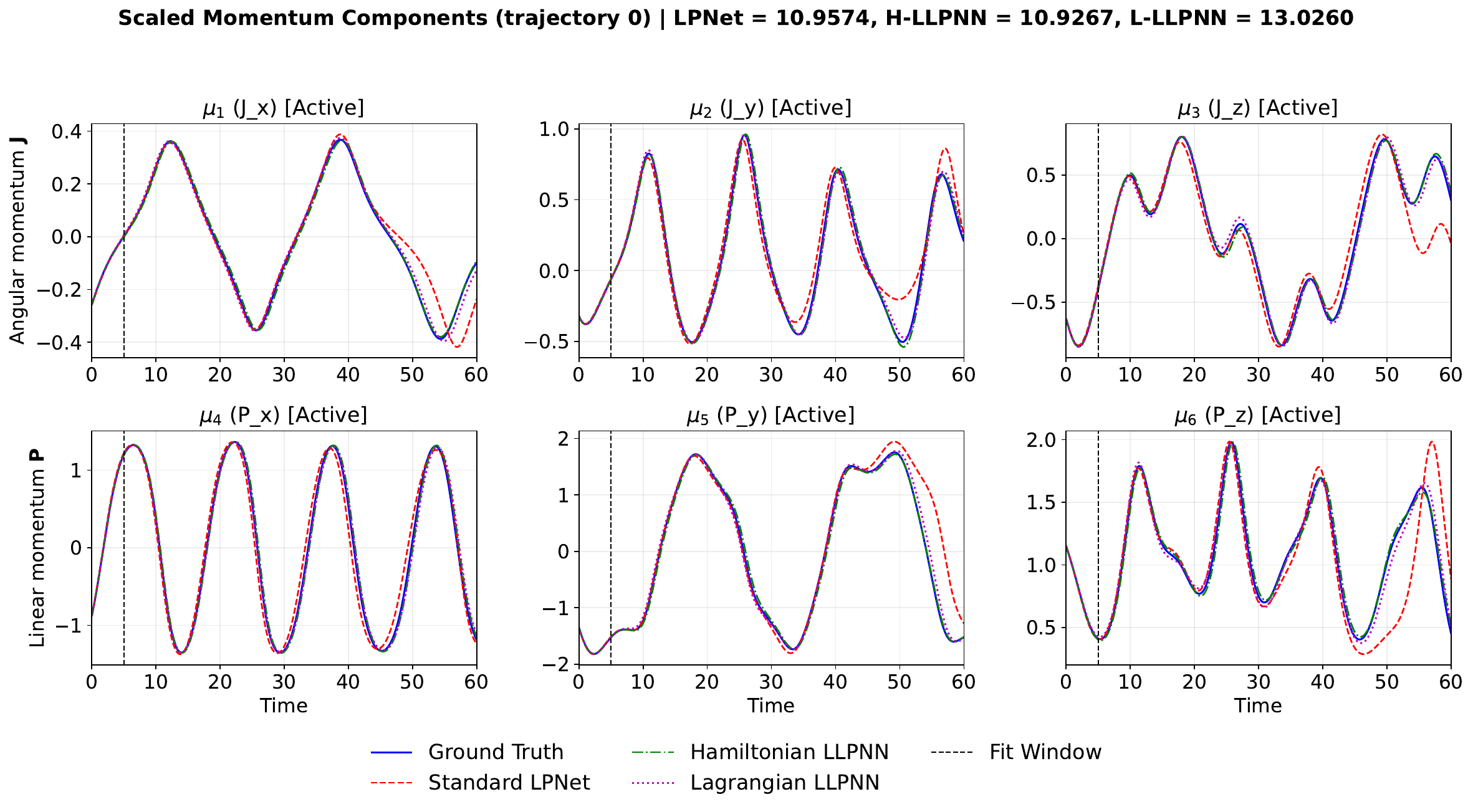}
		\caption{Reconstructed latent body momentum trajectories $\mu = (\mathbf{J}, \mathbf{P}) = (J_x, J_y, J_z, P_x, P_y, P_z)$ for the non-quadratic $SE(3)$ Kirchhoff vehicle across momentum-capable formulations.}
		\label{fig:kirchhoff_mu_components}
	\end{figure}
	
	Figure~\ref{fig:kirchhoff_mae_combined} summarizes generalization performance across $50$ test trajectories with randomized initial conditions. Mean Absolute Error (MAE) is shown separately for latent momentum ($\mu$) and observable velocity ($\xi$) coordinates. The Hamiltonian LLPNN achieves the lowest error growth over long horizons, followed by the Lagrangian LLPNN. On observable velocities, Neural ODE performs comparably to standard LPNet but accumulates error significantly faster than the proposed LLPNN models, while DeepONet diverges rapidly.
	
	\begin{figure}[htbp]
		\centering
		\includegraphics[width=1\textwidth]{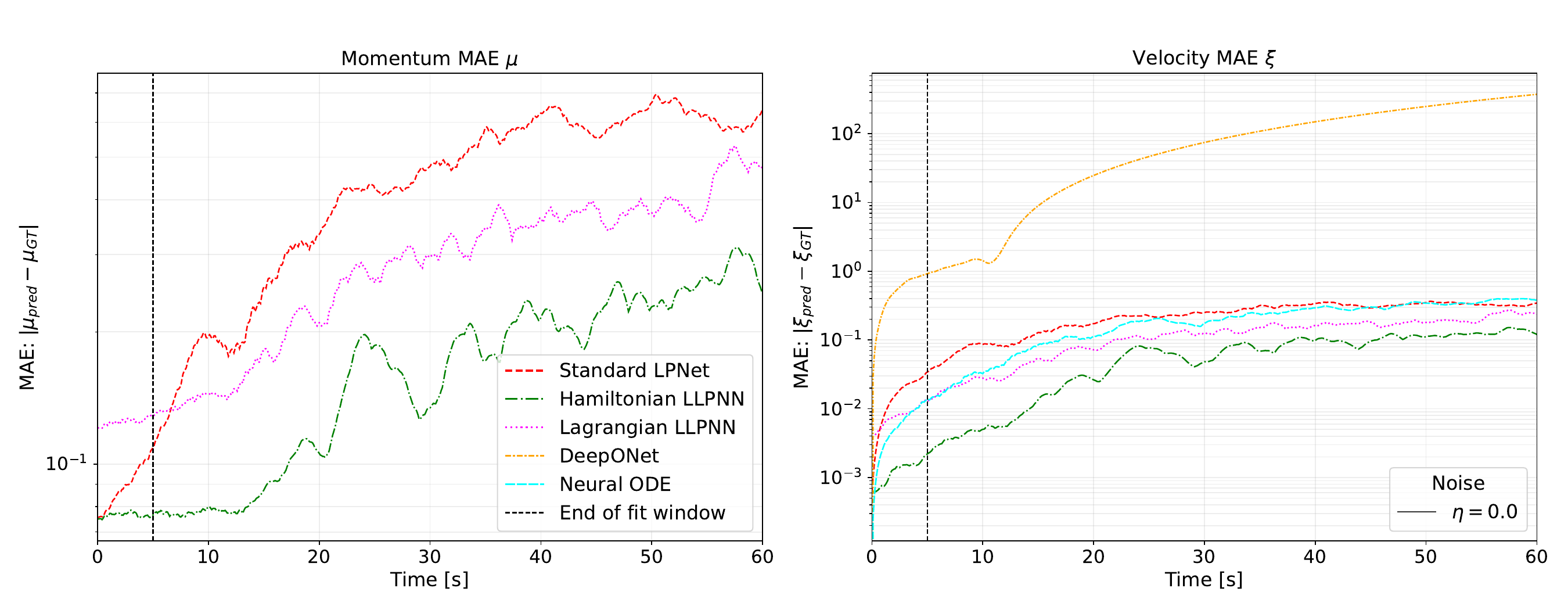}
		\caption{Time evolution of Mean Absolute Error (MAE) for body momentum $\mu$ (left) and body velocity $\xi$ (right), averaged over 50 test trajectories with random initial conditions for $SE(3)$ Kirchhoff dynamics.}
		\label{fig:kirchhoff_mae_combined}
	\end{figure}
	
	Figure~\ref{fig:kirchhoff_Casimir} evaluates conservation properties across $50$ test trajectories. The left and middle panels demonstrate that all Lie--Poisson structure-preserving formulations (Hamiltonian LLPNN, Lagrangian LLPNN, and LPNet) preserve both quadratic Casimir invariants $C_1 = \|\mathbf{P}\|^2$ and $C_2 = \mathbf{J} \cdot \mathbf{P}$ to machine precision. The right panel evaluates total energy conservation error over time. The Hamiltonian LLPNN exhibits minimal energy drift, while the Lagrangian LLPNN maintains long-term energy stability despite a minor initial offset caused by numerical gradients in the inverse Legendre map.
	
	\begin{figure}[htbp]
		\centering
		\includegraphics[width=1\textwidth]{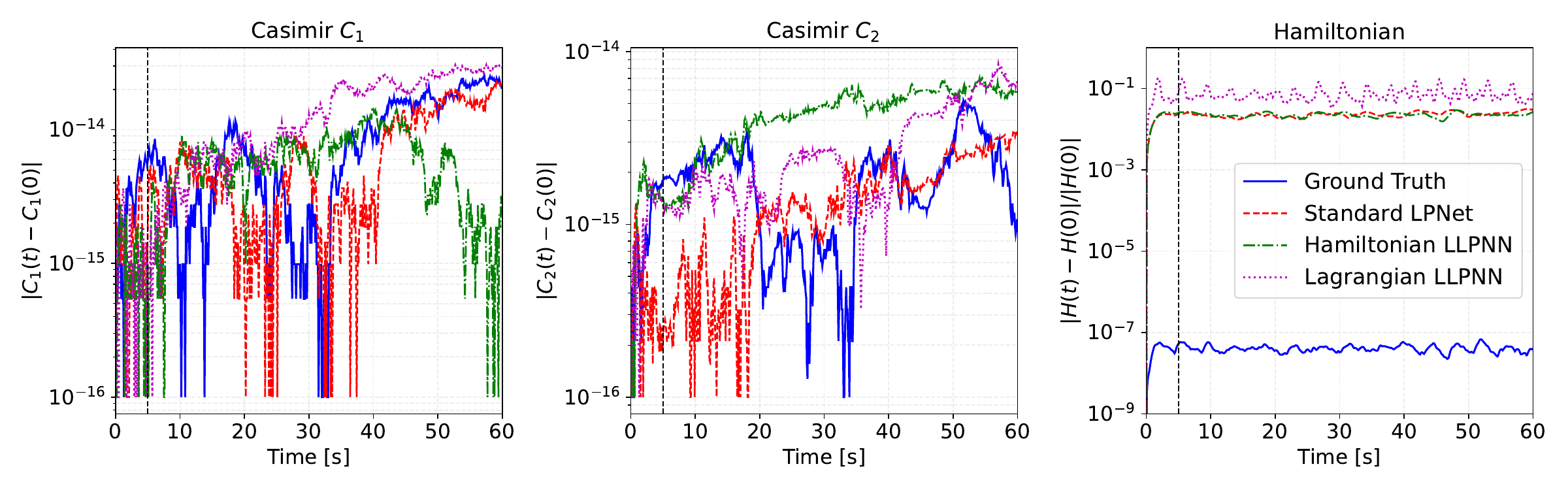}
		\caption{Conservation error metrics for non-quadratic $SE(3)$ Kirchhoff vehicle dynamics averaged over 50 test trajectories. Left and middle panels: Casimir invariant errors ($C_1 = \|\mathbf{P}\|^2$ and $C_2 = \mathbf{J} \cdot \mathbf{P}$), demonstrating machine-precision preservation for all Lie--Poisson methods. Right panel: Mean Absolute Error in energy conservation over time.}
		\label{fig:kirchhoff_Casimir}
	\end{figure}
	
	\subsection{Multi-Vehicle Optimal Control:  $\mathrm{SE}(2)^N$ Dynamics}
	\label{subsec:SE2_drones}
	
	\subsubsection{Background: Optimal Control and Symmetry Reduction}
	
	As a third and crucial benchmark demonstrating the capability of our framework, we consider interacting autonomous agents evolving on a Lie group. Specifically, we study a system of planar autonomous vehicles evolving on $SE(2)$, the special Euclidean group of rotations and translations in the plane. This example demonstrates the performance of our method for degenerate Hamiltonians, a structure typical of optimal-control formulations but rarely encountered in purely mechanical systems. We briefly summarize the underlying geometric framework to make the problem setting clear; detailed derivations can be found in \cite{justh2010extremal,justh2015optimality}.
		
	\paragraph{Pontryagin's Maximum Principle}
	For a controlled dynamical system $\dot{q} = a(q, u, t)$, Pontryagin's Maximum Principle (PMP) converts an optimal control problem into a Hamiltonian system by introducing an auxiliary co-state (or adjoint variable) $p$. The resulting Hamiltonian generates coupled canonical evolution equations for state $q$ and co-state $p$, accompanied by an optimality condition for control input $u$ through $\nabla_u H = \mathbf{0}$. In this sense, PMP plays the same fundamental role in optimal control as Hamilton's equations play in analytical mechanics.
	
	Crucially, co-states are mathematical artifacts generated by the optimization formulation. In practical applications, they are generally unobservable unless full knowledge of the optimal control model is available; they are completely absent in our observational setting, where only configuration $g$ or velocity $\xi$ on trajectories are recorded. Consequently, neither the explicit control law nor co-state dynamics are assumed known during learning.
	
	\paragraph{Symmetry Reduction}
	When agents evolve on a Lie group $G$, the configuration manifold of an $N$-agent ensemble is $G^N$. Direct application of PMP yields dynamics on the full state/co-state space, whose dimension scales rapidly with $N$. However, when the underlying optimal control problem exhibits Lie-group invariance, PMP dynamics can be reduced via geometric symmetry reduction theory \cite{krishnaprasad1993optimal,Ho2011_pII,marsden2013introduction}.
	
	For general interacting systems, symmetry reduction yields momentum variables coupled to relative spatial configurations. A key result of \cite{justh2010extremal,justh2015optimality} establishes that for the class of optimal control problems considered here, the reduced dynamics close entirely on symmetry-reduced momenta. Consequently, the reduced evolution is governed by a Lie--Poisson system on $(\mathfrak{g}^*)^N$, eliminating the need to explicitly track relative agent orientations.
	
	This reduction offers substantial practical savings: rather than learning dynamics on a high-dimensional state/co-state space, the evolution simplifies to a lower-dimensional Lie--Poisson system structured by group symmetries. Throughout this study, we assume the symmetry class of the configuration group is known, while control laws, cost functionals, and co-state trajectories remain entirely unknown.
	\paragraph{Group Configuration and Inactive Momentum Variables}
	For each vehicle $k \in \{1, \dots, N\}$, the planar configuration matrix $g_k \in \mathrm{SE}(2)$ and its body-fixed momentum vector $\mu_k \in \mathfrak{se}(2)^* \cong \mathbb{R}^3$ are given by:
	\begin{equation}
		g_k = \begin{pmatrix} \cos\theta_k & -\sin\theta_k & x_k \\ \sin\theta_k & \cos\theta_k & y_k \\ 0 & 0 & 1 \end{pmatrix}, \quad \mu_k = \begin{pmatrix} \mu_{k1} \\ \mu_{k2} \\ \mu_{k3} \end{pmatrix},
	\end{equation}
	where $\mu_{k1}$ and $\mu_{k2}$ denote longitudinal and lateral linear body momenta, respectively, and $\mu_{k3}$ denotes angular body momentum. The total momentum state of the $N$-vehicle ensemble is the concatenated vector $\mu = (\mu_1^\top, \dots, \mu_N^\top)^\top \in \mathbb{R}^{3N}$.
	
	The Poisson structure matrix $\Lambda(\mu)$ for this system is block-diagonal:
	\begin{equation}
		\Lambda(\mu) = \operatorname{diag}\bigl(\Lambda_1(\mu_1), \dots, \Lambda_N(\mu_N)\bigr) ,
	\end{equation}
	where each $3 \times 3$ block $\Lambda_k(\mu_k)$ is given by:
	\begin{equation}
		\Lambda_k(\mu_k) = \begin{pmatrix} 0 & 0 & -\mu_{k2} \\ 0 & 0 & \mu_{k1} \\ \mu_{k2} & -\mu_{k1} & 0 \end{pmatrix}.
	\end{equation}
	Consequently, the dynamics of the $k$-th vehicle satisfy:
	\begin{equation}
		\dot{\mu}_k = \Lambda_k(\mu_k) \nabla_{\mu_k} h(\mu),
		\label{SE2_particle_dynamics}
	\end{equation}
	or, expressed explicitly in scalar components:
	\begin{align}
		\dot{\mu}_{k1} &= -\mu_{k2} \frac{\partial h}{\partial \mu_{k3}}, \\
		\dot{\mu}_{k2} &= \mu_{k1} \frac{\partial h}{\partial \mu_{k3}}, \\
		\dot{\mu}_{k3} &= \mu_{k2} \frac{\partial h}{\partial \mu_{k1}} - \mu_{k1} \frac{\partial h}{\partial \mu_{k2}}.
		\label{SE2_particle_dynamics_components}
	\end{align}
	
	The collective multi-vehicle Hamiltonian $h(\mu)$ governing target ensemble dynamics is defined as \cite{justh2015optimality}:
	\begin{equation}
		h(\mu) = \sum_{k=1}^N \mu_{k1} + \frac{1}{2} \sum_{k,m=1}^N \Psi_{km} \mu_{k3} \mu_{m3},
		\label{H_true} 
	\end{equation}
	where $\Psi \in \mathbb{R}^{N \times N}$ represents a symmetric steering coordination matrix. We consider a fully coupled network topology (the \emph{Democracy} topology). For a three-agent ensemble ($N=3$), the graph Laplacian $B$ and coordination matrix $\Psi$ (evaluated with coupling parameter $\chi = 0.5$) are:
	\begin{equation}
		B = \begin{bmatrix} 2 & -1 & -1 \\ -1 & 2 & -1 \\ -1 & -1 & 2 \end{bmatrix}, 
		\quad 
		\Psi = (I_3 + 2\chi B)^{-1} = (I_3 + B)^{-1} = \begin{bmatrix} \frac{1}{2} & \frac{1}{4} & \frac{1}{4} \\[2pt] \frac{1}{4} & \frac{1}{2} & \frac{1}{4} \\[2pt] \frac{1}{4} & \frac{1}{4} & \frac{1}{2} \end{bmatrix}.
		\label{Psi_def} 
	\end{equation}
	
	A distinguishing property of this Lie--Poisson system is the existence of  passive variables  in momentum space. Because networking terms in $h(\mu)$ depend exclusively on angular momentum components $\mu_{k3}$, translational components $\mu_{k1}$ and $\mu_{k2}$ remain uncoupled from network interactions. Evaluating energy gradients yields Lie algebra velocities $\xi_k = \nabla_{\mu_k} h(\mu) \in \mathfrak{se}(2)$:
	\begin{equation}
		\xi_k = \begin{pmatrix} \xi_{k1} \\ \xi_{k2} \\ \xi_{k3} \end{pmatrix} = \begin{pmatrix} 1.0 \\ 0.0 \\ \sum_{m=1}^N \Psi_{km} \mu_{m3} \end{pmatrix}.
	\end{equation}
	The longitudinal momentum coordinates $\mu_{k1}$ generate uniform, constant forward drive ($\xi_{k1} = 1.0$), while lateral components $\mu_{k2}$ receive zero energetic drive ($\xi_{k2} = 0.0$). Thus, translational terms enter linearly or are absent in the Hamiltonian, isolating active coordination dynamics to the angular momentum subspace.
	\paragraph{Non-Linear Generalized Benchmark Hamiltonian}
	To make the learning task more demanding and represent complex control costs, we introduce a non-quadratic modification to \eqref{H_true}:
	\begin{equation}
		H_{\text{true}}(\mu) = \sum_{k=1}^N \mu_{1k} + \frac{1}{2} \left( \sum_{k,m=1}^3 \Psi_{km} \mu_{k3} \mu_{m3} \right) \left( 1 + B \sum_{k=1}^3 \mu_{k3}^2 \right) \, , 
		\label{H_nonlinear_SE2} 
	\end{equation}
	where $B = 0.01$ in all numerical experiments.

	\subsubsection{Ground truth data generation for training and testing}
	
For training, we generate 200 trajectories, each containing 101 time points. Since the system consists of $N=3$ interacting vehicles on $SE(2)$, the latent momentum variable has dimension $3N=9$. The initial momenta are sampled from the standard normal distribution,
\[
\mu^{\mathrm{train}}_0 \sim \mathcal{N}(0,I_9),
\]
where $I_9$ is the $9\times 9$ identity matrix. For testing, we generate 50 trajectories, each containing 2001 time points, from the same initial momentum distribution,
\[
\mu^{\mathrm{test}}_0 \sim \mathcal{N}(0,I_9).
\]
Just as in the other examples, the time step is fixed at $\Delta t=0.1$ throughout. For each vehicle, the initial orientation angle is sampled independently from the uniform distribution on $[0,2\pi)$, while the initial position is sampled from the standard normal distribution in $\mathbb{R}^2$.
	
	\subsubsection{Robustness to Observational Noise}
	To assess practical applicability under sensory degradation, observation data are corrupted using noise scale parameter $\eta$. For each vehicle, additive noise is added to Lie algebra velocities: $\xi_{k,\text{noisy}} = \xi_{k} + \mathcal{N}(\mathbf{0}, \eta^2 \mathbf{I}_3)$. Concurrently, configuration matrices are perturbed within their body frame via multiplicative intrinsic perturbations:
	\begin{equation}
		g_{k,\text{noisy}} = g_k \cdot \exp(\widehat{\omega}_k), \quad \widehat{\omega}_k = \begin{pmatrix} 0 & -d\theta_k/\sqrt{2} & 0 \\ d\theta_k/\sqrt{2} & 0 & 0 \\ 0 & 0 & 0 \end{pmatrix},
	\end{equation}
	where $d\theta_k \sim \mathcal{N}(0, \eta^2)$ represents stochastic orientation noise. This noise model evaluates the framework's ability to recover true underlying mechanics and hidden initial momenta from degraded observations. We highlight results for $\eta = 0$ (clean data) and $\eta = 0.005$ (noisy data); evaluations across intermediate noise levels $\eta \in [0.001, 0.005]$ yield consistent trends.

	\subsubsection{Experimental Results}
	
	The learning procedure processes trajectory data generated on $\mathrm{SE}(2)^N$ under Democracy coupling \cite{huraka2026structure} (all particles coupled to each other) for $N=3$. Dynamic variable isolation is performed by identifying coordinates with non-zero variance across trajectories, partitioning velocity and momentum states into active and passive components: $\xi = (\xi^{\text{act}}, \xi^{\text{pass}})$ and $\mu = (\mu^{\text{act}}, \mu^{\text{pass}})$.
	
	A global quadratic warm start is executed across all training trajectories to establish an initial quadratic baseline and estimate initial spatial co-states $p_0$. This step solves a constrained linear Karush--Kuhn--Tucker (KKT) system \cite{nocedal2006numerical} to yield a symmetric positive-definite metric matrix $\mathbb{B}$ (subject to $\operatorname{tr}\mathbb{B} = 1$) along with initial spatial momenta $p_0$ per trajectory. As in previous examples, training considers two LLPNN architectures:
	\begin{enumerate}
		\item \textbf{Lagrangian LLPNN:} The residual Lagrangian $\ell(\xi^{\text{act}}) = \frac{1}{2}\xi^{\text{act}} \cdot \mathbb{B}\xi^{\text{act}} + \mathcal{R}_L(\xi^{\text{act}})$ is optimized via gradient descent by matching predicted coadjoint-projected spatial momenta against $\frac{\partial \ell}{\partial \xi^{\text{act}}}$.
		\item \textbf{Hamiltonian LLPNN:} The Hamiltonian $h(\mu)$ is initialized using warm-start inverse metric $\mathbb{K} = \mathbb{B}^{-1}$ and optimized alongside Lie--Poisson flow parameter networks (LPNet) to enforce structure preservation.
	\end{enumerate}
	
	Figure~\ref{fig:xi_components_SE2} compares the reconstructed body velocity components $\xi(t)$ across evaluated methods against the ground truth for a representative trajectory. In this system, only the angular velocity components ($\xi_{k3}$) are active and evolve dynamically, whereas the translational velocity components ($\xi_{k1}, \xi_{k2}$) remain constant passive variables. The Lagrangian LLPNN accurately tracks angular velocity dynamics over a long time horizon ($t=100$), whereas the Hamiltonian LLPNN exhibits minor phase drift after $t \approx 40$. As expected from the discussion in Section~\ref{subsec:difference_Ham_Lagr}, non-structure-preserving baselines (DeepONet and Neural ODE) diverge rapidly from the ground truth.
	
	\begin{figure}[htbp]
		\centering
		\includegraphics[width=1\textwidth]{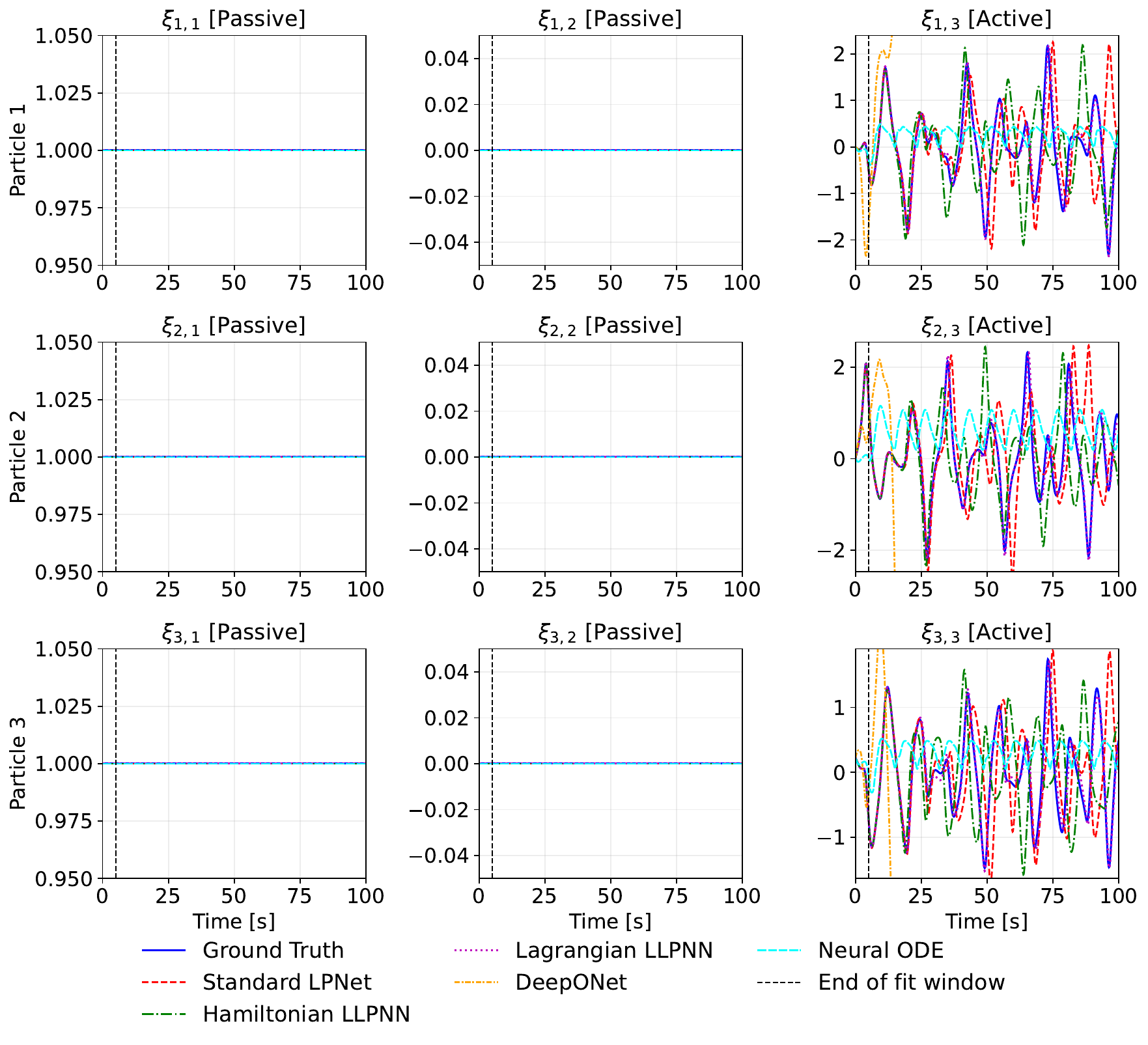}
		\caption{Time evolution of body velocity components $\xi = (\xi_{k1}, \xi_{k2}, \xi_{k3})$ for each vehicle ($N=3$). Passive translational velocities ($\xi_{k1}, \xi_{k2}$) remain constant, while active steering velocities ($\xi_{k3}$) evolve dynamically. Proposed Lagrangian LLPNN (dotted magenta line) maintains high-precision alignment with ground truth (blue solid line) over long integration horizons.}
		\label{fig:xi_components_SE2}
	\end{figure}
	
	Figure~\ref{fig:mu_components_SE2} presents corresponding reconstructed momentum components $\mu(t)$. Unlike velocity space, passive momentum components evolve over time due to geometric cross-coupling in the Lie--Poisson bracket. The Lagrangian LLPNN maintains close agreement with true momentum trajectories throughout the evaluated horizon.
	
	\begin{figure}[htbp]
		\centering
		\includegraphics[width=1\textwidth]{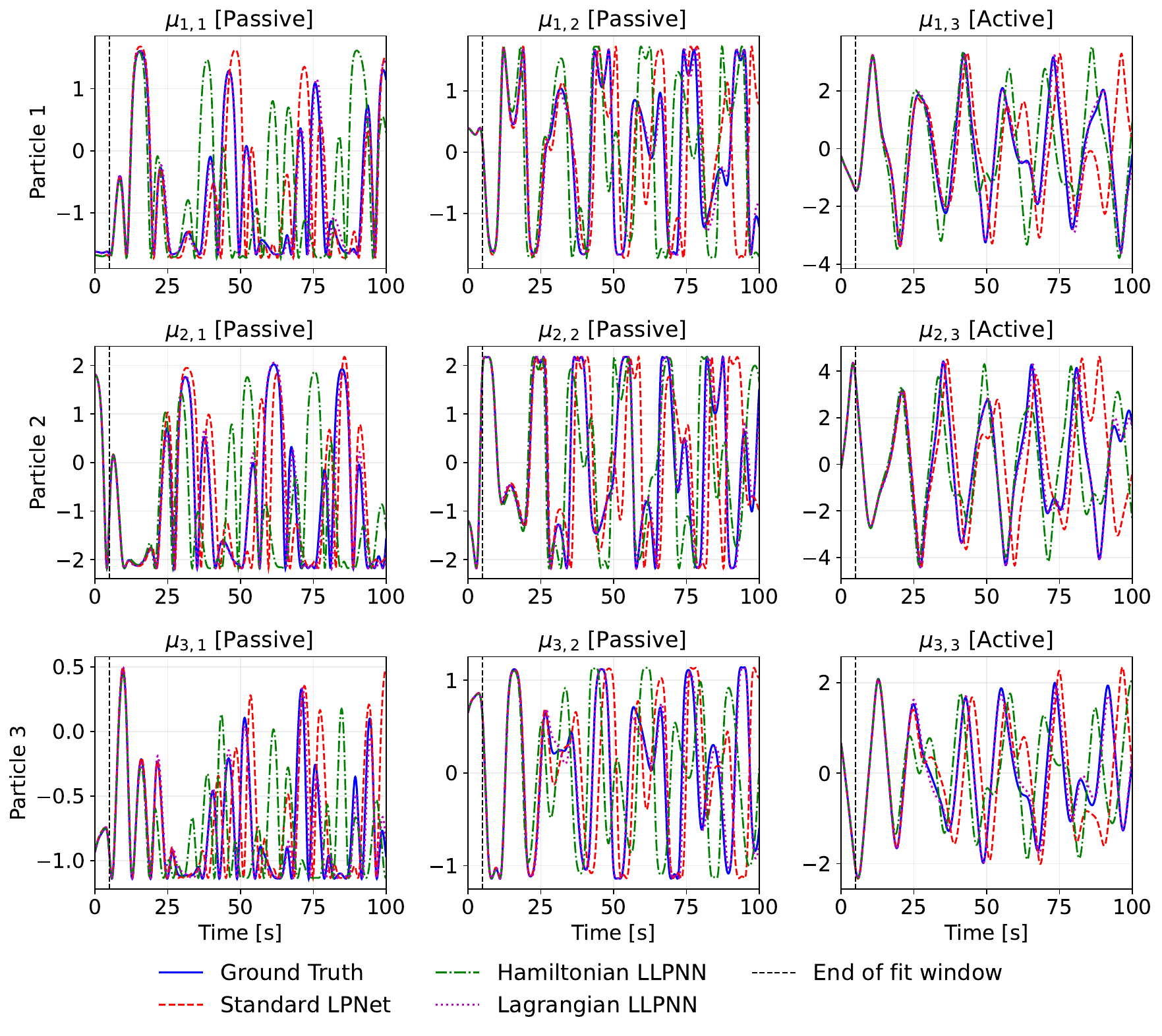}
		\caption{Time evolution of scaled body momentum components $\mu(t)$ for each vehicle ($N=3$). Both active ($\mu_{k3}$) and passive ($\mu_{k1}, \mu_{k2}$) momentum channels evolve dynamically due to Lie--Poisson bracket coupling. The Lagrangian LLPNN (dotted magenta line) closely tracks the ground-truth trajectory across the entire evaluation horizon.} 
		\label{fig:mu_components_SE2}
	\end{figure}
	
	Figure~\ref{fig:se2_individual} illustrates reconstructed 2D spatial trajectories $(x_k(t), y_k(t))$ obtained by integrating predicted velocities for the trajectory shown in Figure~\ref{fig:xi_components_SE2}. The Lagrangian LLPNN reproduces intricate multi-vehicle flight paths over long integration windows ($t=100$) with negligible spatial drift, while standard LPNet exhibits trajectory divergence over shorter time intervals.
	
	\begin{figure}[htbp]
		\centering
		\includegraphics[width=1\textwidth]{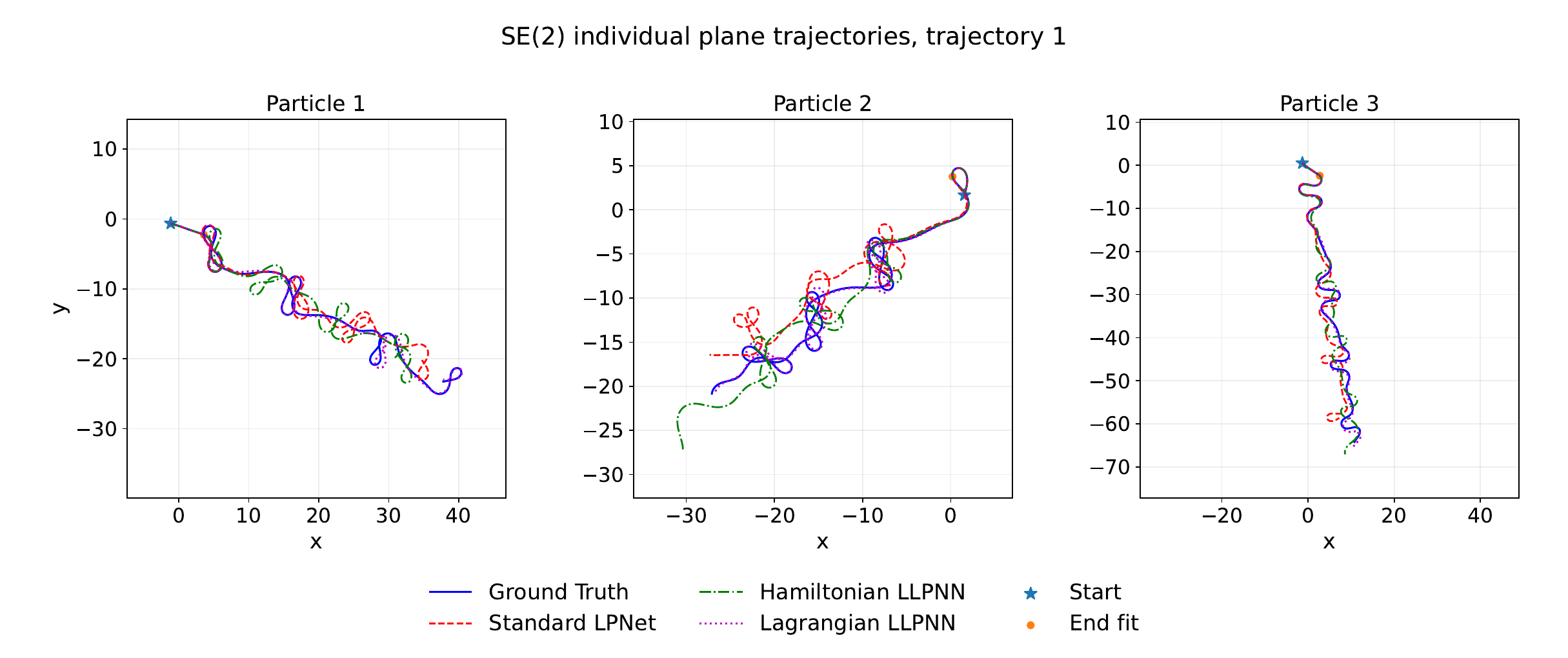}
		\caption{Reconstructed 2D spatial trajectories $(x_k(t), y_k(t))$ for $N=3$ vehicles on the $SE(2)$ plane corresponding to the predictions in Figure~\ref{fig:xi_components_SE2}. The Lagrangian LLPNN (dotted magenta line) remains visually indistinguishable from ground-truth paths (solid blue line) up to $t=100$.}
		\label{fig:se2_individual}
	\end{figure}
	
	To assess overall quality of learning, Figure~\ref{fig:mae_combined_SE2} presents Mean Absolute Error (MAE) for momentum $\mu$ and velocity $\xi$ averaged across $50$ test trajectories with random initial conditions, under both noiseless ($\eta=0$) and noisy ($\eta=0.005$) conditions. On average, the Hamiltonian LLPNN achieves the lowest mean error growth across the ensemble, closely followed by the Lagrangian LLPNN. Non-geometric baselines struggle on this task due to singular velocity-space vector fields induced by Hamiltonian degeneracy.
	
	\begin{figure}[htbp]
		\centering
		\includegraphics[width=1\linewidth]{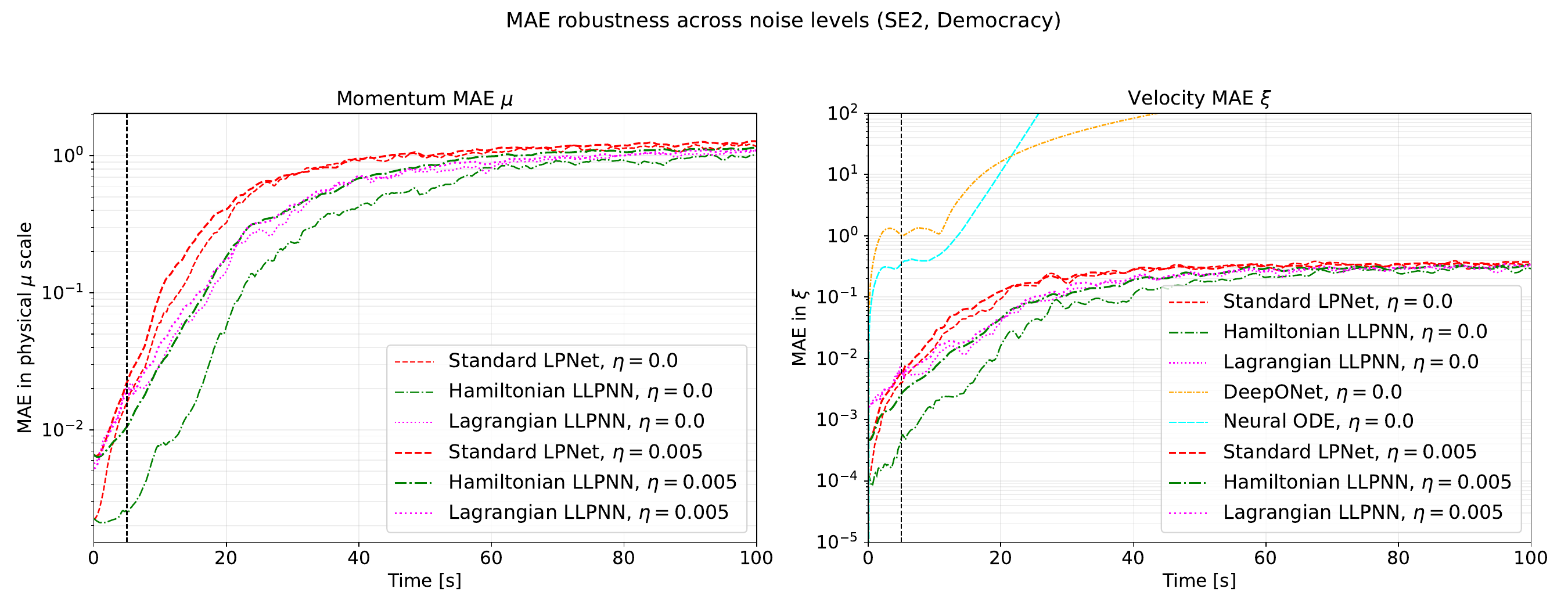}
		\caption{Mean Absolute Error (MAE) for scaled momentum $\mu$ (left) and velocity $\xi$ (right) averaged over 50 test trajectories, evaluated under clean ($\eta=0$) and noisy ($\eta=0.005$) observational conditions. Non-structure-preserving baselines, DeepONet (dashed orange line) and Neural ODEs (dash-dotted cyan line), diverge rapidly over time. On average, the Hamiltonian LLPNN (dashed green line) achieves the highest predictive accuracy across both latent momenta and observable velocities.}
		\label{fig:mae_combined_SE2}
	\end{figure}
	
	For each vehicle $k$, the individual Casimir invariant is given by $C_k = \mu_{k1}^2 + \mu_{k2}^2$. Figure~\ref{fig:casimir_errors_SE2} confirms that all Lie--Poisson methods preserve individual particle Casimirs to machine precision throughout long-horizon integration.
	
	\begin{figure}[htbp]
		\centering
		\includegraphics[width=1\linewidth]{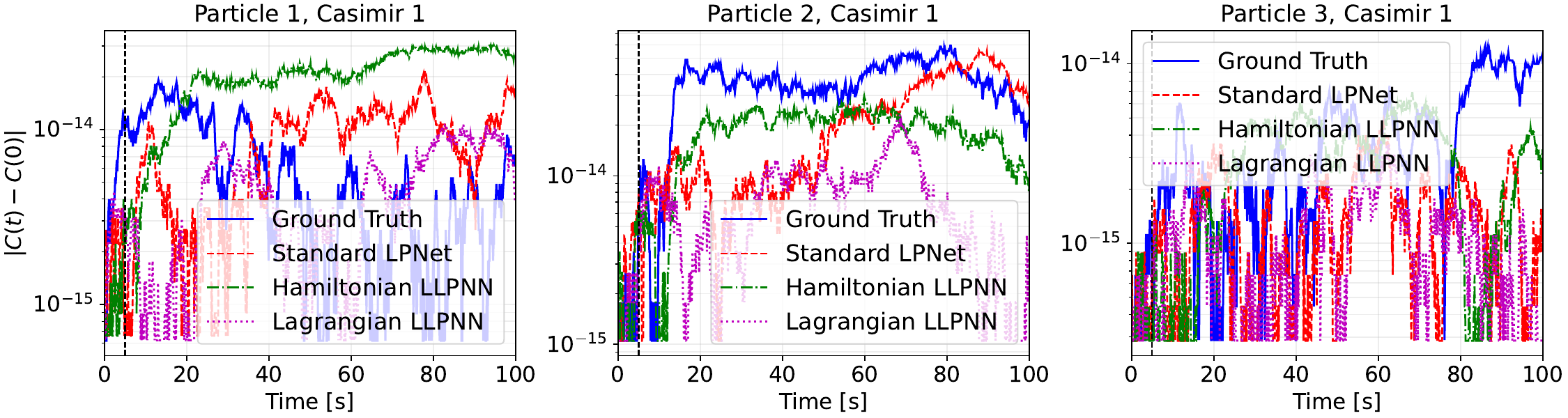}
		\caption{Casimir invariant conservation error $|C_k(t) - C_k(0)|$ evaluated across individual vehicle momentum components, confirming machine-precision preservation for Lie--Poisson formulations.}
		\label{fig:casimir_errors_SE2}
	\end{figure}
	
	Figure~\ref{fig:hamiltonian_drift_SE2} illustrates energy conservation performance for the Hamiltonian LLPNN. Even under non-quadratic, degenerate Hamiltonian dynamics \eqref{H_nonlinear_SE2}, the Hamiltonian LLPNN maintains long-term energy preservation with low relative drift across clean and noisy training regimes. Energy errors for the Lagrangian model are omitted here due to the ill-posedness of standard Legendre transforms on degenerate momentum directions.
	
	\begin{figure}[htbp]
		\centering
		\includegraphics[width=1\linewidth]{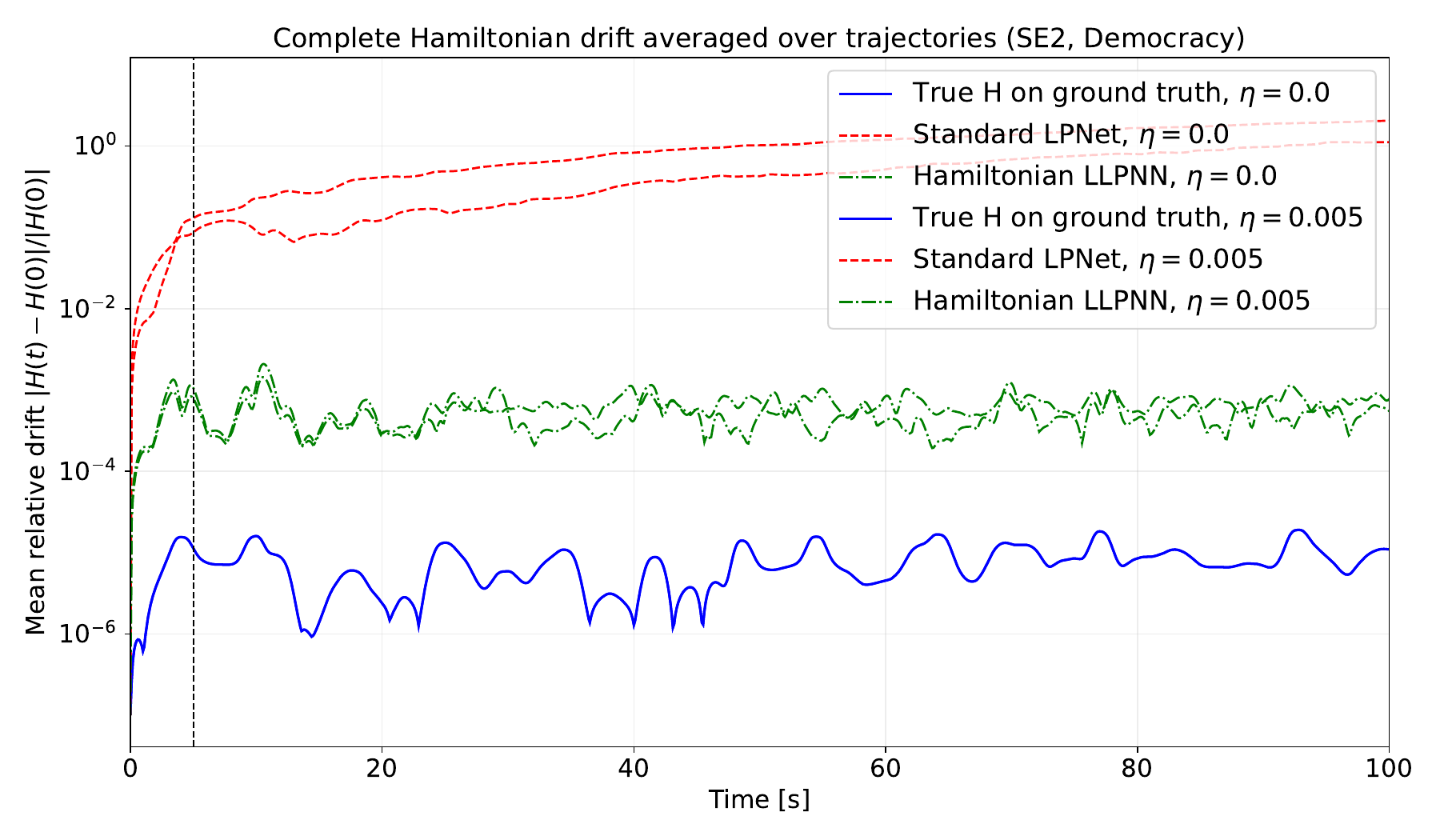}
		\caption{Relative Hamiltonian conservation drift $|H(t) - H(0)| / |H(0)|$ for the Hamiltonian LLPNN under clean ($\eta=0$) and noisy ($\eta=0.005$) conditions.}
		\label{fig:hamiltonian_drift_SE2}
	\end{figure}
	
	Finally, Figure~\ref{fig:hamiltonian_contours} directly compares level sets of learned versus true Hamiltonian functions in active momentum space. The left panel shows contour lines sliced at $\mu_{1,3} = 0.5$, demonstrating excellent agreement between the predicted and true Hamiltonian. The right panel displays absolute error contours, confirming that energy function discrepancies across active momentum regions remain below $10^{-3}$.
	
	\begin{figure}[htbp]
		\centering
		\includegraphics[width=1\linewidth]{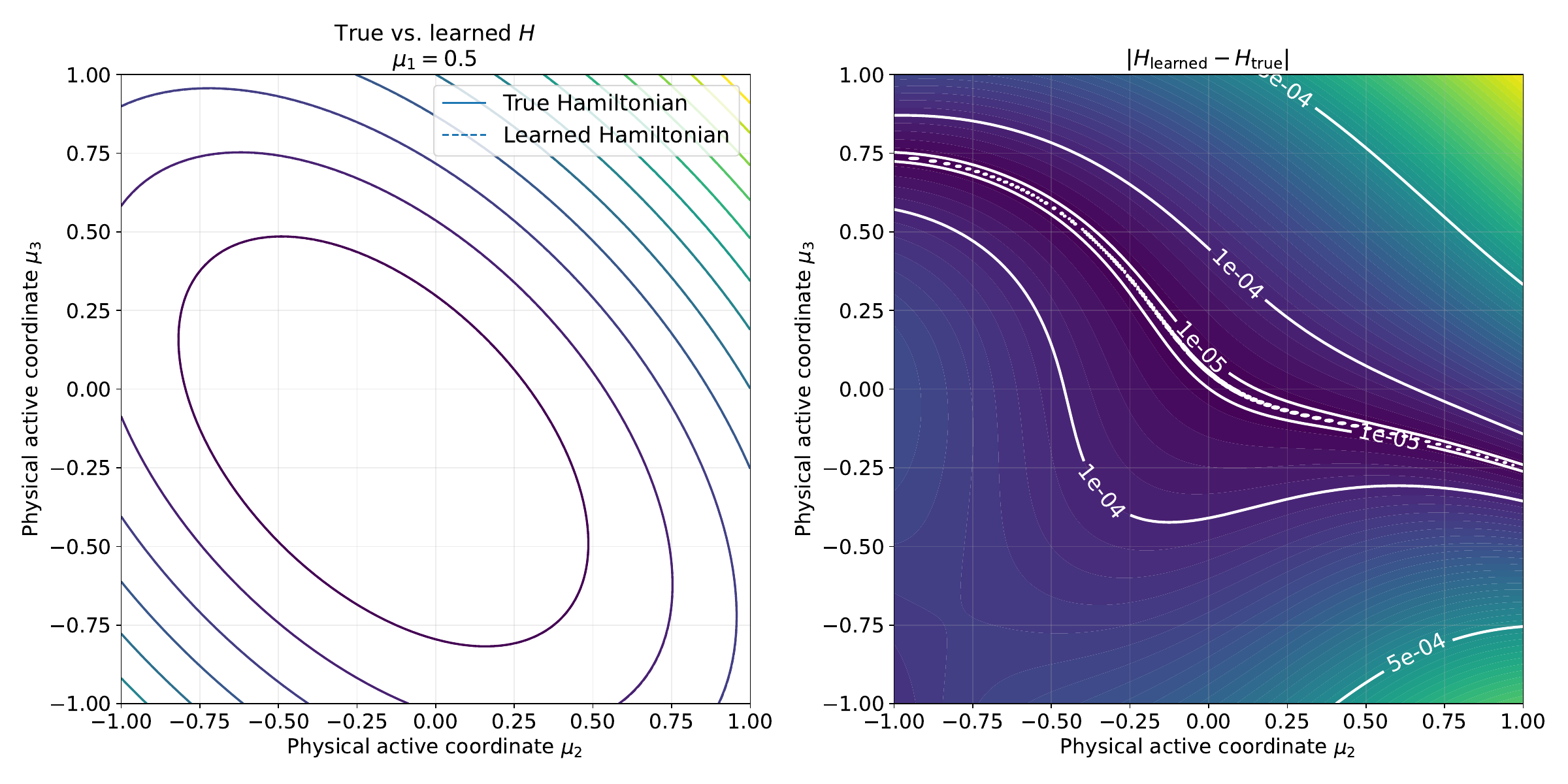}
		\caption{Hamiltonian level-set comparison in active momentum space ($\mu_{23}$ vs. $\mu_{33}$) at slice $\mu_{13} = 0.5$. Left: Overlay of true (solid) and learned (dashed) Hamiltonian level sets. Right: Absolute approximation error $|H_{\text{learned}} - H_{\text{true}}|$, demonstrating uniform accuracy with errors bounded near $\sim 10^{-4}$.}
		\label{fig:hamiltonian_contours}
	\end{figure}
	
	\section{Limitations and Computational Considerations}
	\label{sec:discussion}
	
	\paragraph{Computational efficiency} The proposed LLPNN framework achieves high sample and parameter efficiency by operating on a compact network architecture. With each constituent network comprising three hidden layers of width 32, the parameter footprint remains modest ($\approx 2\,\text{K}$ parameters per network, supplemented by a few parameters specifying the quadratic core for the warm-start initialization). Due to the relatively low-dimensional structure-preserving parameterization, training remained computationally modest. For the Kirchhoff problem on $SE(3)$, a complete training cycle of $20{,}000$ epochs required approximately $400$ s for the Lagrangian LLPNN and approximately $525$ s for the Hamiltonian LLPNN on standard consumer hardware (Apple M2 Max); the timings for the other examples were comparable.
	
	Despite these practical advantages, the framework possesses certain theoretical boundary conditions and practical limitations that warrant discussion.
	
	\paragraph{Limitation 1: Assumption of known symmetry} 
	Our formulation explicitly assumes that the underlying Lie group symmetry $G$ of the physical system is known \emph{a priori}. While this assumption holds for a broad class of conservative mechanical, fluid, and rigid-body control systems, it may be restrictive in certain optimal control settings. For instance, multi-agent vehicles actively searching for a target fixed in space exhibit symmetry-breaking cost functions \cite{justh2010extremal,justh2015optimality}, preventing complete symmetry reduction. Extending the LLPNN framework to account for such symmetry-breaking dynamics \cite{bloch2017optimal} remains an important open problem.
	
	\paragraph{Limitation 2: Extension to general Poisson manifolds} 
	While LLPNN is engineered specifically for Lie--Poisson dynamics, one may naturally inquire about its applicability to general Poisson systems. At present, the method cannot be directly applied to arbitrary Poisson manifolds because generic Poisson brackets lack the explicit Lie-algebraic coadjoint reconstruction formula \eqref{eq:coadjoint_reconstruction} that forms the core of our latent variable reconstruction procedure. Developing a generalized geometric momentum-fitting technique for non-Lie Poisson manifolds represents an intriguing theoretical challenge.
	
	\section{Conclusion and Future Work}
	\label{sec:conclusion}
	
	In this paper, we presented a physics-informed geometric learning framework designed to infer Lie--Poisson Hamiltonian dynamics directly from observable data, without requiring measurements of latent momentum variables. The core paradigm relies on the structural assumption that the system possesses underlying Lie--Poisson dynamics generated by a Hamiltonian. Crucially, our framework explicitly accommodates degenerate Hamiltonians, which lack dual Lagrangian representations or standard Euler--Poincar\'e formulations in observable velocity space. For such systems, the observable variables need not form a closed
	dynamical system and may not admit an autonomous evolution equation.
	LLPNNs circumvent this difficulty by reconstructing the latent
	Lie--Poisson dynamics through Noether invariants and coadjoint
	reconstruction. By embedding the Lie-algebraic structure directly into the neural network architecture, our framework guarantees exact structural preservation of Lie--Poisson brackets and achieves machine-precision conservation of Casimir invariants.
	
	We validated our approach across three challenging benchmark systems: generalized rigid-body motion on $\mathrm{SO}(3)$, a Kirchhoff underwater vehicle on $\mathrm{SE}(3)$, and an $N$-agent robotic swarm on $\mathrm{SE}(2)^N$, all of which exhibit complex non-linear behavior. Numerical experiments demonstrate that respecting the intrinsic geometric structure of the Poisson manifold yields superior long-term energy conservation and trajectory prediction accuracy compared to unconstrained baseline architectures. Using observable data alone, our framework successfully recovers full dynamical information and enables accurate future forecasting for both observable and latent variables, even under degenerate Hamiltonian conditions.
	
	The proposed framework reconstructs latent momentum trajectories through
	Noether invariants and the associated coadjoint action. Consequently,
	successful reconstruction requires that the observable variables contain
	sufficient information to determine the conserved quantity $p_0$ over a
	short fitting window. While this condition is satisfied for the examples
	considered in this paper, more complex observation operators may require
	longer fitting windows or additional measurements. Understanding
	observability properties of Lie--Poisson systems in the context of latent
	state reconstruction remains an open question, and an interesting direction for further study. 
	
	Several promising avenues exist for future research. First, extending the framework to model non-conservative effects---such as Rayleigh dissipation functions and external fluid disturbances---via structure-preserving variational integrators \cite{eldred2025variational} will enable accurate modeling in dissipative physical environments. Second, incorporating nonholonomic constraints into the Lie--Poisson formulation using structure-preserving integrators \cite{beckers2026structure} will generalize the framework to ground and aerial robotic platforms. Finally, extending the method to accommodate symmetry-breaking cost functions \cite{bloch2017optimal} represents both an important practical capability and a rich theoretical avenue for geometric machine learning.
	
	\section*{Acknowledgements}
	
	The author acknowledges fruitful and inspiring discussions with Drs. C. Eldred, F. Gay-Balmaz, A. Gruber, A. Sinclair, I. Tezaur, and D. V. Zenkov. This work was supported in part by the Shelby Foundation at the University of Alabama.
	
	
	\section*{CRediT Authorship Contribution Statement}
VP: Conceptualization, Methodology, Formal analysis, Software, Validation, Investigation, Data curation, Writing -- original draft, Writing -- review and editing, Visualization, Project administration.
	
	\section*{Declaration of Competing Interest}
	The author declares that he has no known competing financial interests or personal relationships that could have appeared to influence the work reported in this paper.
	
	\section*{Data Availability Statement}
	The source code, simulation scripts, and dataset generation routines for the Lie--Poisson Neural Network (LLPNN) architectures presented in this study are publicly available at \url{https://github.com/vputkaradze/LLPNNs}. All benchmark data can be reproduced using the provided codebase.
	
	\section*{Declaration of Generative AI and AI-Assisted Technologies}
	During the preparation of this work, the author used \emph{Gemini Pro}
	and \emph{ChatGPT} to assist with code development, improve code
	readability, and perform language editing of the manuscript. The author
	reviewed and edited all generated material and verified the accuracy of
	the mathematical, scientific, and technical content. The author takes
	full responsibility for the content of this publication.
	
	\bibliographystyle{unsrt}
	\bibliography{bibliography}

	\appendix 
	\section{Lie Groups and their actions} 
	\label{app:Lie_actions} 
	\subsection{General definitions} 
	In this short exposition, we closely follow the notation and definitions of \cite{MaRa2013}. A Lie group is a group $G$ that is also a smooth manifold, such that the group multiplication $(g,h)\mapsto gh$ and inversion $g\mapsto g^{-1}$ are smooth maps. The group structure satisfies the following standard properties:
	\begin{enumerate}
		\item There exists an identity element $e \in G$ such that $e g = g e = g$ for any $g \in G$, 
		\item For any $g,h \in G$, the product satisfies $g h \in G$, 
		\item For any $g \in G$, there exists an inverse element $g^{-1} \in G$ such that $g g^{-1} = g^{-1} g = e$. 
	\end{enumerate}
	As shown below, the tangent space at the identity is naturally endowed with a commutator bracket. To define this bracket, we first introduce the relevant group actions.
	
	\paragraph{Adjoint Group Actions} 
	Suppose we have smooth curves $g(s) \in G$ and $h(t) \in G$ passing through the identity $e$ at parameter zero, such that $g(0) = h(0) = e$. Let their velocity vectors at the identity be denoted by $g'(0) = \xi$ and $h'(0) = \eta$, which are elements of the tangent space $T_e G$. 
	
	\paragraph{The $\operatorname{AD}$ Operator} We define the conjugation map (inner automorphism) of $G$ by a fixed element $g \in G$ as: 
	\begin{equation}
		\operatorname{AD}_g h = g h g^{-1} \, . 
		\label{AD_def}
	\end{equation}
	
	\paragraph{The $\operatorname{Ad}$ Operator} Next, we define the Adjoint action of the group on the Lie algebra. By keeping $g$ fixed and differentiating \eqref{AD_def} with respect to the parameter $t$ along the curve $h(t)$, we obtain: 
	\begin{equation}
		\operatorname{Ad}_g \eta = \left. \frac{d}{dt} \left( g h(t) g^{-1} \right) \right|_{t=0} = 
		g \eta g^{-1} \, , \quad \text{where } \eta := h'(0) \in T_e G \, . 
		\label{Ad_def}
	\end{equation}
	This map, $\operatorname{Ad}_g: T_e G \to T_e G$, represents a linear representation of the group on its tangent space at the identity.
	
	\paragraph{The $\operatorname{ad}$ Operator} Now, we let $g = g(s)$ vary along its curve while keeping the algebra element $\eta$ fixed. Differentiating $\operatorname{Ad}_{g(s)} \eta$ with respect to $s$ at $s = 0$ yields the infinitesimal adjoint action, denoted by $\operatorname{ad}$: 
	\begin{equation}
		\operatorname{ad}_\xi \eta = \left. \frac{d}{ds} \left( \operatorname{Ad}_{g(s)} \eta \right) \right|_{s=0} = 
		\xi \eta - \eta \xi = [ \xi, \eta ] \, , \quad \text{where } \xi := g'(0) \in T_e G \, . 
		\label{ad_def}
	\end{equation}
	The map $\operatorname{ad}_\xi \eta$ naturally introduces the Lie bracket commutator to the tangent space at the identity. 
	
	\begin{definition}[Lie Algebra]
		The tangent space to the Lie group $G$ at the identity $g=e$ is defined as the Lie algebra $\mathfrak{g} \cong T_e G$ of the Lie group $G$, equipped with the Lie bracket $[\cdot, \cdot]$ defined in \eqref{ad_def}. 
	\end{definition}
	
	\paragraph{Coadjoint Actions}
	To study the dynamics of momenta, we transition to the dual space. Let $\langle \cdot, \cdot \rangle : \mathfrak{g}^* \times \mathfrak{g} \to \mathbb{R}$ denote the natural pairing between the Lie algebra $\mathfrak{g}$ and its dual space $\mathfrak{g}^*$. Following \cite{MaRa2013}, we define the dual operator $\operatorname{Ad}^*_g : \mathfrak{g}^* \to \mathfrak{g}^*$ as the algebraic dual of $\operatorname{Ad}_g$, which satisfies:
	\begin{equation}
		\left\langle \operatorname{Ad}^*_g \mu, \alpha \right\rangle = 
		\left\langle \mu, \operatorname{Ad}_g \alpha \right\rangle 
		\label{Ad_star_def} 
	\end{equation}
	for any $\mu \in \mathfrak{g}^*$ and $\alpha \in \mathfrak{g}$. Since the assignment $g \mapsto \operatorname{Ad}^*_g$ yields a right action, the left coadjoint action of $G$ on $\mathfrak{g}^*$ is defined using the group inverse:
	\begin{equation}
		(g, \mu) \mapsto \operatorname{Ad}^*_{g^{-1}} \mu \,.
		\label{left_coadjoint_action}
	\end{equation}
	Differentiating this representation yields the infinitesimal coadjoint action $\operatorname{ad}^*_\xi$ for any $\xi \in \mathfrak{g}$ and $\mu \in \mathfrak{g}^*$: 
	\begin{equation}
		\left\langle \operatorname{ad}^*_\xi \mu, \alpha \right\rangle = 
		\left\langle \mu, \operatorname{ad}_\xi \alpha \right\rangle = \left\langle \mu, [\xi, \alpha] \right\rangle . 
		\label{ad_star_def} 
	\end{equation}
	Calculating these operators ($\operatorname{AD}$, $\operatorname{Ad}$, $\operatorname{ad}$, $\operatorname{Ad}^*$, $\operatorname{ad}^*$) for concrete physical groups simplifies many aspects of geometric mechanics. For detailed derivations of these actions on common physical Lie groups, we refer the reader to \cite{Ho2011_pII}.
	\subsection{The Special Orthogonal Group $\mathrm{SO}(3)$}
	\label{subsec:so3_actions}
	
	The Special Orthogonal group in three dimensions is defined as:
	\begin{equation}
		\mathrm{SO}(3) = \{ R \in \mathbb{R}^{3 \times 3} \mid R^T R = \mathbb{I}_3 \, , \, \det(R) = 1 \} \,.
	\end{equation}
	The Lie algebra $\mathfrak{so}(3)$ consists of $3 \times 3$ skew-symmetric matrices. We identify $\mathfrak{so}(3)$ with $\mathbb{R}^3$ via the isomorphic ``hat'' map $\hat{\cdot}: \mathbb{R}^3 \to \mathfrak{so}(3)$:
	\begin{equation}
		\boldsymbol{\omega} = \begin{pmatrix} \omega_1 \\ \omega_2 \\ \omega_3 \end{pmatrix} \ \mapsto \ \hat{\boldsymbol{\omega}} = \begin{pmatrix} 0 & -\omega_3 & \omega_2 \\ \omega_3 & 0 & -\omega_1 \\ -\omega_2 & \omega_1 & 0 \end{pmatrix} \,,
	\end{equation}
	which satisfies $\hat{\boldsymbol{\omega}} \mathbf{x} = \boldsymbol{\omega} \times \mathbf{x}$ for any $\mathbf{x} \in \mathbb{R}^3$. We equip $\mathfrak{so}(3)$ with the standard inner product matching the Killing form scaling: $\langle \hat{\boldsymbol{\omega}}, \hat{\boldsymbol{\eta}} \rangle = \frac{1}{2} \operatorname{Tr}(\hat{\boldsymbol{\omega}}^T \hat{\boldsymbol{\eta}}) = \boldsymbol{\omega} \cdot \boldsymbol{\eta}$.
	
	\paragraph{Adjoint Operators}
	Let $R \in \mathrm{SO}(3)$ and $\boldsymbol{\omega}, \boldsymbol{\eta} \in \mathbb{R}^3$.
	\begin{itemize}
		\item \textbf{Conjugation ($\operatorname{AD}$):} For another group element $Q \in \mathrm{SO}(3)$, the conjugation action of $R$ is explicitly given by:
		\begin{equation}
			\operatorname{AD}_R Q = R Q R^T \,.
		\end{equation}
		Geometrically, if $Q$ represents a rotation by an angle $\theta$ about an axis $\mathbf{n}$, then $\operatorname{AD}_R Q$ represents a rotation by the same angle $\theta$ about the rotated axis $R\mathbf{n}$.
		\item \textbf{Adjoint Map ($\operatorname{Ad}$):} By identifying $\mathfrak{so}(3)$ with $\mathbb{R}^3$, the matrix conjugation $R \hat{\boldsymbol{\omega}} R^T = \widehat{R\boldsymbol{\omega}}$ yields:
		\begin{equation}
			\operatorname{Ad}_R \boldsymbol{\omega} = R \boldsymbol{\omega} \,.
			\label{eq:so3_Ad}
		\end{equation}
		\item \textbf{Infinitesimal Adjoint ($\operatorname{ad}$):} The Lie bracket in $\mathfrak{so}(3)$ corresponds to the standard cross product:
		\begin{equation}
			\operatorname{ad}_{\boldsymbol{\omega}} \boldsymbol{\eta} = [\hat{\boldsymbol{\omega}}, \hat{\boldsymbol{\eta}}] \ \Leftrightarrow \ \operatorname{ad}_{\boldsymbol{\omega}} \boldsymbol{\eta} = \boldsymbol{\omega} \times \boldsymbol{\eta} \,.
		\end{equation}
	\end{itemize}
	
	\paragraph{Coadjoint Operators}
	The dual space $\mathfrak{so}(3)^*$ is identified with $\mathbb{R}^3$. Let $\boldsymbol{\mu} \in \mathfrak{so}(3)^*$ represent the momentum vector.
	\begin{itemize}
		\item \textbf{Dual Adjoint Operator ($\operatorname{Ad}^*$):} Defined by the pairing $\langle \operatorname{Ad}^*_R \boldsymbol{\mu}, \boldsymbol{\omega} \rangle = \langle \boldsymbol{\mu}, \operatorname{Ad}_R \boldsymbol{\omega} \rangle$, we have:
		\begin{equation}
			\operatorname{Ad}^*_R \boldsymbol{\mu} = R^T \boldsymbol{\mu} \,.
		\end{equation}
		The corresponding left coadjoint action of $G$ on $\mathfrak{g}^*$ is given by $\operatorname{Ad}^*_{R^{-1}} \boldsymbol{\mu} = R \boldsymbol{\mu}$.
		\item \textbf{Infinitesimal Coadjoint ($\operatorname{ad}^*$):} Using the triple product identity $\boldsymbol{\mu} \cdot (\boldsymbol{\omega} \times \boldsymbol{\eta}) = (\boldsymbol{\mu} \times \boldsymbol{\omega}) \cdot \boldsymbol{\eta}$, the infinitesimal coadjoint action is:
		\begin{equation}
			\operatorname{ad}^*_{\boldsymbol{\omega}} \boldsymbol{\mu} = \boldsymbol{\mu} \times \boldsymbol{\omega} = -\boldsymbol{\omega} \times \boldsymbol{\mu} \,.
		\end{equation}
	\end{itemize}

	\subsection{The Special Euclidean Group $\mathrm{SE}(2)$}
	\label{subsec:se2_actions}
	
	The Special Euclidean group in two dimensions represents planar rigid body motions:
	\begin{equation}
		\mathrm{SE}(2) = \left\{ g = (R_\theta, \mathbf{x}) = \begin{pmatrix} R_\theta & \mathbf{x} \\ \mathbf{0}^T & 1 \end{pmatrix} \ \middle|\ R_\theta \in \mathrm{SO}(2), \ \mathbf{x} \in \mathbb{R}^2 \right\} \,,
	\end{equation}
	where $R_\theta = \begin{pmatrix} \cos\theta & -\sin\theta \\ \sin\theta & \cos\theta \end{pmatrix}$ and $\mathbf{x} = \begin{pmatrix} x \\ y \end{pmatrix}$. 
	The Lie algebra $\mathfrak{se}(2)$ is isomorphic to $\mathbb{R}^3$, with elements representing planar twist velocities $\xi = (\omega, \mathbf{v}) \in \mathbb{R} \times \mathbb{R}^2$. The matrix representation is:
	\begin{equation}
		\hat{\xi} = \begin{pmatrix} \omega \mathbb{J} & \mathbf{v} \\ \mathbf{0}^T & 0 \end{pmatrix} \,, \quad \text{where } \mathbb{J} = \begin{pmatrix} 0 & -1 \\ 1 & 0 \end{pmatrix} \,.
	\end{equation}
	The pairing between dual algebra element $\mu = (M, \mathbf{p}) \in \mathfrak{se}(2)^*$ and $\xi = (\omega, \mathbf{v}) \in \mathfrak{se}(2)$ is $\langle \mu, \xi \rangle = M\omega + \mathbf{p} \cdot \mathbf{v}$.
	
	\paragraph{Adjoint Operators}
	Let $g = (R_\theta, \mathbf{x}) \in \mathrm{SE}(2)$ and $\eta = (\omega, \mathbf{v}) \in \mathfrak{se}(2)$.
	\begin{itemize}
		\item \textbf{Conjugation ($\operatorname{AD}$):} For another group element $h = (R_\phi, \mathbf{y}) \in \mathrm{SE}(2)$, conjugation yields:
		\begin{equation}
			\operatorname{AD}_g h = g h g^{-1} = \left( R_\phi, \, R_\theta \mathbf{y} + (\mathbb{I}_2 - R_\phi)\mathbf{x} \right) \,.
			\label{eq:se2_AD}
		\end{equation}
		Because $\mathrm{SO}(2)$ is abelian, the rotational component $R_\theta R_\phi R_\theta^T$ simplifies directly to $R_\phi$.
		\item \textbf{Adjoint Map ($\operatorname{Ad}$):} Differentiating \eqref{eq:se2_AD} along a curve $h(t) = (R_{t\omega}, \mathbf{y}(t))$ passing through the identity with velocity $\eta = (\omega, \mathbf{v})$ at $t=0$ yields:
		\begin{equation}
			\operatorname{Ad}_g \eta = \left( \omega, \, R_\theta\mathbf{v} - \omega \mathbb{J}\mathbf{x} \right) \,.
		\end{equation}
		\item \textbf{Infinitesimal Adjoint ($\operatorname{ad}$):} For two algebra elements $\xi = (\omega_1, \mathbf{v}_1)$ and $\eta = (\omega_2, \mathbf{v}_2)$, the Lie bracket is:
		\begin{equation}
			\operatorname{ad}_{\xi} \eta = [\hat{\xi}, \hat{\eta}] \ \Leftrightarrow \ \operatorname{ad}_{\xi} \eta = \left( 0, \, \omega_1 \mathbb{J}\mathbf{v}_2 - \omega_2 \mathbb{J}\mathbf{v}_1 \right) \,.
		\end{equation}
	\end{itemize}
	
	\paragraph{Coadjoint Operators}
	Let $\mu = (M, \mathbf{p}) \in \mathfrak{se}(2)^*$ and $\xi = (\omega, \mathbf{v}) \in \mathfrak{se}(2)$.
	\begin{itemize}
		\item \textbf{Dual Adjoint Operator ($\operatorname{Ad}^*$):} Computing the dual operator via the pairing relation yields:
		\begin{equation}
			\operatorname{Ad}^*_g \mu = \left( M - (\mathbf{x} \times \mathbf{p})\cdot \mathbf{e}_3, \, R_\theta^T \mathbf{p} \right) \,,
		\end{equation}
		where $(\mathbf{x} \times \mathbf{p})\cdot \mathbf{e}_3 := \mathbf{p} \cdot \mathbb{J}\mathbf{x} = x p_y - y p_x$ is the 2D scalar cross product.
		\item \textbf{Infinitesimal Coadjoint ($\operatorname{ad}^*$):} Under the duality pairing, we find:
		\begin{equation}
			\operatorname{ad}^*_{\xi} \mu = \left( -(\mathbf{v} \times \mathbf{p})\cdot \mathbf{e}_3, \, -\omega \mathbb{J}\mathbf{p} \right) \,,
		\end{equation}
		where $(\mathbf{v} \times \mathbf{p})\cdot\mathbf{e}_3 = v_x p_y - v_y p_x$.
	\end{itemize}

	\subsection{The Special Euclidean Group $\mathrm{SE}(3)$}
	\label{subsec:se3_actions}
	
	The Special Euclidean group in three dimensions describes general spatial motions:
	\begin{equation}
		\mathrm{SE}(3) = \left\{ g = (R, \mathbf{x}) = \begin{pmatrix} R & \mathbf{x} \\ \mathbf{0}^T & 1 \end{pmatrix} \ \middle|\ R \in \mathrm{SO}(3), \ \mathbf{x} \in \mathbb{R}^3 \right\} \,.
	\end{equation}
	The Lie algebra $\mathfrak{se}(3)$ is isomorphic to $\mathbb{R}^6$. We identify any twist $\xi = (\boldsymbol{\omega}, \mathbf{v}) \in \mathbb{R}^3 \times \mathbb{R}^3$ with the matrix:
	\begin{equation}
		\hat{\xi} = \begin{pmatrix} \hat{\boldsymbol{\omega}} & \mathbf{v} \\ \mathbf{0}^T & 0 \end{pmatrix} \,.
	\end{equation}
	The dual space $\mathfrak{se}(3)^*$ is isomorphic to $\mathbb{R}^6$. For a wrench $\mu = (\mathbf{M}, \mathbf{p}) \in \mathfrak{se}(3)^*$, the pairing is given by $\langle \mu, \xi \rangle = \mathbf{M} \cdot \boldsymbol{\omega} + \mathbf{p} \cdot \mathbf{v}$.
	
	\paragraph{Adjoint Operators}
	Let $g = (R, \mathbf{x}) \in \mathrm{SE}(3)$ and $\eta = (\boldsymbol{\omega}, \mathbf{v}) \in \mathfrak{se}(3)$.
	\begin{itemize}
		\item \textbf{Conjugation ($\operatorname{AD}$):} For another group element $h = (Q, \mathbf{y}) \in \mathrm{SE}(3)$, conjugation yields:
		\begin{equation}
			\operatorname{AD}_g h = g h g^{-1} = \left( R Q R^T, \, R\mathbf{y} + (\mathbb{I}_3 - R Q R^T)\mathbf{x} \right) \,.
			\label{eq:se3_AD}
		\end{equation}
		\item \textbf{Adjoint Map ($\operatorname{Ad}$):} Differentiating \eqref{eq:se3_AD} along a curve $h(t) = (Q(t), \mathbf{y}(t))$ passing through the identity with velocity $\eta = (\boldsymbol{\omega}, \mathbf{v})$ at $t=0$ yields:
		\begin{equation}
			\operatorname{Ad}_g \eta = \left( R\boldsymbol{\omega}, \, R\mathbf{v} + \mathbf{x} \times (R\boldsymbol{\omega}) \right) \,,
		\end{equation}
		where we use the identity $R \hat{\boldsymbol{\omega}} R^T = \widehat{R\boldsymbol{\omega}}$ and $-(R\boldsymbol{\omega}) \times \mathbf{x} = \mathbf{x} \times (R\boldsymbol{\omega})$.
		\item \textbf{Infinitesimal Adjoint ($\operatorname{ad}$):} For two elements $\xi = (\boldsymbol{\omega}_1, \mathbf{v}_1)$ and $\eta = (\boldsymbol{\omega}_2, \mathbf{v}_2)$, the Lie bracket is:
		\begin{equation}
			\operatorname{ad}_{\xi} \eta = \left( \boldsymbol{\omega}_1 \times \boldsymbol{\omega}_2, \, \boldsymbol{\omega}_1 \times \mathbf{v}_2 + \mathbf{v}_1 \times \boldsymbol{\omega}_2 \right) \,.
		\end{equation}
	\end{itemize}
	
	\paragraph{Coadjoint Operators}
	Let $\mu = (\mathbf{M}, \mathbf{p}) \in \mathfrak{se}(3)^*$ and $\xi = (\boldsymbol{\omega}, \mathbf{v}) \in \mathfrak{se}(3)$.
	\begin{itemize}
		\item \textbf{Dual Adjoint Operator ($\operatorname{Ad}^*$):} Resolving the pairing equation gives:
		\begin{equation}
			\operatorname{Ad}^*_g \mu = \left( R^T \mathbf{M} - R^T\mathbf{x} \times R^T\mathbf{p}, \, R^T \mathbf{p} \right) \,,
		\end{equation}
		using the rotation-invariance of cross products $R^T(\mathbf{p} \times \mathbf{x}) = R^T\mathbf{p} \times R^T\mathbf{x}$.
		\item \textbf{Infinitesimal Coadjoint ($\operatorname{ad}^*$):} Splitting the dual pairing across angular and linear coordinates yields:
		\begin{equation}
			\hspace{-1cm} 
			\operatorname{ad}^*_{\xi} \mu = \left( \mathbf{M} \times \boldsymbol{\omega} - \mathbf{v} \times \mathbf{p}, \, \mathbf{p} \times \boldsymbol{\omega} \right) = \left( -\boldsymbol{\omega} \times \mathbf{M} - \mathbf{v} \times \mathbf{p}, \, -\boldsymbol{\omega} \times \mathbf{p} \right) \,.
		\end{equation}
	\end{itemize}
	
	\section{Mathematical Foundations of Lie-Poisson Systems} 
	\label{app:math_Lie_Poisson}
	
	\paragraph{Dynamics on Lie Groups} Let the configuration space of a mechanical system be modeled by a Lie group $G$. The trajectories of the system under a Lagrangian $L(g, \dot{g})$ are determined by Hamilton's variational principle. Provided that the fiber derivative (the mapping from velocities to momenta) $\dot{g} \mapsto p(g, \dot{g}) = \frac{\partial L}{\partial \dot{g}}(g, \dot{g})$ is a diffeomorphism for all $g \in G$, the system can be represented equivalently in terms of canonical Hamiltonian dynamics. This yields the Hamiltonian function $H(g,p) = \left\langle p, \dot{g}(g,p) \right\rangle - L(g, \dot{g}(g,p))$ defined on the cotangent bundle $T^*G$. 
	
	\paragraph{Symmetries on Lie Groups} We assume the Lagrangian exhibits left-invariance under the action of the group on itself, such that $L(hg, T_g L_h \dot{g}) = L(g, \dot{g})$ for every $h \in G$. This invariance immediately translates to the Hamiltonian, satisfying $H(hg, hp) = H(g, p)$ for all $h \in G$. Consequently, $H$ is entirely characterized by its values evaluated at the group identity $g=e$, allowing us to express the Hamiltonian as $H(g,p) = H(e, g^{-1} p) =: h(g^{-1} p)$. 
	Here, the quantity $\mu = g^{-1} p := T^*_e L_g(p)$ defines the reduced (or body-fixed) momentum, which resides in the dual space $\mathfrak{g}^*$ of the Lie algebra $\mathfrak{g}$ associated with $G$. The mapping $h: \mathfrak{g}^* \rightarrow \mathbb{R}$ represents the reduced Hamiltonian function. 
	Given this symmetry, the evolution of the full phase-space variables $(g(t), p(t))$ can be projected onto the lower-dimensional dual algebra space $\mathfrak{g}^*$ solely in terms of the reduced momentum $\mu(t) = g(t)^{-1} p(t)$. This reduction yields the Lie-Poisson equations for the evolution of $\mu(t)$.
	
	\paragraph{The Lie-Poisson Equations} Formulated in coordinate-free notation, the Lie-Poisson equations for a left-invariant system read:
	\begin{equation}\label{LP_intrinsic_left_inv} 
		\dot{\mu} = \operatorname{ad}^*_{ \frac{\partial h}{\partial \mu} } \mu \,,
	\end{equation} 
	where $\operatorname{ad}^*_{\xi}: \mathfrak{g}^* \rightarrow \mathfrak{g}^*$ denotes the infinitesimal coadjoint operator. It is defined through the pairing with the adjoint operator $\operatorname{ad}_{\xi}\eta = [\xi, \eta]$ via $\langle\operatorname{ad}^*_{\xi} \mu, \eta\rangle = \langle \mu, \operatorname{ad}_{\xi} \eta \rangle$ for any $\xi, \eta \in \mathfrak{g}$ and $\mu \in \mathfrak{g}^*$. The temporal evolution of an arbitrary smooth function $f(\mu)$ along the trajectories of \eqref{LP_intrinsic_left_inv} is governed by $\dot{f} = \{f, h\}$, where the underlying Poisson structure is determined by the minus Lie-Poisson bracket:
	\begin{equation} 
		\left\{ f, h \right\} = - \left\langle \mu, \left[ \frac{\partial f}{\partial \mu} , \frac{\partial h}{\partial \mu} \right] \right\rangle \,.
		\label{LP_bracket} 
	\end{equation} 
	While this bracket is non-canonical, it inherits its properties directly from the canonical Poisson bracket $\{\cdot, \cdot\}_{\rm can}$ on $T^*G$. Specifically, they are linked by:
	\[
	\left\{ f \circ \pi , h \circ \pi \right\}_{\rm can} = \left\{ f, h \right\} \circ \pi
	\]
	where $\pi: T^*G \rightarrow \mathfrak{g}^*$ is the projection map $\pi(g,p) = g^{-1} p = \mu$ that maps the full momentum to its body-fixed representation. Thus, $\pi$ acts as a Poisson map connecting $\{\cdot, \cdot\}_{\rm can}$ and $\{\cdot, \cdot\}$. This reduction scheme, which is a classic demonstration of Poisson reduction, explains the source of the Lie-Poisson brackets across diverse physical frameworks. 
	
	For systems that exhibit right invariance instead of left invariance, the reduced momentum is defined as $\mu = p g^{-1}$, which introduces a sign change on the right-hand sides of both \eqref{LP_eqs} and \eqref{LP_bracket} (leading to a plus Lie-Poisson bracket and a minus sign in the dynamical equation).
	
	\paragraph{The Flow of Lie-Poisson Systems} 
	Let $\operatorname{Ad}_g: \mathfrak{g} \rightarrow \mathfrak{g}$ denote the adjoint action of $G$ on its Lie algebra, defined by $\operatorname{Ad}_g \xi := \left. \frac{d}{d\varepsilon}\right|_{\varepsilon=0} g c_{\varepsilon} g^{-1}$, where $c_{\varepsilon} \in G$ is a curve satisfying $c_0 = e$ and $\dot{c}_0 = \xi$. Let $\operatorname{Ad}^*_g: \mathfrak{g}^* \rightarrow \mathfrak{g}^*$ represent the dual of the adjoint operator $\operatorname{Ad}_g$, defined via the pairing $\langle \operatorname{Ad}^*_g \mu , \xi \rangle = \langle \mu , \operatorname{Ad}_g\xi \rangle$. To ensure a proper left group action on the dual space, the left coadjoint action is defined by $(g, \mu) \mapsto \operatorname{Ad}^*_{g^{-1}} \mu$. For any curve $g(t) \in G$, and for any constant $p_0 \in \mathfrak{g}^*$, the time derivative of the dual adjoint operator satisfies:
	\[
	\frac{d}{dt} \operatorname{Ad}^*_{g(t)} p_0 = \operatorname{ad}^*_{ g(t)^{-1} \dot{g}(t)} \operatorname{Ad}^*_{g(t)} p_0\,.   
	\]
	A direct consequence of this identity is that a solution with conserved spatial momentum $p_0$ of the Lie-Poisson system \eqref{LP_intrinsic_left_inv} can be explicitly parameterized as:
	\[
	\phi_t(p_0) = \operatorname{Ad}^*_{g(t)} p_0\, ,  \quad\text{where}\quad g(t)^{-1} \dot{g}(t) = \frac{\partial h}{\partial \mu}(t) \,.
	\]
	This expression guarantees that the dynamical flow preserves the coadjoint orbits $\mathcal{O} = \{ \operatorname{Ad}^*_{g} p_0 \mid g \in G\} \subset \mathfrak{g}^*$ at all times. 
	Conversely, suppose $(\mu(t),g(t))$ is a solution of the extended Lie-Poisson system. Then, the spatial momentum $p_0 := \operatorname{Ad}^*_{g(t)^{-1}} \mu(t)$ is conserved for all times, satisfying:
	\begin{equation}
		\frac{d}{dt} \operatorname{Ad}^*_{g(t)^{-1}} \mu(t) = 0 \, . 
		\label{spatial_mom_conservation}
	\end{equation}
	This conservation of $p_0$ is a direct consequence of Noether's theorem associated with the left-invariance of the system. In practice, the coadjoint-dual relation $p_0 = \operatorname{Ad}^*_{g^{-1}} \mu$ maps the body-fixed momentum $\mu$ to the conserved spatial momentum $p_0$, while its inverse $\mu = \operatorname{Ad}^*_g p_0$ reconstructs the body-fixed momentum from this conserved quantity.
	
	
	\section{Reference Formulas for Specific Lie Groups}
	\label{subsec:explicit_formulas}
	
	To facilitate practical implementation, we compile the explicit forms of the conserved spatial momentum $p_0 = p_0(g, \mu)$ and its corresponding inversion $\mu = \mu(g, p_0)$ for three standard physical Lie groups.
	
	\subsection{The Special Orthogonal Group $\mathrm{SO}(3)$}
	\label{subsec:so3_actions_explicit}
	
	For rotational dynamics, let $g = R \in \mathrm{SO}(3)$ be the rotation matrix satisfying $R^T R = \mathbb{I}_3$ and $\det(R) = 1$. The body momentum is represented by $\boldsymbol{\mu} \in \mathbb{R}^3$ (body angular momentum), and the conserved spatial momentum is $\boldsymbol{p}_0 \in \mathbb{R}^3$ (spatial angular momentum).
	\begin{itemize}
		\item \textbf{Conserved Spatial Momentum $\boldsymbol{p}_0(R, \boldsymbol{\mu})$:}
		\begin{equation}
			\boldsymbol{p}_0 = R \boldsymbol{\mu}
		\end{equation}
		\item \textbf{Inversion to Body Momentum $\boldsymbol{\mu}(R, \boldsymbol{p}_0)$:}
		\begin{equation}
			\boldsymbol{\mu} = R^T \boldsymbol{p}_0
		\end{equation}
	\end{itemize}
	
	\subsection{The Special Euclidean Group $\mathrm{SE}(2)$}
	\label{subsec:se2_actions_explicit}
	
	For planar rigid motions, let $g = (R_\theta, \mathbf{x}) \in \mathrm{SE}(2)$, where $R_\theta \in \mathrm{SO}(2)$ is the rotation matrix of angle $\theta$, and $\mathbf{x} \in \mathbb{R}^2$ is the translation vector. The body momentum is $\mu = (J, \mathbf{P}) \in \mathbb{R} \times \mathbb{R}^2$ (body angular and linear momentum), and the conserved spatial momentum is $p_0 = (J_0, \mathbf{P}_0) \in \mathbb{R} \times \mathbb{R}^2$.
	\begin{itemize}
		\item \textbf{Conserved Spatial Momentum $p_0(g, \mu)$:}
		\begin{align}
			\mathbf{P}_0 &= R_\theta \mathbf{P} \\
			J_0 &= J + (\mathbf{x} \times \mathbf{P}_0 ) \cdot \mathbf{e}_z\, , 
		\end{align}
		where $(\mathbf{x} \times \mathbf{P}_0 ) \cdot \mathbf{e}_z:= x P_{0,y} - y P_{0,x}$ represents the standard 2D cross product.
		\item \textbf{Inversion to Body Momentum $\mu(g, p_0)$:}
		\begin{align}
			\mathbf{P} &= R_\theta^T \mathbf{P}_0 \\
			J &= J_0 - (\mathbf{x} \times \mathbf{P}_0)\cdot \mathbf{e}_z \, . 
		\end{align}
	\end{itemize}
	
	\subsection{The Special Euclidean Group $\mathrm{SE}(3)$}
	\label{subsec:se3_actions_explicit}
	
	For full 3D rigid motions, let $g = (R, \mathbf{x}) \in \mathrm{SE}(3)$, where $R \in \mathrm{SO}(3)$ is the rotation matrix and $\mathbf{x} \in \mathbb{R}^3$ is the translation vector. The body momentum is $\mu = (\mathbf{J}, \mathbf{P}) \in \mathbb{R}^3 \times \mathbb{R}^3$ (body angular and linear momentum), and the conserved spatial momentum is $p_0 = (\mathbf{J}_0, \mathbf{P}_0) \in \mathbb{R}^3 \times \mathbb{R}^3$.
	\begin{itemize}
		\item \textbf{Conserved Spatial Momentum $p_0(g, \mu)$:}
		\begin{align}
			\mathbf{P}_0 &= R \mathbf{P} \\
			\mathbf{J}_0 &= R \mathbf{J} + \mathbf{x} \times \mathbf{P}_0\, . 
		\end{align}
		\item \textbf{Inversion to Body Momentum $\mu(g, p_0)$:}
		\begin{align}
			\mathbf{P} &= R^T \mathbf{P}_0 \\
			\mathbf{J} &= R^T ( \mathbf{J}_0 - \mathbf{x} \times \mathbf{P}_0 )\, . 
		\end{align}
	\end{itemize}
\end{document}